\documentclass{article}

\usepackage{iclr2027_conference,times}

\iclrfinalcopy   

\usepackage[utf8]{inputenc} 
\usepackage[T1]{fontenc}    
\usepackage[table]{xcolor}
\definecolor{dark-blue}{RGB}{0,0,191}
\usepackage[hidelinks,colorlinks=true,citecolor=dark-blue,linkcolor=dark-blue,urlcolor=dark-blue, backref=page]{hyperref}
\usepackage{url}            
\usepackage{booktabs}       
\usepackage{amsfonts}       
\usepackage{nicefrac}       
\usepackage{microtype}      
\usepackage{xcolor}         

\title{Pessimistic Minimax Learning for Public-Private Information Games under Unilateral Coverage}

\usepackage{pifont}

\usepackage{amsmath}
\usepackage{amssymb}
\usepackage{mathtools}
\usepackage{mathrsfs}
\usepackage{amsthm}
\usepackage{color}
\mathtoolsset{showonlyrefs}

\usepackage{algorithm}
\usepackage{algpseudocode}

\usepackage{enumitem}

\usepackage[capitalize,noabbrev]{cleveref}

\usepackage{xurl} 
\usepackage{listings}
\usepackage{xcolor}

\usepackage{pifont}
\theoremstyle{plain}
\newtheorem{theorem}{Theorem}[section]
\newtheorem{proposition}[theorem]{Proposition}
\newtheorem{lemma}[theorem]{Lemma}

\theoremstyle{definition}

\newtheorem{assumption}[theorem]{Assumption}
\theoremstyle{remark}
\newtheorem{remark}[theorem]{Remark}

\newcommand{\V}{\mathbb{V}}

\DeclareMathOperator*{\argmin}{argmin}
\DeclareMathOperator*{\argmax}{argmax}

\newcommand{\field}[1]{\mathbb{#1}}

\newcommand{\E}{\field{E}}
\renewcommand{\P}{\field{P}}

\newcommand{\KL}{{\text{\rm KL}}}

\newcommand{\defeq}{\stackrel{\rm def}{=}}

\newcommand{\wh}{\widehat}

\newcommand{\Xcal}{\mathcal{X}}
\newcommand{\Acal}{\mathcal{A}}

\newcommand{\Bcal}{\mathcal{B}}
\newcommand{\Ccal}{\mathcal{C}}
\newcommand{\Dcal}{\mathcal{D}}
\newcommand{\Ecal}{\mathcal{E}}
\newcommand{\Fcal}{\mathcal{F}}
\newcommand{\Gcal}{\mathcal{G}}

\newcommand{\Lcal}{\mathcal{L}}

\newcommand{\Scal}{{\mathcal{S}}}

\newcommand{\Cscr}{\mathscr{C}}

\newcommand{\EE}{\mathbb{E}} 
\newcommand{\PP}{\mathbb{P}} 
\newcommand{\explain}[1]{\tag*{(#1)}}
\newcommand{\explaind}[2]{\makebox[0.85\textwidth]{$\displaystyle#1$\hfill(#2)}}
\usepackage[textsize=tiny]{todonotes}

\usepackage[normalem]{ulem}
\definecolor{claude-brown}{RGB}{133,45,15}
\definecolor{claude-green}{RGB}{0,100,0}

\author{%
 Shuze Daniel Liu \\
 Massachusetts Institute of Technology\\
  Purdue University\\
 \texttt{shuzel@mit.edu} \\
  \And
  Claire Chen\\
California Institute of Technology\\
  \texttt{clairechen@caltech.edu} \\
    \And
      Jiuqi Wang \\
  University of Virginia\\
  \texttt{jiuqi@email.virginia.edu} \\
  \And
    David Simchi-Levi \\
  Massachusetts Institute of Technology\\
  Purdue University\\
  \texttt{dslevi@mit.edu} \\
}

\begin{document}

\maketitle
\lhead{}   

\begin{abstract}
We study offline learning in two-player zero-sum contextual games with public and private information, motivated by strategic settings such as auctions and negotiations with private valuations. We introduce unilateral prescriptive concentrability and show that asymmetric information can change offline coverage through its effect on equilibrium behavior. For finite state-action spaces, we develop a pessimistic algorithm with an $\tilde{O}(1/\sqrt{n})$ exploitability rate, matching the standard sample-size dependence for fully observed minimax games. We further develop a pessimistic policy mirror descent framework, PPA-PMD, for general function approximation and obtain a unified $\tilde{O}(1/\sqrt{n} + 1/\sqrt{T})$ exploitability rate with no-regret actor updates. Together, these results provide the first theoretical framework for offline equilibrium learning under public-private information constraints.
\end{abstract}

\section{Introduction}

Offline reinforcement learning (RL) has emerged as a critical paradigm for developing autonomous agents in domains where real-time exploration is either prohibitively expensive, unethical, or safety-critical \citep{fujimoto2019off, levine2020offline}. By leveraging large-scale, pre-collected datasets, offline RL aims to recover optimal policies without further environmental interaction. However, the transition from logged data to optimal behavior is fundamentally challenged by distribution shift—the mismatch between the behavior policy that generated the data and the target policy being learned \citep{fujimoto2019off, levine2020offline, kumar2020conservative}. This shift often leads to extrapolation errors, where the learner overestimates the value of actions poorly represented in the dataset \citep{kumar2020conservative, wu2019behavior}. To address this, a significant body of literature has established the principle of pessimism as a provably efficient mechanism for handling distribution shift, typically through the use of lower confidence bounds (LCB) or conservative value estimates \citep{jin2021pessimism, xie2021bellman, rashidinejad2021bridging}.

While single-agent offline RL is well-studied, many high-stakes applications are best modeled as two-player zero-sum games \citep{shapley1953stochastic, littman1994markov}. In these competitive environments, the challenge of distribution shift is exacerbated by the need to identify a global minimax solution. Without suitable coverage, the equilibrium value can be unidentifiable, as the opponent’s best response can pivot toward unknown out-of-distribution regions. Recent advancements have addressed this by extending the principle of pessimism to two-player zero-sum games, utilizing lower confidence bounds to ensure that the learned policies are at least as good as the behavior policy across the supported regions \citep{zhong2022pessimistic, zhang2023offline}. However, these works typically assume a fully observed state space where both players possess a perfect view of the environment. In practice, strategic interactions—such as auctions or negotiations—often involve asymmetric observations where players hold private information hidden from their rivals \citep{vickrey1961counterspeculation, myerson1981optimal}.

Strategic interactions with hidden information are formally modeled as Bayesian Games \citep{harsanyi1967games} or games with Incomplete Information \citep{aumann1995repeated, kamien1990value}. In these settings, players observe a shared public context alongside their own private information, and they must act based on "prescriptions"—policies that condition on their local observations. This structure is a cornerstone of modern Operations Research (OR), particularly in the design of auctions \citep{milgrom1982theory, roughgarden2010algorithmic} and automated negotiation agents \citep{kraus2001strategic, jennings2001automated}. Despite its prevalence, the problem of learning optimal strategies for such games from offline data remains unexplored. The unique informational constraints of public-private games introduce a novel challenge: the learner must not only handle distribution shift but also manage the public belief—the joint distribution of private information given the public context, which players use to estimate the hidden state of their opponent.

To the best of our knowledge, this work provides the first theoretical analysis of offline two-player zero-sum contextual games under a public-private information structure. Our model builds on the classical contextual bandit setting \citep{dudik2011doubly, swaminathan2015batch} and captures applications such as auctions and negotiations with private valuations or costs \citep{jin2024sample}.
Information constraints change how players value actions under beliefs over hidden types, and therefore change which policy is an equilibrium. The coverage a dataset must provide is determined by the equilibrium the information structure induces, not by the size of the opponent's deviation class: at a common reference policy the prescriptive and fully observed coefficients coincide. We formalize this via \textit{unilateral prescriptive concentrability} and construct a family in which the induced equilibrium shift changes the coefficient of \citet{cui2022offline} by an arbitrarily large factor, in either direction.
Our contributions are summarized as follows:
\begin{itemize}[leftmargin=*]
    \item \textbf{Theoretical Foundations of Public-Private Coverage:} We introduce \emph{unilateral prescriptive concentrability} and show that public-private information can substantially change offline coverage by altering equilibrium behavior.
    \item \textbf{Optimal Learning in Finite Settings:} We propose \texttt{PPA-Exact}, a pessimistic algorithm for finite state-action spaces. We prove that it achieves a statistical exploitability rate of $\tilde{\mathcal{O}}(1/\sqrt{n})$, matching the sample-size dependence for fully observed minimax games~\citep{cui2022offline, zhong2022pessimistic}.
    \item \textbf{Scalable Optimization and Unified Complexity:} We propose \texttt{PPA-PMD}, an iterative pessimistic policy mirror descent framework for general function approximation that respects the game's inherent information constraints. We establish a unified exploitability gap of $\tilde{\mathcal{O}}(1/\sqrt{n} + 1/\sqrt{T})$, characterizing how statistical estimation from $n$ samples and optimization regret over $T$ iterations jointly determine the final performance in asymmetric strategic environments.
\end{itemize}

\section{Preliminaries}
\label{sec:preliminaries}
We study a two-player zero-sum contextual game with a public-private information structure. The environment is defined by the tuple $\mathcal{M} = \langle \mathcal{C}, \mathcal{X}_1, \mathcal{X}_2, \mathcal{A}_1, \mathcal{A}_2, r, \rho \rangle$, where the public context space $\mathcal{C}$, the private type spaces $\mathcal{X}_1, \mathcal{X}_2$, and the action spaces $\mathcal{A}_1, \mathcal{A}_2$ are finite. In each episode, the environment generates a latent state $s \in \mathcal{S}$ drawn from a fixed, unknown prior distribution $\rho \in \Delta(\mathcal{S})$. The latent state factors into a public context and two private types:
$s = (c, x^1, x^2) \in \mathcal{S} \defeq \mathcal{C} \times \mathcal{X}_1 \times \mathcal{X}_2,$
where $c \in \mathcal{C}$ is a shared public context, and $x^i \in \mathcal{X}_i$ is the private context belonging exclusively to player $i \in \{1, 2\}$. Upon the realization of the state, players choose actions $a^1 \in \mathcal{A}_1$ and $a^2 \in \mathcal{A}_2$. Let $\mathcal{A} \defeq \mathcal{A}_1 \times \mathcal{A}_2$ denote the joint action space. Player 1 receives a reward given by the function $r: \mathcal{S} \times \mathcal{A} \to [-1, 1]$. Because the game is zero-sum, Player 2 receives reward $-r(s, a)$.

\subsection{Information Constraints and Prescriptive Policies}

During execution, the latent state $s$ is partially observed. Player 1 observes $o^1 = (c, x^1)$ and Player 2 observes $o^2 = (c, x^2)$. Consequently, the players cannot execute standard Markov policies $\pi(a \mid s)$. Instead, they are restricted to \textit{prescriptive policies} that condition exclusively on their local observations. We define the feasible policy spaces $\Pi_1$ and $\Pi_2$ as the set of all such mappings:
$$\textstyle \Pi_i \defeq \left\{ \pi_i : \mathcal{C} \times \mathcal{X}_i \to \Delta(\mathcal{A}_i) \right\}, \quad \text{for } i \in \{1, 2\}.$$
Any deployed strategy must reside within these asymmetric information constraints.

\subsection{The Public Belief and Contextual Value}

Because neither player has full access to the latent state, strategic evaluation requires a common informational baseline. We define the \textbf{public belief} as the conditional probability distribution over the private types given the shared public context:
$$b(x^1, x^2 \mid c) \defeq \mathbb{P}_{s \sim \rho}\left(X^1=x^1, X^2=x^2 \mid C=c\right).$$
For any fixed public context $c \in \mathcal{C}$, the interaction reduces to a Bayesian game. If the players commit to prescriptive policies $(\pi_1, \pi_2) \in \Pi_1 \times \Pi_2$, the expected reward conditioned on the public context is evaluated by marginalizing over the public belief:
$$ r_c(\pi_1, \pi_2) \defeq \sum_{x^1 \in \mathcal{X}_1} \sum_{x^2 \in \mathcal{X}_2} b(x^1, x^2 \mid c) \sum_{a^1 \in \mathcal{A}_1} \sum_{a^2 \in \mathcal{A}_2} \pi_1(a^1 \mid c, x^1) \pi_2(a^2 \mid c, x^2) r(c, x^1, x^2, a^1, a^2).$$
The global expected value of the game for Player 1 is the expectation of these contextual values over the marginal distribution of the public context, $\rho_{\mathcal{C}}(c)$:
$$V(\pi_1, \pi_2) \defeq \mathbb{E}_{c \sim \rho_{\mathcal{C}}} \left[ r_c(\pi_1, \pi_2) \right].$$
\subsection{Offline Data and the CTDE Paradigm}

We operate under the paradigm of Centralized Training with Decentralized Execution (CTDE) \citep{lowe2017multi, foerster2018counterfactual}. We assume the offline learner (centralized trainer) has access to a \textit{full-state logged} dataset containing $n$ i.i.d.\ interactions generated by an unknown behavior policy pair $(\mu_1, \mu_2)$:
$$\mathcal{D} = \left\{ \left(c^{(k)}, x^{1(k)}, x^{2(k)}, a^{1(k)}, a^{2(k)}, r^{(k)}\right) \right\}_{k=1}^n.$$
This full-state logging assumption is standard in the multi-agent offline RL literature \citep{lyu2021contrasting, meng2023offline, pan2022plan, tseng2022offline, formanek2023off}. It also models platform-mediated markets, such as auctions, procurement, and logistics, where centralized historical logs are available for training while deployed agents act using only local information \citep{lowe2017multi}.

\subsection{Minimax Objective and the Exploitability Gap}
The canonical objective of the zero-sum game is the identification of the minimax value:
\begin{equation}
    V^\star \defeq \max_{\pi_1 \in \Pi_1} \min_{\pi_2 \in \Pi_2} V(\pi_1, \pi_2).
\end{equation}
The existence of this unique value and the well-posedness of the game under asymmetric information constraints are established in Section \ref{sec: existence of contextual value}.

Given the offline dataset $\mathcal{D}$, the learner's goal is to output a decentralized policy $\widehat{\pi}_1 \in \Pi_1$ for Player 1 that minimizes exploitability. We evaluate the performance of $\widehat{\pi}_1$ using the \textbf{exploitability gap}:
$$\textstyle Gap(\widehat{\pi}_1) \defeq V^\star - \min_{\pi_2 \in \Pi_2} V(\widehat{\pi}_1, \pi_2).$$
This measures the maximum exploitability of the learner's policy by an opponent equipped with perfect knowledge of $\widehat{\pi}_1$ but restricted to the feasible class $\Pi_2$.

\section{Information Asymmetry and Coverage Analysis}
\label{sec:coverage_analysis}

In this section, we introduce unilateral prescriptive concentrability and compare it with its fully observed counterpart. We show that asymmetric information changes equilibrium behavior and therefore the state-action regions that require coverage; depending on the logged behavior, the resulting coverage requirement can be smaller or larger.

\subsection{Existence of Contextual Value}
\label{sec: existence of contextual value}
 Because the game lacks temporal transitions, it decomposes into a collection of independent Bayesian games—one for each public context $c \in \mathcal{C}$. For a fixed public context $c$ and belief $b(\cdot, \cdot \mid c)$, the expected reward $r_c(\pi_{1}, \pi_{2})$ is bilinear in the prescriptive policies $\pi_i \in \Pi_i(c) \defeq \Delta(\Acal_i)^{\Xcal_i}$.

\begin{lemma}[Existence of Contextual Value \citep{kartik2021upper}]
For any public context $c \in \mathcal{C}$, there exists a unique value $V^\star(c)$ and a NE pair $(\pi_{1, \star}, \pi_{2, \star}) \in \Pi_1(c) \times \Pi_2(c)$ such that:
$$V^\star(c) = \max_{\pi_{1} \in \Pi_1(c)} \min_{\pi_{2} \in \Pi_2(c)} r_c(\pi_{1}, \pi_{2}) = \min_{\pi_{2} \in \Pi_2(c)} \max_{\pi_{1} \in \Pi_1(c)} r_c(\pi_{1}, \pi_{2}).$$
\end{lemma}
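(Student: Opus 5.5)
The plan is to reduce the lemma to von Neumann's minimax theorem (or Sion's theorem) on products of simplices. Fix a public context $c$. The strategy set $\Pi_1(c)=\Delta(\Acal_1)^{\Xcal_1}$ is a finite Cartesian product of probability simplices. It is therefore a nonempty, convex, compact subset of the finite-dimensional space $\mathbb{R}^{\Xcal_1\times\Acal_1}$, and the same holds for $\Pi_2(c)$. From the definition,
\[
r_c(\pi_1,\pi_2)=\sum_{x^1,a^1}\sum_{x^2,a^2} \pi_1(a^1\mid c,x^1)\,M_c\bigl((x^1,a^1),(x^2,a^2)\bigr)\,\pi_2(a^2\mid c,x^2),
\]
where $M_c((x^1,a^1),(x^2,a^2)) \defeq b(x^1,x^2\mid c)\,r(c,x^1,x^2,a^1,a^2)$. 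So $r_c$ is a bilinear form $\pi_1^\top M_c\pi_2$ in the vectorized prescriptions. It is continuous, linear (hence concave) in $\pi_1$ for fixed $\pi_2$, and linear (hence convex) in $\pi_2$ for fixed $\pi_1$.

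Sion's minimax theorem (or von Neumann's bilinear version on compact convex sets) then gives
\[
\max_{\pi_1}\min_{\pi_2} r_c=\min_{\pi_2}\max_{\pi_1} r_c .
\]
The max and min are attained because $r_c$ is continuous on compact sets. By Berge's maximum theorem, $\pi_1\mapsto\min_{\pi_2}r_c(\pi_1,\pi_2)$ is continuous, so the outer max is attained as well. Call this common number $V^\star(c)$. Uniqueness of the value is immediate, since both sides are uniquely defined real numbers.

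To get the NE pair, I would take $\pi_{1,\star}\in\argmax_{\pi_1}\min_{\pi_2}r_c$ and $\pi_{2,\star}\in\argmin_{\pi_2}\max_{\pi_1}r_c$, and check the saddle inequalities via the standard chain
\[
V^\star(c)\le \min_{\pi_2} r_c(\pi_{1,\star},\pi_2)\le r_c(\pi_{1,\star},\pi_{2,\star})\le \max_{\pi_1} r_c(\pi_1,\pi_{2,\star})\le V^\star(c).
\]
All of these are equalities, so neither player gains from a unilateral deviation.

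I do not expect a real obstacle here; the argument is the classical finite-game minimax theorem. The one point that needs care is that prescriptions are restricted to depend only on $(c,x^i)$. The key observation is that this restriction is exactly what makes the strategy set a product of simplices indexed by $x^i$, and such a product is still convex and compact. The public belief enters only through the fixed weights in $M_c$. Equivalently, one can view the Bayesian game as a normal-form game whose pure strategies are deterministic maps $\Xcal_i\to\Acal_i$. By Kuhn-type equivalence, behavioral prescriptions realize every mixed strategy's payoff, because $r_c$ is multilinear across the types $x^i$. That route gives the same conclusion if one prefers to invoke von Neumann's theorem on matrix games directly.
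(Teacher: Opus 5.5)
Your proof is correct and follows essentially the same route as the paper. The paper attributes the lemma to \citet{kartik2021upper} and justifies it only by noting that $r_c$ is bilinear in the prescriptions on the compact convex sets $\Pi_i(c)=\Delta(\Acal_i)^{\Xcal_i}$; it invokes Sion's minimax theorem for exactly this structure (for the empirical objective $L_c$) in the proof of Theorem~\ref{thm:iterative_gap}, and your attainment argument and saddle-point chain for the NE pair supply the standard details the paper leaves implicit.
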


Since the public context is observed by both players, policies can be chosen independently across contexts, so $\Pi_i = \prod_{c \in \Ccal} \Pi_i(c)$ and $V^\star = \EE_{c \sim \rho_\Ccal}[V^\star(c)]$. We write $Gap_c(\pi_1) \defeq V^\star(c) - \min_{\pi_2 \in \Pi_2(c)} r_c(\pi_1, \pi_2)$ for the contextual exploitability gap, so that $Gap(\pi_1) = \EE_{c \sim \rho_\Ccal}[Gap_c(\pi_1)]$.

\vspace{-0.25em}
\subsection{Unilateral Prescriptive Coverage}
\vspace{-0.25em}
The feasibility of offline learning depends on the extent to which the logged data $\mathcal{D}$ covers the state-action regions relevant to the optimal strategy. In two-player games, a learner's policy $\widehat{\pi}_1$ must be robust against any possible adversarial response within the feasible policy class \citep{cui2022offline}. We formalize this via the \textit{unilateral prescriptive concentrability} coefficient.

\begin{assumption}[Unilateral Prescriptive Coverage]
\label{ass:unilateral_coverage}

Write $\Ecal_{\mathrm{Pres}} \defeq \mathrm{NE}(\Pi_1, \Pi_2)$ for the set of Nash equilibria of the prescriptive game and $\Ecal_{\mathrm{Full}} \defeq \mathrm{NE}(\Pi_1^{\mathrm{full}}, \Pi_2^{\mathrm{full}})$ for those of the fully observed game. We define the \emph{unilateral prescriptive concentrability} $C^\star_{\mathrm{Pres}}$ by minimising over $\pi_\star = (\pi_{1,\star}, \pi_{2,\star}) \in \Ecal_{\mathrm{Pres}}$:

\begin{align}
C^\star_{\text{Pres}} \defeq \min_{\pi_\star \in \Ecal_{\mathrm{Pres}}} \max \left\{ \sup_{\nu \in \Pi_2}\max_{s \in \Scal, a \in \Acal} \frac{d^{\pi_{1, \star}, \nu}(s, a)}{d^\rho(s, a)}, \sup_{\mu \in \Pi_1}\max_{s \in \Scal, a \in \Acal} \frac{d^{\mu, \pi_{2, \star}}(s, a)}{d^\rho(s, a)} \right\}, \label{eq:prescriptive_c}
\end{align}

where $d^{\pi_1, \pi_2}(s, a) = \rho(c, x^1, x^2) \pi_1(a^1 \mid c, x^1) \pi_2(a^2 \mid c, x^2)$ is the density induced by information-constrained strategies, and $d^\rho(s, a)$ is the latent state-action density in the logged dataset $\Dcal$. We assume $C^\star_{\text{Pres}} < \infty$.
\end{assumption}

We denote $d^\rho_{\min} = \min_{(s,a) \in \text{supp}(d^\rho)} d^\rho(s, a)$ as the minimum density of the supported state-action pairs in the behavior distribution.
\vspace{-0.25em}
\subsection{Information Structure and Offline Coverage}
\vspace{-0.25em}
A central question in multi-agent offline RL is how information constraints affect the ``data-hardness'' of the game. To quantify this effect, we compare our prescriptive concentrability to the standard \textit{full-information concentrability} used in fully observed interactions \citep{cui2022offline}.

In a fully observed baseline, players are not restricted by private information and can choose any policy $\pi_i \in \Pi_i^{\text{full}}$ that conditions on the entire latent state $s \in \mathcal{S}$. Following \citet{cui2022offline}, the coefficient is defined as:

\begin{equation}
\label{eq:standard_c}
C^\star_{\text{Full}} \defeq \min_{\pi_\star \in \Ecal_{\mathrm{Full}}} \max \left\{ \sup_{\nu \in \Pi_2^{\text{full}}}\max_{s, a} \frac{d^{\pi_{1, \star}, \nu}(s, a)}{d^\rho(s, a)}, \sup_{\mu \in \Pi_1^{\text{full}}}\max_{s, a} \frac{d^{\mu, \pi_{2, \star}}(s, a)}{d^\rho(s, a)} \right\}.
\end{equation}
In $C^\star_{\text{Pres}}$, policies condition only on local observations, whereas in $C^\star_{\text{Full}}$ they condition on the full state. The two coefficients are also evaluated at equilibria of their respective games. Their difference can therefore arise from two sources: the equilibrium induced by the information structure and the information available to a unilateral deviator.

We first establish that the equilibrium effect can produce a coverage gap that scales with the number of private types.

\begin{proposition}[Equilibrium-Induced Coverage Separation]
\label{prop:separation}
For every integer $K \ge 3$ there is a public--private zero-sum game with $K$ private types per player and a full-support behavior policy for which $C^\star_{\text{Pres}} = 4K$ and $C^\star_{\text{Full}} = 4K^2$.
\end{proposition}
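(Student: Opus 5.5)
The plan is to use a single public context with uniform, full-support behavior, so that both coefficients reduce to the largest probability an equilibrium strategy puts on any single action, and then to build a game whose prescriptive equilibria all spread uniformly over $K$ actions while every fully observed equilibrium is forced to be pure. The construction in the third paragraph below has not been fully checked; confirming its equilibrium sets is the main remaining work.

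\textbf{Step 1 (reducing both coefficients to one quantity).} Take $|\Ccal|=1$ and $|\Acal_1|=|\Acal_2|=2K$. Let $\rho$ be uniform on $\Xcal_1\times\Xcal_2$ with $|\Xcal_i|=K$. Let the behavior policies $\mu_1,\mu_2$ be uniform on the actions, independently of the state, so that $d^\rho(s,a)=\rho(s)/(4K^2)$. For a deterministic deviator $\nu$ that puts mass one on $a^2$ at state $s$, the ratio in \eqref{eq:prescriptive_c} equals $4K^2\,\pi_{1,\star}(a^1\mid o^1)$. No deviator can exceed this, since $\nu\le 1$. The analogous statement holds with the roles of the two players exchanged, and it holds verbatim for \eqref{eq:standard_c} with full-state policies. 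Hence
\begin{equation}
C^\star_{\mathrm{Pres}} = 4K^2 \min_{\pi_\star\in\Ecal_{\mathrm{Pres}}} \max_{i,o,a}\pi_{i,\star}(a\mid o), \qquad C^\star_{\mathrm{Full}} = 4K^2 \min_{\pi_\star\in\Ecal_{\mathrm{Full}}} \max_{i,s,a}\pi_{i,\star}(a\mid s).
\end{equation}
This makes precise the remark that only the equilibrium, and not the deviation class, matters. It then suffices to build a reward with two properties: every prescriptive NE has some action of probability exactly $1/K$ and none larger, and every fully observed NE has some player playing a pure action at some state.

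\textbf{Step 2 (the game).} Split each action set into $K$ ``real'' actions, identified with the types, and $K$ ``dummy'' actions. The dummies are strictly dominated (for instance they incur a payoff $-2$ for the player using them), so no equilibrium uses them. On the real actions I would use a type-dependent hide-and-seek reward of the form $r(s,a)=\mathbf 1[a^1=\phi(x^2,a^2)]$ plus a term tied to $x^1$. The map $\phi$ is designed to satisfy two requirements.
\begin{itemize}
\item In the fully observed game, each per-state matrix game has a saddle point with a unique optimal pure action for player 1, for example because knowing $x^2$ lets player 1 pin down the catching action. Then every NE in $\Ecal_{\mathrm{Full}}$ has $\max\pi=1$, which gives $C^\star_{\mathrm{Full}}=4K^2$.
\item In the prescriptive game, averaging over the hidden opponent type turns the interaction into a $K\times K$ cyclic matching-pennies game. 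Its unique equilibrium is uniform over the real actions, which gives $C^\star_{\mathrm{Pres}}=4K^2\cdot\frac1K=4K$.
\end{itemize}
The condition $K\ge3$ is where the cyclic structure makes the uniform equilibrium unique and rules out degenerate ties, which can occur for $K=2$.

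\textbf{Step 3 (verification, the main obstacle).} I expect the hardest part to be proving uniqueness in both games. The prescriptive game must have \emph{no} equilibrium with a larger action probability, and the full game must have \emph{no} fully mixed equilibrium. The reason is that both coefficients take a minimum over the equilibrium set, so a single stray equilibrium would break either equality.

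For the prescriptive side, I would compute each player's type-averaged payoff matrix and show that it is a circulant with a unique interior saddle point. I would then invoke the standard fact that a circulant zero-sum game with a unique optimal row/column has only the uniform optimal strategy, together with the interchangeability of zero-sum NE.

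For the full side, I would exhibit a strict saddle point state by state, using the bilinear value characterization from the Existence of Contextual Value lemma. I would then argue that player 1's optimal set is a singleton because every other action is strictly worse against player 2's optimal strategy.

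If the first candidate $\phi$ turns out to admit extra equilibria, I would fall back on tuning small payoff perturbations $\epsilon$ to break the ties. Such perturbations do not change the support sizes, so the values $4K$ and $4K^2$ are unaffected.
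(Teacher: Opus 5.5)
\textbf{There is a genuine gap: the game is never constructed.} Your Step~1 reduction is correct. With state-independent uniform logging over $2K$ actions, $d^\rho(s,a)=\rho(s)/(4K^2)$, and each coefficient equals $4K^2$ times the smallest, over its equilibrium set, of the largest probability an equilibrium strategy puts on a single action. But the proposition is an existence statement, and you never exhibit the game. The map $\phi$ and the ``term tied to $x^1$'' are unspecified, and the sketched design pulls in two directions.
\begin{itemize}
\item For the type-averaged game to look like matching pennies, $\phi(x^2,a^2)$ must genuinely vary with $a^2$. But then player~2 can also hide inside each fully observed state game. For example, if $\phi(x^2,\cdot)$ is a bijection, each state game is a permutation-matrix game whose unique optimal strategies are uniform. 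Then $C^\star_{\mathrm{Full}}$ would also be $4K$ and there is no separation.
\item If instead knowing $x^2$ pins down the catching action, so $\phi$ does not depend on $a^2$, the prescriptive game has no matching-pennies structure.
\end{itemize}
The claimed collapse of a Bayesian game with strategies in $\Delta(\Acal_1)^{\Xcal_1}$ to a single $K\times K$ circulant is also only asserted.

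\textbf{Your verification targets are partly misdirected.} Since $C^\star_{\mathrm{Pres}}$ minimizes over $\Ecal_{\mathrm{Pres}}$ and dummies are never played, every prescriptive equilibrium already has maximal action probability at least $1/K$. You only need \emph{one} equilibrium that is uniform over the real actions at every type. Extra prescriptive equilibria with larger probabilities are harmless, contrary to what Step~3 demands. On the full-information side, what you need is that some player at some state has a unique, pure optimal strategy. The perturbation fallback does not work. The coefficient is $4K^2\max\pi$, which depends on the exact probabilities, not on supports. A tie-breaking perturbation generically moves the equilibrium off uniform and destroys the value $4K$.

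\textbf{A construction that closes the gap within your framework.} Take $X^1,X^2$ independent and uniform, and use a separable reward $r=\tfrac12[u(a^1,x^2)-u(a^2,x^1)]$. Set $u(a,x)=\mathbf{1}\{a=x\}$ on real actions and $u\equiv-1$ on dummies. This keeps $r\in[-1,1]$, which your dummy penalty of $-2$ would violate.
\begin{itemize}
\item Under full information, $a^1=x^2$ is strictly dominant (and symmetrically $a^2=x^1$), so the equilibrium is pure and $C^\star_{\mathrm{Full}}=4K^2$.
\item Prescriptively, every real action earns $1/K$ whatever the opponent does. So the uniform profile is an equilibrium and $C^\star_{\mathrm{Pres}}=4K$. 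This equilibrium is not unique, but the statement does not require uniqueness.
\end{itemize}

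\textbf{How the paper does it.} The paper keeps both equilibria pure and unique. It adds a safe action $0$ with $u(0,x)=1$, which beats the guessing payoff $2/K$ exactly when $K\ge3$. It then obtains the factor $K$ from non-uniform logging, $\mu(0)=1/2$ and $\mu(j)=1/(2K)$. The prescriptive equilibrium sits on the well-logged action, while the fully observed one sits on poorly logged actions.
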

Its proof is in Appendix~\ref{proof:separation}, which also shows that under other logging policies the comparison can reverse. The proposition shows that asymmetric information can change the coverage that offline learning requires through its effect on equilibrium behavior. We next ask whether the larger deviation class available under full information creates an additional difference. To isolate this effect, we hold the same reference policy fixed in both comparisons.

\begin{lemma}[Equality at a Common Reference Policy]
\label{lem:fixed_anchor}
For any reference policy $\bar{\pi}_1 : \Scal \to \Delta(\Acal_1)$ and any logged density $d^\rho$,
\begin{equation*}
\sup_{\nu \in \Pi_2} \max_{s, a} \frac{d^{\bar{\pi}_1, \nu}(s, a)}{d^\rho(s, a)}
= \sup_{\nu \in \Pi_2^{\text{full}}} \max_{s, a} \frac{d^{\bar{\pi}_1, \nu}(s, a)}{d^\rho(s, a)},
\end{equation*}
where $d^{\bar{\pi}_1, \nu}(s, a) = \rho(s)\, \bar{\pi}_1(a^1 \mid s)\, \nu(a^2 \mid \cdot)$ with each policy conditioned on what it observes; symmetrically for Player~1's deviations against a fixed $\bar{\pi}_2$.
\end{lemma}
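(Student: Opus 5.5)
The key observation is that the quantity inside the supremum is a pointwise maximum over single state--action pairs, and any single pair can be targeted by a deterministic policy that only needs to know its own observation. The plan is therefore to prove the two inequalities separately.

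\textbf{The direction ``$\le$''.} This is immediate from the inclusion $\Pi_2 \subseteq \Pi_2^{\text{full}}$. A prescriptive policy $\nu(a^2 \mid c, x^2)$ is a special full-state policy that ignores $x^1$. So the left-hand supremum ranges over a subset of the right-hand one. Here we adopt the convention that ratios with $d^\rho(s,a)=0$ and positive numerator are $+\infty$, with $0/0$ excluded or set to $0$, applied identically on both sides.

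\textbf{The direction ``$\ge$''.} We exchange the supremum over $\nu$ with the maximum over $(s,a)$, writing both sides as $\max_{s,a}\sup_\nu$. Fix $(s,a) = (c,x^1,x^2,a^1,a^2)$. The ratio equals
\[
\frac{\rho(s)\,\bar{\pi}_1(a^1\mid s)\,\nu(a^2\mid\cdot)}{d^\rho(s,a)},
\]
which depends on $\nu$ only through the single number $\nu(a^2 \mid \cdot)$ evaluated at this $s$. That number is at most $1$ for any $\nu \in \Pi_2^{\text{full}}$. Hence the full-information supremum at $(s,a)$ is at most $\rho(s)\bar{\pi}_1(a^1\mid s)/d^\rho(s,a)$.

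This bound is attained within $\Pi_2$. Take the deterministic prescriptive policy $\nu_{a^2}(\cdot \mid c', x'^2) = \delta_{a^2}$ for all $(c',x'^2)$, or any policy putting mass one on $a^2$ at observation $(c,x^2)$. It gives $\nu(a^2\mid c,x^2)=1$, so the prescriptive supremum at $(s,a)$ equals the same value. Taking $\max_{s,a}$ on both sides yields equality. In the infinite case, the same targeting policy makes the left side infinite whenever the right side is.

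\textbf{Symmetric statement and the delicate point.} The statement for Player~1's deviations against a fixed $\bar{\pi}_2$ follows verbatim with the roles of $x^1$ and $x^2$ swapped. The only step requiring care, which I expect to be the main (minor) obstacle, is justifying the interchange $\sup_\nu \max_{s,a} = \max_{s,a}\sup_\nu$. This holds because both operations are suprema, and the state--action space is finite. The argument also makes explicit that nothing about $\bar{\pi}_1$'s information structure is used. In particular, $\bar{\pi}_1$ may be a full-state policy, so the lemma isolates the deviation-class effect from the equilibrium effect.
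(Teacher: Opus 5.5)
Your proposal is correct and is essentially the paper's argument: both bound every ratio by $\rho(s)\bar{\pi}_1(a^1\mid s)/d^\rho(s,a)$ using $\nu(a^2\mid\cdot)\le 1$, and both attain this bound with a prescriptive policy that puts unit mass on $a^2$ at the single observation $(c,x^2)$, which suffices because the criterion is a maximum over one state--action tuple. The only cosmetic difference is that the paper applies the targeting policy at a maximizer of that ratio and then sandwiches the full-information supremum. You instead interchange $\sup_\nu$ and $\max_{s,a}$; that interchange holds for any iterated suprema, and finiteness of $\Scal\times\Acal$ is needed only for attainment.
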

Its proof is in Appendix~\ref{proof:fixed_anchor}. Because the criterion maximizes over a single state-action tuple, a prescriptive deviator can place all of its mass on the maximizing action at the corresponding local observation without observing the rival's private type. We also show in Appendix~\ref{app: proof relaxation} that the same equality holds for an integrated coverage criterion under decentralized logging.

Together, these results show that information structure affects offline coverage through the equilibrium behavior it induces. The comparison with fully observed games depends on how the logged data cover the respective equilibria, while at a fixed reference policy the two deviation classes yield the same pointwise coefficient. This motivates the unilateral prescriptive coverage condition used in the learning results below.

\vspace{-0.25em}
\section{Offline Learning for Finite Asymmetric Games}
\label{sec:finite_learning}
\vspace{-0.25em}
In this section, we develop a pessimistic algorithm for two-player zero-sum public--private contextual games and prove an $\tilde{\mathcal{O}}(1/\sqrt{n})$ exploitability rate, matching the standard sample-size dependence for fully observed minimax games.

\subsection{Empirical Estimators and Pessimistic Objective}

The learner utilizes the full-state logged dataset $\mathcal{D} = \{ (s^{(k)}, a^{(k)}, r^{(k)}) \}_{k=1}^n$ to estimate the game parameters. For each state-action pair $(s, a) \in \mathcal{S} \times \mathcal{A}$, we define the visit count $N(s, a) \defeq \sum_{k=1}^n \mathbb{I} \{ (s, a)^{(k)} = (s, a) \}$ and the corresponding empirical reward:
\begin{equation}
 \textstyle \wh{r}(s, a) \defeq \frac{\sum_{k=1}^n r^{(k)} \mathbb{I} \{ (s, a)^{(k)} = (s, a) \}}{N(s, a) \vee 1}.
\end{equation}
We further define the marginal state counts $N(s) \defeq \sum_{a \in \mathcal{A}} N(s, a)$ and the public contextual counts $N(c) \defeq \sum_{x^1 \in \mathcal{X}_1, x^2 \in \mathcal{X}_2} N(c, x^1, x^2)$. To protect against distribution shift, we construct the \textit{clipped pessimistic reward} $\underline{r}(s, a)$ using an uncertainty bonus $\beta(s, a)$:
\begin{equation}
\vspace{-0.25em}
 \underline{r}(s, a) = \max \{ \wh{r}(s, a) - \beta(s, a), -1 \}, \quad \text{where } \beta(s, a) = \sqrt{2} \sqrt{\frac{\iota}{N(s, a) \vee 1}}.
\end{equation}
Here, $\iota \defeq \log(4|\mathcal{S}||\mathcal{A}|/\delta)$ is the concentration parameter. For each public context $c$, let

\begin{equation}
 \textstyle \wh{b}(x^1, x^2 \mid c) \defeq
 \begin{cases}
 N(c, x^1, x^2) / N(c), & N(c) > 0, \\[2pt]
 b_0(x^1, x^2 \mid c), & N(c) = 0,
 \end{cases}
\end{equation}
where $b_0(\cdot \mid c)$ is any fixed distribution on $\Xcal_1 \times \Xcal_2$. We use the conditionals $\wh{b}(x^{-i} \mid c, x^i)$ induced by $\wh{b}$, with arbitrary values on zero-probability conditioning events.
We define the \textbf{pessimistic contextual objective} $L_c(\pi_1, \pi_2)$ as the expected value of the conservative reward under the empirical belief:
\begin{equation}
 \textstyle L_c(\pi_1, \pi_2) \defeq \sum_{x^1, x^2} \wh{b}(x^1, x^2 \mid c) \sum_{a^1, a^2} \pi_1(a^1 \mid c, x^1) \pi_2(a^2 \mid c, x^2) \underline{r}(c, x^1, x^2, a^1, a^2).
\end{equation}
To address the challenge of learning from asymmetric offline data, we propose the Public-Private Asymmetric Exact Pessimism (PPA-Exact) algorithm. The algorithm proceeds in two main stages: first, it leverages the ``global oracle'' properties of the logged dataset to perform joint statistical estimation of the rewards and the public belief \citep{kartik2021upper}. Second, it decomposes the global interaction into a collection of independent Bayesian games, identifying the minimax equilibrium over the space of prescriptive policies for each public context. By constructing conservative value estimates, the learner ensures that the resulting decentralized policy remains robust against adversarial exploitation in regions with sparse data coverage.

\begin{algorithm}[H]
\caption{Public-Private Asymmetric Exact Pessimism (PPA-Exact)}
\label{alg:ppa_exact}
\begin{algorithmic}[1]
\Require Offline dataset $\mathcal{D} = \{ (c^{(k)}, x^{1,(k)}, x^{2,(k)}, a^{1,(k)}, a^{2,(k)}, r^{(k)}) \}_{k=1}^n$, confidence parameter $\delta \in (0,1)$.
\State \textbf{Initialize:} For all $s \in \mathcal{S}, a \in \mathcal{A}$, calculate visit counts $N(s,a) = \sum_{k=1}^n \mathbb{I}\{ (s, a)^{(k)} = (s, a) \}$.
\State \textbf{Estimate Beliefs:} For each public context $c \in \mathcal{C}$, compute the empirical public belief:

\[
\wh{b}(x^1, x^2 \mid c) = \frac{\sum_{a^1, a^2} N(c, x^1, x^2, a^1, a^2)}{N(c)} \text{ if } N(c) > 0, \quad \text{and } b_0(\cdot \mid c) \text{ otherwise.}
\]

\State \textbf{Construct Pessimistic Rewards:} Let $\iota = \log(4|\mathcal{S}||\mathcal{A}|/\delta)$. For all $s \in \mathcal{S}, a \in \mathcal{A}$:
\State \quad Compute empirical reward: $\wh{r}(s, a) = \frac{\sum_{k=1}^n r^{(k)} \mathbb{I}\{ (s, a)^{(k)} = (s, a) \}}{N(s, a) \vee 1}$
\State \quad Compute uncertainty bonus: $\beta(s, a) = \sqrt{2} \sqrt{\frac{\iota}{N(s, a) \vee 1}}$
\State \quad Set clipped pessimistic reward: $\underline{r}(s, a) = \max \{ \wh{r}(s, a) - \beta(s, a), -1 \}$
\For {each public context $c \in \mathcal{C}$}
 \State Define the expected pessimistic objective over prescriptions $\pi_{1} \in \Pi_1(c)$ and $\pi_{2} \in \Pi_2(c)$:
 \[
 L_c(\pi_1, \pi_2) = \sum_{x^{1:2}, a^{1:2}} \wh{b}(x^1, x^2 \mid c) \pi_{1}(a^1 \mid c, x^1) \pi_{2}(a^2 \mid c, x^2) \underline{r}(s, a)
 \]
 \State Solve the resulting bilinear minimax optimization via Linear Programming:
 \[
 \wh{\pi}_{1,c} \in \argmax_{\pi_1 \in \Pi_1(c)} \min_{\pi_2 \in \Pi_2(c)} L_c(\pi_1, \pi_2)
 \]
\EndFor
\Ensure The information-constrained policy for player 1: $\wh{\pi}_1 = \{ \wh{\pi}_{1,c} \}_{c \in \mathcal{C}}$.
\end{algorithmic}
\end{algorithm}
\subsection{Sample Complexity Analysis}

Our main result characterizes the performance of PPA-Exact in terms of the exploitability gap.
\begin{theorem}
\label{thm:finite_rate}
Suppose the dataset size satisfies $n \ge \frac{8}{d^\rho_{\min}} \log\left( \frac{2 |\mathcal{S}| |\mathcal{A}|}{\delta} \right)$. Under Assumption~\ref{ass:unilateral_coverage}, it holds with probability at least $1-\delta$ that the policy $\widehat{\pi}_1$ obtained from Algorithm~\ref{alg:ppa_exact} satisfies:
\begin{equation}
 \textstyle Gap(\widehat{\pi}_1) \le 2(2 + 2\kappa_b) \sqrt{\frac{C^\star_{\text{Pres}} |\mathcal{S}| |\mathcal{A}| \iota}{n}},
\end{equation}
where $\kappa_b \defeq \frac{\sqrt{|\mathcal{X}_1| |\mathcal{X}_2|} + \sqrt{2}}{\sqrt{2}}$ and $\iota \defeq \log(4|\mathcal{S}||\mathcal{A}|/\delta)$. Its full proof is in Appendix \ref{proof:finite_rate}.
\end{theorem}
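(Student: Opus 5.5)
The plan is the standard pessimism decomposition for minimax games, applied context by context and then averaged over $\rho_\Ccal$. Fix the prescriptive equilibrium $\pi_\star=(\pi_{1,\star},\pi_{2,\star})\in\Ecal_{\mathrm{Pres}}$ attaining $C^\star_{\mathrm{Pres}}$. Let $\hat\nu_c\in\argmin_{\pi_2\in\Pi_2(c)} r_c(\wh\pi_1,\pi_2)$ be a true best response to the output. For each $c$ we split
\begin{align*}
Gap_c(\wh\pi_1) &= \big[r_c(\pi_{1,\star},\pi_{2,\star}) - L_c(\pi_{1,\star},\pi_{2,\star})\big] + \big[L_c(\pi_{1,\star},\pi_{2,\star}) - \min_{\pi_2} L_c(\wh\pi_1,\pi_2)\big] \\
&\quad + \big[\min_{\pi_2} L_c(\wh\pi_1,\pi_2) - r_c(\wh\pi_1,\hat\nu_c)\big].
\end{align*}
We handle the middle term by rewriting $L_c(\pi_{1,\star},\pi_{2,\star})$ as at most $\max_{\pi_2}$ over the deviator side. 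Actually, the cleaner route is to compare against $\min_{\pi_2}L_c(\pi_{1,\star},\pi_2)$ via the defining $\argmax$ property of $\wh\pi_{1,c}$. So we use $V^\star(c)=\min_{\pi_2} r_c(\pi_{1,\star},\pi_2)$ and write the first two brackets as $\min_{\pi_2} r_c(\pi_{1,\star},\pi_2)-\min_{\pi_2}L_c(\pi_{1,\star},\pi_2)\le \sup_{\nu}[r_c(\pi_{1,\star},\nu)-L_c(\pi_{1,\star},\nu)]$. The comparison term is then $\le 0$ by optimality of $\wh\pi_{1,c}$. The last bracket is $\le L_c(\wh\pi_1,\hat\nu_c)-r_c(\wh\pi_1,\hat\nu_c)$. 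That last term is where pessimism must be used: it should be $\le$ a belief-estimation error only, since the reward part has $\underline r\le r$ on the good event.

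\textbf{Step 1 (good event).} By Hoeffding and a union bound over $(s,a)$, with probability $1-\delta/2$ we have $|\wh r-r|\le\beta$ wherever $N(s,a)\ge1$. Together with clipping at $-1$, this gives $0\le r-\underline r\le 2\beta$ (where $N=0$, $r-\underline r\le2$ and $\beta\ge\sqrt{2\iota}\ge1$). By a multiplicative Chernoff bound, the sample-size condition $n\ge 8\log(2|\Scal||\Acal|/\delta)/d^\rho_{\min}$ gives $N(s,a)\ge n d^\rho(s,a)/2$ on all supported pairs, with probability $1-\delta/2$.

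\textbf{Step 2 (error of $L_c$ vs $r_c$).} For any pair, we decompose $r_c-L_c=\sum b\,\pi\pi(r-\underline r)+\sum(b-\wh b)\pi\pi\,\underline r$. After averaging over $c$, the first part is $\le 2\sum_{s,a}d^{\pi_{1,\star},\nu}(s,a)\beta(s,a)$. The belief part is bounded by $\mathbb E_c\|b(\cdot|c)-\wh b(\cdot|c)\|_1$. We control this by an $L_1$ concentration for multinomials, giving $\sqrt{|\Xcal_1||\Xcal_2|\iota/N(c)}$ per context. This is the source of $\kappa_b$: the $\sqrt{|\Xcal_1||\Xcal_2|}$ and $\sqrt2$ pieces combine into $2\kappa_b$ after dividing through. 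This term appears in both the $\pi_{1,\star}$-term and the $\wh\pi_1$-term, which explains the two-sided factor.

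\textbf{Step 3 (concentrability and Cauchy--Schwarz).} On the good event, $\sum_{s,a} d^{\pi_{1,\star},\nu}\beta\le\sqrt2\sum d^{\pi_{1,\star},\nu}\sqrt{2\iota/(n d^\rho)}$. Cauchy--Schwarz with $d^{\pi_{1,\star},\nu}\le C^\star_{\mathrm{Pres}}d^\rho$ gives
\[
\sum d^{\pi_{1,\star},\nu}/\sqrt{d^\rho}\le\sqrt{C^\star_{\mathrm{Pres}}|\Scal||\Acal|}.
\]
The belief error is handled similarly: $\mathbb E_c[N(c)^{-1/2}]\lesssim\sqrt{|\Ccal|/n}\le\sqrt{C^\star_{\mathrm{Pres}}|\Scal||\Acal|/n}$, using $C^\star\ge1$.

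\textbf{Main obstacle.} The obstacle is the third bracket: $\hat\nu_c$ is arbitrary and $\wh\pi_1$ is data-dependent, so no concentrability applies to $(\wh\pi_1,\hat\nu_c)$. The reward part is fine by $\underline r\le r$. The belief-mismatch part, however, must be bounded uniformly over policies. I would do this by an $L_1$ bound that is policy-free, since $|\underline r|\le1$, and weight it only by $\rho_\Ccal$, which is covered because $d^\rho$ has the full public-context marginal. That is why it contributes $\kappa_b$ rather than a $C^\star$-weighted term. I expect that checking this weighting, and the zero-count contexts via $b_0$, will be the delicate part. Summing the pieces then gives the stated $2(2+2\kappa_b)\sqrt{C^\star_{\mathrm{Pres}}|\Scal||\Acal|\iota/n}$.
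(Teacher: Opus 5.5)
Your proposal is correct and follows the paper's skeleton. Both arguments compare through $\min_{\pi_2}L_c(\pi_{1,\star},\pi_2)$ using the $\argmax$ property of $\wh\pi_{1,c}$, bound the $\wh\pi_1$ side by $\|b(\cdot\mid c)-\wh b(\cdot\mid c)\|_1$ via $\underline r\le r$, use the Chernoff count event, and finish with Cauchy--Schwarz against $C^\star_{\mathrm{Pres}}$.

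Where you differ is the treatment of the belief error. The paper converts it pointwise into the bonus, $\Ecal_b(c)\le\kappa_b\beta(s,a)$ for every $(s,a)$ with public component $c$. This uses $N(c)\ge N(s,a)$, plus a separate check when $N(c)=0$. Everything then collapses into the single sum $(2+2\kappa_b)\sum_{s,a}d^{\pi_{1,\star},\wh\pi_2}(s,a)\beta(s,a)$ and one Cauchy--Schwarz step. The price is that the belief error is also charged $C^\star_{\mathrm{Pres}}|\Scal||\Acal|$.

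You instead integrate the policy-free $L_1$ error directly against $\rho_\Ccal$. On the count event $N(c)\ge\frac12 n\rho_\Ccal(c)$, so
$2\,\EE_c[\Ecal_b(c)]\le 2\sqrt2\kappa_b\sqrt{\iota}\sqrt{2/n}\sum_c\sqrt{\rho_\Ccal(c)}\le 4\kappa_b\sqrt{|\Ccal|\iota/n}$.
The reward part gives $4\sqrt{C^\star_{\mathrm{Pres}}|\Scal||\Acal|\iota/n}$. Since $|\Ccal|\le|\Scal||\Acal|$ and $C^\star_{\mathrm{Pres}}\ge 1$, you recover exactly the constant $2(2+2\kappa_b)$.

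Your route is sharper: it shows belief estimation needs only context coverage, which the logged data supply automatically. It also removes the zero-count worry you flagged. Every context with $\rho_\Ccal(c)>0$ has $N(c)\ge\frac12 n d^\rho_{\min}\ge 4\log 2>1$ on the count event, so the fallback $b_0$ only arises at contexts of zero weight. The paper's route is coarser but handles all error sources through one uniform bonus sum.

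Two bookkeeping repairs are needed.

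\textbf{Belief bound.} The per-context bound is $(\sqrt{|\Xcal_1||\Xcal_2|}+\sqrt2)\sqrt{\iota/N(c)}=\sqrt2\kappa_b\sqrt{\iota/N(c)}$, not $\sqrt{|\Xcal_1||\Xcal_2|\iota/N(c)}$, because the McDiarmid deviation term is additive. It should be obtained conditionally on the context counts, at level $\delta'=\delta/(4|\Ccal|)$, so that $\log(1/\delta')\le\iota$.

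\textbf{Probability budget.} As written, your budget overruns. Hoeffding already spends $\delta/2$ and the count event the other $\delta/2$, leaving nothing for the belief union bound, and the stated $\iota$ and sample-size condition leave no slack to reallocate. In this model, however, rewards are deterministic, $r^{(k)}=r(s^{(k)},a^{(k)})$. Hence $\wh r=r$ on visited pairs and $0\le r-\underline r\le 2\beta$ holds surely (Lemma~\ref{lem:det_pessimism}). Dropping the Hoeffding event frees the budget and yields probability at least $1-\delta$.
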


\paragraph{Proof Sketch.} The proof proceeds in three stages.
First, we establish \textit{approximate pessimism}: since $\underline{r} \le r$ (Lemma~\ref{lem:det_pessimism}), $L_c(\pi_1, \pi_2) \le r_c(\pi_1, \pi_2) + \|b - \wh{b}\|_1$, so $L_c$ lower-bounds the true value up to the belief-estimation error. We also bound the underestimation, $r_c(\pi_1, \pi_2) - L_c(\pi_1, \pi_2) \le \mathbb{E}_{s, a}[2\beta] + \|b - \wh{b}\|_1$ \eqref{eq: approximate pessimism decomposition}. 

Second, we utilize the minimax optimality of $\wh{\pi}_1$ to bound the contextual gap by the pessimism of the equilibrium policy: $Gap_c(\wh{\pi}_1) \le \mathbb{E}_{d^{\pi_{1, \star}, \wh{\pi}_2} \mid c} [ (2 + 2\kappa_b)\beta(s, a) ]$ \eqref{eq:contextual_gap_bound}, where $\wh{\pi}_2$ is the empirical best response.

Third, we integrate this bound over the public contexts \eqref{eq: global expectation}. By applying a count concentration event and the Cauchy-Schwarz inequality, we bound the sum of uncertainty bonuses by the unilateral prescriptive concentrability $C^\star_{\text{Pres}}$ \eqref{eq: global gap through CS}. This yields the stated $\tilde{\mathcal{O}}(1/\sqrt{n})$ sample-size dependence.

While \texttt{PPA-Exact} (Algorithm~\ref{alg:ppa_exact}) leverages Linear Programming to identify the minimax equilibrium exactly, this centralized approach faces a “memory wall” in high-dimensional environments where the private information space $|\mathcal{X}|$ is massive. To address this, we provide a decentralized iterative alternative, \texttt{PPA-Learning} (Algorithm~\ref{alg:mwu_solver}), in Appendix~\ref{sec:practical_optimization}. By decomposing the global minimax objective into local decision problems through no-regret dynamics, we establish a unified exploitability gap of $\tilde{\mathcal{O}}(1/\sqrt{n} + 1/\sqrt{T})$, characterizing the joint impact of statistical estimation and optimization regret. This iterative framework serves as the theoretical foundation for our extension to general function approximation in Section~\ref{sec:function_approximation}.

\vspace{-0.25em}
\section{General Function Approximation}
\label{sec:function_approximation}
\vspace{-0.25em}
The previous analysis of finite games establishes the fundamental sample complexity of the public--private setting; however, the polynomial dependence on state-action cardinality limits applicability to large-scale environments. To address this, we extend the framework to general function approximation by replacing counting-based estimators with supervised learning and utilizing Policy Mirror Descent for decentralized optimization.
\vspace{-0.25em}
\subsection{Offline Estimation and Version Space Pessimism}
\vspace{-0.25em}
Instead of point-wise estimation, we utilize function generalization across the latent state $s = (c, x^1, x^2)$ and joint actions $a = (a^1, a^2)$.
\vspace{-0.25em}
\paragraph{Belief and Reward Estimation.} We approximate the public belief using a conditional density class $\mathcal{B}$. The learner identifies the empirical best belief $\wh{b} \in \mathcal{B}$ via Maximum Likelihood Estimation (MLE):
\begin{equation}
   \textstyle \wh{b} = \argmax_{b \in \mathcal{B}} \frac{1}{n} \sum_{k=1}^n \log b(x^{1, (k)}, x^{2, (k)} \mid c^{(k)}). \label{eq:belief_mle}
\end{equation}
Similarly, we model the joint reward using a regressor class $\mathcal{F}$ and compute the Empirical Risk Minimizer (ERM) $\wh{r} \in \mathcal{F}$ by minimizing the mean squared error over $\mathcal{D}$: $ \textstyle \mathcal{L}_{\mathcal{D}}(f) \defeq \frac{1}{n} \sum_{k=1}^n \left( f(s^{(k)}, a^{(k)}) - r^{(k)} \right)^2,$ and $\wh{r} = \argmin_{f \in \mathcal{F}} \mathcal{L}_{\mathcal{D}}(f)$.
To quantify statistical uncertainty in the FA regime, we adopt the principle of version space pessimism \citep{xie2021bellman}. Because our setting is a contextual bandit ($H=1$), the Bellman error collapses to the standard regression error. We construct the version space $\mathcal{F}(\beta) \subseteq \mathcal{F}$ as the set of statistically plausible reward functions with near-optimal empirical risk:
\begin{equation}
  \textstyle \mathcal{F}(\beta) \defeq \left\{ f \in \mathcal{F} : \mathcal{L}_{\mathcal{D}}(f) - \mathcal{L}_{\mathcal{D}}(\wh{r}) \le \beta \right\},
\end{equation}
where $\beta > 0$ is a tolerance parameter scaled by the statistical complexity of $\mathcal{F}$. To protect against adversarial exploitation of poorly covered regions, the trainer constructs the globally pessimistic reward predictor by taking the lower envelope of this space:
\begin{equation}
  \textstyle \underline{r}(s, a) \defeq \inf_{f \in \mathcal{F}(\beta)} f(s, a). \label{eq:version_space_pessimism}
\end{equation}

\subsection{Asymmetric Policy Mirror Descent}

With the pessimistic critic $\underline{r}$ and belief $\wh{b}$ established, the decentralized actors $\pi_{\theta_1}$ and $\pi_{\theta_2}$ update their parameters via iterative mirror descent. At each iteration $t$, the algorithm evaluates the conditionally expected local $Q$-values for both players against the opponent's current strategy:
\begin{align}
    Q_{1,t}(c, x^1, a^1) &= \mathbb{E}_{x^2 \sim \wh{b}(\cdot \mid c, x^1), a^2 \sim \pi_{\theta_{2,t}}} \left[ \underline{r}(c, x^1, x^2, a^1, a^2) \right], \label{eq:fa_q1} \\
    Q_{2,t}(c, x^2, a^2) &= \mathbb{E}_{x^1 \sim \wh{b}(\cdot \mid c, x^2), a^1 \sim \pi_{\theta_{1,t}}} \left[ \underline{r}(c, x^1, x^2, a^1, a^2) \right]. \label{eq:fa_q2}
\end{align}
Here $\wh{b}(x^{-i} \mid c, x^i)$ denotes the conditional induced by $\wh{b}$. The actors then take mirror-descent steps against these $Q$-values. Any update satisfying the external-regret condition of Assumption~\ref{ass:actor_regret} is admissible; Appendix~\ref{app:fa_discussion} gives one parameterised implementation (Algorithm~\ref{alg:paac}).

\subsection{Theoretical Analysis}

To establish the sample complexity of PPA-PMD, we adopt the standard assumption that the chosen function classes are sufficiently expressive to contain the ground-truth distributions \citep{xie2021batch, zhang2023offline}.

\begin{assumption}[Realizability]
\label{ass:realizability}
The true joint reward function is contained in the regressor class, $r \in \mathcal{F}$. The true public belief distribution is contained in the density class, $b(\cdot \mid c) \in \mathcal{B}$ for all $c \in \mathcal{C}$. Every candidate is uniformly bounded on the same scale as the reward: $\mathcal{F} \subseteq [-1, 1]^{\mathcal{S} \times \mathcal{A}}$.
\end{assumption}

To handle general function approximation, we require the dataset to capture the structural variance of the version space. Let $d^{\pi_1, \pi_2}$ be the joint state-action distribution. For the equilibrium $\pi_{1, \star}$ and its empirical best-response $\wh{\pi}_2 = \argmin_{\pi_2} L_c(\pi_{1, \star}, \pi_2)$, we have:

\begin{assumption}[Prescriptive Uniform FA Concentrability]
\label{ass:fa_concentrability}
There exists a finite constant $\Cscr_{\Fcal} < \infty$ such that, for any fixed unilateral prescriptive deviation $\pi' \in \Pi_{\mathrm{pres\text{-}uni}}$ and \emph{every} subset $\Gcal \subseteq \Fcal$ with $r \in \Gcal$ (in particular $\Gcal = \Fcal(\beta)$),

\begin{equation}
 \EE_{s, a \sim d^{\pi'}} \left[ \sup_{f \in \Gcal} (f(s, a) - r(s, a))^2 \right] \le \Cscr_{\Fcal} \cdot \sup_{f \in \Gcal} \EE_{s, a \sim d^\rho} \left[ (f(s, a) - r(s, a))^2 \right],
\end{equation}

where $\Pi_{\mathrm{pres\text{-}uni}}$ denotes reference-anchored unilateral prescriptive deviations from the equilibrium $\pi_{1,\star}$, including the unilateral deviation pair $(\pi_{1,\star}, \wh{\pi}_2)$.
\end{assumption}

Assumption~\ref{ass:fa_concentrability} generalizes prescriptive unilateral coverage (Assumption~\ref{ass:unilateral_coverage}) to function space, following the convention used in offline games \citep{zhang2023offline}.
This formulation is analogous to the integral concentrability coefficients used to handle version-space diameters in offline RL \citep{uehara2021pessimistic, yin2022near} and enables the pointwise version-space pessimism in \eqref{eq:version_space_pessimism}. Because it bounds the version-space envelope rather than each regressor's error, $\Cscr_\Fcal$ can also reflect the richness of $\Fcal$. 

We assume the actor updates satisfy a standard external-regret guarantee with respect to the pessimistic contextual objective.

\begin{assumption}[Actor No-Regret Guarantee]
\label{ass:actor_regret}
The iterates $(\pi_{1,t}, \pi_{2,t})_{t=1}^T$ produced by the actor updates admit external regret bounds $R_1(T)$ and $R_2(T)$ that hold uniformly over public contexts: for every $c \in \Ccal$,
\begin{align}
\sup_{\pi_1 \in \Pi_1} \sum_{t=1}^T \left[ L_c(\pi_1, \pi_{2,t}) - L_c(\pi_{1,t}, \pi_{2,t}) \right] &\le R_1(T), \\
\sup_{\pi_2 \in \Pi_2} \sum_{t=1}^T \left[ L_c(\pi_{1,t}, \pi_{2,t}) - L_c(\pi_{1,t}, \pi_2) \right] &\le R_2(T).
\end{align}
\end{assumption}

Assumption~\ref{ass:actor_regret} is a standard external-regret condition \citep{freund1999adaptive, cesa2006prediction}. In particular, multiplicative-weights updates satisfy $R_i(T) \le 2\sqrt{T \log(\max_i |\Acal_i|)}$, as shown in Appendix~\ref{sec:practical_optimization}.

We now state the performance guarantee.

\begin{theorem}[Function-Approximation Exploitability Gap]
\label{thm:paac_gap}
Suppose the reward regressor class $\mathcal{F}$ and the belief density class $\mathcal{B}$ are finite. Under Assumptions \ref{ass:realizability}, \ref{ass:fa_concentrability}, and \ref{ass:actor_regret}, if we set the version space tolerance to $\beta = \frac{136 \log(2|\mathcal{F}|/\delta)}{n}$, then with probability at least $1-\delta$, the exploitability gap of the policy $\bar{\pi}_1$ produced by these updates satisfies:

\begin{equation}
 Gap(\bar{\pi}_1) \le 2 \sqrt{\frac{272 \Cscr_{\Fcal} \log(2|\mathcal{F}|/\delta)}{n}} + 4 \sqrt{\frac{2 \log(2|\mathcal{B}|/\delta)}{n}} + \frac{R_1(T) + R_2(T)}{T}.
\end{equation}

\end{theorem}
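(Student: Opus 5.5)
The argument has three parts: (i) statistical guarantees for the ERM/version space and for the MLE belief; (ii) an approximate-pessimism sandwich relating the pessimistic objective $L_c$ to the true $r_c$; (iii) an online-to-batch step converting the regret bounds of Assumption~\ref{ass:actor_regret} into an approximate saddle point of $L_c$, with $\bar\pi_1 \defeq \frac1T\sum_t \pi_{1,t}$ taken contextwise. Throughout, $L_c$ denotes the objective built from $\underline r$ and $\wh b$ as in Section~\ref{sec:finite_learning}. Assumption~\ref{ass:actor_regret} bounds regret against all of $\Pi_i$, and $L_c$ is bilinear, so averaging gives, for every $c$,
\[
\min_{\pi_2} L_c(\bar\pi_1,\pi_2) \;\ge\; \max_{\pi_1}\min_{\pi_2} L_c(\pi_1,\pi_2) - \tfrac{R_1(T)+R_2(T)}{T}.
\]
This follows from the standard sandwich $\min_{\pi_2}L_c(\bar\pi_1,\pi_2)\ge \frac1T\sum_t L_c(\pi_{1,t},\pi_{2,t}) - R_2/T$ and $\frac1T\sum_t L_c(\pi_{1,t},\pi_{2,t}) \ge \max_{\pi_1} L_c(\pi_1,\bar\pi_2) - R_1/T \ge \max_{\pi_1}\min_{\pi_2}L_c - R_1/T$. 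This step contributes the $(R_1+R_2)/T$ term.

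For the statistical part, a Bernstein-type concentration of the squared loss over finite $\Fcal$, together with a union bound at level $\delta/2$, yields two facts when $\beta = 136\log(2|\Fcal|/\delta)/n$. First, $r\in\Fcal(\beta)$, hence $\underline r\le r$ pointwise. Second, every $f\in\Fcal(\beta)$ satisfies $\EE_{d^\rho}[(f-r)^2]\le 2\beta = 272\log(2|\Fcal|/\delta)/n$. These constants are the usual ones from \citet{xie2021bellman}-style analyses. For the belief, the standard MLE bound for finite $\Bcal$ under realizability gives $\EE_{c}\|\wh b(\cdot\mid c)-b(\cdot\mid c)\|_1 \le \sqrt{2\cdot\text{(Hellinger/TV)}}$, which I would bound by $2\sqrt{2\log(2|\Bcal|/\delta)/n}$, again at level $\delta/2$. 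Since $|\underline r|\le 1$, changing the belief changes $L_c$ by at most $\|\wh b-b\|_1$ for any policies. I would use this twice, once at the lower end and once at the upper end of the sandwich, which is where the factor $4$ comes from.

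For the main chain, fix $c$, let $\pi_{1,\star}$ be a prescriptive equilibrium, and let $\wh\pi_2 = \argmin_{\pi_2} L_c(\pi_{1,\star},\pi_2)$. Write $\tilde\pi_2$ for the true best response to $\bar\pi_1$. The chain is
\[
\min_{\pi_2} r_c(\bar\pi_1,\pi_2) = r_c(\bar\pi_1,\tilde\pi_2) \ge L_c(\bar\pi_1,\tilde\pi_2) - \|\wh b-b\|_1 \ge L_c(\pi_{1,\star},\wh\pi_2) - \|\wh b-b\|_1 - \tfrac{R_1+R_2}{T}.
\]
The first inequality uses pessimism ($\underline r\le r$). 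The second uses the saddle-point bound above together with the fact that $\wh\pi_2$ is optimal for $\pi_{1,\star}$. Next,
\[
L_c(\pi_{1,\star},\wh\pi_2) \ge r_c(\pi_{1,\star},\wh\pi_2) - \|\wh b-b\|_1 - \EE_{d^{\pi_{1,\star},\wh\pi_2}\mid c}[r-\underline r],
\]
and $r_c(\pi_{1,\star},\wh\pi_2)\ge V^\star(c)$. It then remains to take the expectation over $c\sim\rho_\Ccal$ and bound $\EE_{d^{\pi_{1,\star},\wh\pi_2}}[r-\underline r]$.

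The main obstacle is this last term, because $\underline r$ is a pointwise infimum over the version space and so depends on the data. I would write $r-\underline r \le \sup_{f\in\Fcal(\beta)}|f-r|$, apply Jensen to move the square inside, and then invoke Assumption~\ref{ass:fa_concentrability} with $\Gcal=\Fcal(\beta)$ and $\pi'=(\pi_{1,\star},\wh\pi_2)$. The assumption is stated for every such subset and explicitly includes this pair, which is exactly what handles the data dependence. This gives $\sqrt{\Cscr_\Fcal\cdot 2\beta}$. The leading factor $2$ in the theorem would absorb the slack of bounding $r-\underline r$ through $|f-r|$ plus the one-sided envelope, or I would simply accept $2\sqrt{\cdot}$ as a loose constant. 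Summing the three contributions and taking a union bound over the two $\delta/2$ events gives the stated bound.
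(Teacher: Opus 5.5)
Your proposal is correct and follows essentially the same route as the paper. The shared ingredients are: the value/optimization/pessimism decomposition anchored at $\min_{\pi_2} L_c(\pi_{1,\star},\pi_2)$ with $\wh\pi_2$ the empirical best response; the belief error paid twice via $\|\underline r\|_\infty \le 1$; the online-to-batch saddle-point bound from Assumption~\ref{ass:actor_regret}; Jensen plus Assumption~\ref{ass:fa_concentrability} with $\Gcal = \Fcal(\beta)$ to transfer $\sup_{f\in\Fcal(\beta)}(f-r)^2$ to $d^\rho$; and the Bernstein fast-rate and Hellinger MLE bounds at level $\delta/2$ each.

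The differences are only cosmetic. You bound $r-\underline r$ directly by $\sup_{f\in\Fcal(\beta)}|f-r|$, whereas the paper carries the version-space-diameter factor $2$, so your first term is smaller by a factor of $2$ and implies the stated one. You use weak duality instead of Sion's theorem in the saddle-point step. And in the paper, $r\in\Fcal(\beta)$ and $\Lcal_\Dcal(f)\le\beta$ on $\Fcal(\beta)$ follow immediately from deterministic rewards ($\Lcal_\Dcal(r)=0$), not from a concentration argument.
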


For multiplicative-weights actor updates, $R_i(T) \le 2\sqrt{T \log(\max_i |\mathcal{A}_i|)}$, so the last term is $O(\sqrt{\log(\max_i |\mathcal{A}_i|)/T})$ and the bound becomes $\tilde{\mathcal{O}}(1/\sqrt{n} + 1/\sqrt{T})$. The parameterised minibatch implementation in Algorithm~\ref{alg:paac} is a practical variant; its stochastic optimisation error is not analysed here (Appendix~\ref{app:fa_discussion}).

\paragraph{Proof Sketch.} Its full proof is in Appendix~\ref{proof:paac_gap}. The proof decomposes the gap into value, optimization, and pessimism errors. The pessimism error is bounded by the diameter of $\Fcal(\beta)$ and the $L_1$ distance of the empirical belief $\wh{b}$. That $L_1$ error is controlled by a finite-class Hellinger likelihood-ratio bound. We then bound the version-space squared error on the behavior measure via a fast-rate uniform Bernstein inequality, which requires setting $\beta \propto \log(|\mathcal{F}|)/n$ to apply a union bound over the finite class $\Fcal$. We transfer this statistical error to the target trajectory by applying Cauchy--Schwarz directly to the random variable $\sup_{f\in\Fcal(\beta)}(f-r)^2$, followed by the uniform-ratio assumption $\Cscr_{\Fcal}$ in a single step. Substituting the actor regret of Assumption~\ref{ass:actor_regret} gives the stated $\tilde{\mathcal{O}}(1/\sqrt{n} + 1/\sqrt{T})$ rate. \vspace{-0.25em}
\section{Related Work}
\label{sec:related_work}
\vspace{-0.25em}
\paragraph{The Principle of Pessimism in Offline Learning.}
Offline reinforcement learning (RL) is fundamentally challenged by distribution shift, where the learner must infer an optimal policy from a fixed behavior dataset that may lack coverage of critical state--action regions \citep{levine2020offline}. To provide statistical guarantees in the face of such uncertainty, the principle of \textit{pessimism} has become the standard theoretical tool \citep{jin2021pessimism, rashidinejad2021bridging}. By constructing lower confidence bounds (LCB) or conservative value estimates, pessimistic algorithms prevent the learner from being misled by overestimation bias in out-of-distribution regions \citep{xie2021bellman, uehara2021pessimistic}. While this framework has been shown to be minimax optimal for single-agent environments under various concentrability assumptions \citep{jin2021pessimism, rashidinejad2021bridging}, these results typically assume that the learner has access to the full state of the environment.
\vspace{-0.25em}
\paragraph{Offline Learning for Two-Player Zero-Sum Games.}
Learning minimax equilibria in competitive environments extends the challenges of offline RL to strategic interactions \citep{littman1994markov}. In two-player zero-sum games, the identification of a Nash Equilibrium (NE) requires the dataset to cover not only the equilibrium path but also any possible unilateral deviations by an adversary \citep{cui2022offline}. Recent work has formalized this via the \textit{unilateral concentrability} coefficient, proving it to be a necessary and sufficient condition for solvability in offline Markov games \citep{cui2022provably, zhong2022pessimistic}. These approaches typically extend pessimism to the game-theoretic setting by constructing conservative value iterations for the minimax objective \citep{zhong2022pessimistic, zhang2023offline}. Complementary recent work studies KL-regularized offline games, including pessimism-free approaches in two-player zero-sum, general-sum, and potential-game settings \citep{zhang2026pessimism,chen2026offline,chen2026fast,chen2026pessimism}. However, existing literature largely assumes a fully observed state space where all players possess perfect information~\citep{cui2022offline, zhong2022pessimistic, zhang2023offline}. Our analysis provides, to our knowledge, the first theoretical analysis of offline equilibrium learning in zero-sum games with asymmetric information. We introduce \textit{unilateral prescriptive concentrability} and show that information structure changes offline coverage through the equilibrium it induces.
\vspace{-0.25em}
\paragraph{Information Asymmetry and Bayesian Strategic Games.}
Strategic interactions involving hidden information are formally characterized as Bayesian games or games with incomplete information \citep{harsanyi1967games}. Such models are fundamental to Operations Research and Economics, underpinning the design of complex auction mechanisms \citep{milgrom1982theory, ma2022revenue} and automated negotiation protocols \citep{roughgarden2010algorithmic,  liu2026learning, liu2026strategic, liu2026instructing, chen2026learning} where agents possess private signals hidden from their rivals. In modern multi-agent RL, this asymmetric structure is commonly navigated via the Centralized Training with Decentralized Execution (CTDE) paradigm \citep{lowe2017multi}. While the existence of game values and the characterization of optimal strategies in these environments have been established from a planning perspective \citep{kartik2021upper}, the problem of recovering these decentralized strategies from fixed, full-state historical logs remains largely unexplored. Our work bridges the gap between these game-theoretic foundations and offline RL by providing the first statistical framework that leverages the public belief to identify minimax-optimal policies directly from logged data.
\vspace{-0.25em}
\section{Conclusion}
\label{sec:discussion}
\vspace{-0.25em}
In this paper, we studied offline learning in two-player zero-sum contextual games with public and private information, a structure common in economic interactions such as auctions, procurement, and negotiation. We introduced \textit{unilateral prescriptive concentrability}, a coverage notion defined over the prescriptive policy space, and developed pessimistic learning methods that account for both statistical uncertainty and the information constraints of decentralized execution. For finite games, \texttt{PPA-Exact} achieves an $\tilde{\mathcal{O}}(1/\sqrt{n})$ exploitability rate, matching the standard sample-size dependence for fully observed minimax games, while our policy mirror descent framework for general function approximation, \texttt{PPA-PMD}, achieves $\tilde{\mathcal{O}}(1/\sqrt{n} + 1/\sqrt{T})$ with no-regret actor updates. Extending these results to multi-stage games with dynamic beliefs and strategic signaling is a natural direction for future work.

\section*{AI Use Statement}
\par
Generative AI tools were used solely for grammar correction. They were not used to conduct experiments, generate results, or for research ideation, methodology, theoretical development or proofs, code, substantive data analysis, or interpretation of results. All AI-assisted work was reviewed by the authors, who take responsibility for the final content.
\par

\bibliographystyle{iclr2027_conference}
\bibliography{bibliography}

@article{ma2022revenue,
  title={Revenue-optimal deterministic auctions for multiple buyers with ordinal preferences over fixed-price items},
  author={Ma, Will},
  journal={ACM Transactions on Economics and Computation},
  volume={10},
  number={2},
  pages={1--32},
  year={2022},
  publisher={ACM New York, NY}
}

@article{jin2024sample,
  title={Sample complexity of posted pricing for a single item},
  author={Jin, Billy and Kesselheim, Thomas and Ma, Will and Singla, Sahil},
  journal={Advances in Neural Information Processing Systems},
  volume={37},
  pages={82296--82317},
  year={2024}
}

@misc{chen2026learning,
  title  = {Learning to Recommend and Negotiate in Multi-Item Bargaining via Curriculum Reinforcement Learning},
  author = {Chen, Claire and Liu, Shuze Daniel and Tan, Renfei and Simchi-Levi, David and Joachims, Thorsten},
  year   = {2026},
  note   = {Manuscript}
}

@article{liu2026learning,
  title={Learning to Sell: Reinforcement Learning for Strategic Large Language Model Agents in Multi-Product Markets},
  author={Liu, Shuze Daniel and Chen, Claire and Wang, Jiuqi and Simchi-Levi, David and Joachims, Thorsten},
  journal={arXiv preprint arXiv:2609.33289},
  year={2026}
}

@phdthesis{liu2025efficientrobust,
  title     = {Efficient and Robust Policy Evaluation for Reinforcement Learning},
  author    = {Liu, Shuze Daniel},
  school    = {University of Virginia},
  year      = {2025}
}

@article{xie2026beyond,
  title={Beyond Linear Attention: {Softmax} Transformers Implement In-Context Reinforcement Learning},
  author={Xie, Zixuan and Liu, Xinyu and Chen, Claire and Liu, Shuze Daniel and Chandra, Rohan and Zhang, Shangtong},
  journal={arXiv preprint arXiv:2605.07333},
  year={2026}
}

@article{zhang2026pessimism,
      title={Beyond Pessimism: Offline Learning in KL-regularized Games}, 
      author={Yuheng Zhang and Claire Chen and Nan Jiang},
        journal={Advances in Neural Information Processing Systems},
  year={2026}
}

@inproceedings{chen2026offline,
title={Offline Two-Player Zero-Sum Markov Games with {KL} Regularization},
author={Claire Chen and Yuheng Zhang and Xinyu Liu and Zixuan Xie and Shuze Daniel Liu and Nan Jiang},
booktitle={Forty-third International Conference on Machine Learning},
year={2026},
url={https://openreview.net/forum?id=cQD2wxFhFG}
}

@article{chen2026pessimism,
  title={Pessimism-Free Offline Learning in General-Sum Games via {KL} Regularization},
  author={Chen, Claire and Zhang, Yuheng},
  journal={ArXiv Preprint arXiv:2605.00264},
  year={2026}
}

@article{chen2026fast,
  title={Fast Rates in $\alpha$-Potential Games via Regularized Mirror Descent},
  author={Chen, Claire and Zhang, Yuheng},
  journal={ArXiv Preprint arXiv:2605.00268},
  year={2026}
}

@article{liu2026strategic,
  title={Strategic Bargaining in Multi-Buyer Markets: Reinforcement Learning from Verifiable Rewards for {LLM} Negotiations},
  author={Liu, Shuze Daniel and Chen, Claire and Xiao, Jiabao Sean and Chen, Xin and Simchi-Levi, David},
  journal={arXiv preprint arXiv:2607.05863},
  year={2026}
}

@inproceedings{mahadevan2026convergence,
  title={Convergence of Two-Timescale {Markovian} Stochastic Approximations with Applications in Reinforcement Learning},
  author={Mahadevan, Vagul and Chen, Claire and Liu, Shuze Daniel and Zhang, Shangtong},
  booktitle={Proceedings of the 43rd International Conference on Machine Learning},
  year={2026}
}

@article{liu2026instructing,
  title={Instructing {LLM}s to Negotiate Using Reinforcement Learning with Verifiable Rewards},
  author={Liu, Shuze Daniel and Chen, Claire and Xiao, Jiabao Sean and Lei, Lei and Zhang, Yuheng and Yue, Yisong and Simchi-Levi, David},
  journal={Advances in Neural Information Processing Systems},
  year={2026}
}

@article{liu2026or,
  title={OR-Transformer: Scaling Real-Time Decision-Making to 1,000 Items},
  author={Liu, Shuze Daniel and Simchi-Levi, David and Chen, Claire and Gao, Chutong and Zhang, Shangtong},
  journal={arXiv preprint arXiv:2609.01933},
  year={2026}
}

@article{chen2026robust,
  title={Robust Data-Collection Policy Learning for Low-Variance Online Policy Evaluation},
  author={Chen, Claire and Liu, Shuze Daniel and Luo, Licheng and Chandra, Rohan and Jiang, Nan and Zhang, Shangtong},
  journal={arXiv preprint arXiv:2608.24146},
  year={2026}
}

@inproceedings{pan2022plan,
  title={Plan better amid conservatism: Offline multi-agent reinforcement learning with actor rectification},
  author={Pan, Ling and Huang, Longbo and Ma, Tengyu and Xu, Huazhe},
  booktitle={International conference on machine learning},
  pages={17221--17237},
  year={2022},
  organization={PMLR}
}

@article{tseng2022offline,
  title={Offline multi-agent reinforcement learning with knowledge distillation},
  author={Tseng, Wei-Cheng and Wang, Tsun-Hsuan Johnson and Lin, Yen-Chen and Isola, Phillip},
  journal={Advances in Neural Information Processing Systems},
  volume={35},
  pages={226--237},
  year={2022}
}

@article{formanek2023off,
  title={Off-the-grid marl: Datasets with baselines for offline multi-agent reinforcement learning},
  author={Formanek, Claude and Jeewa, Asad and Shock, Jonathan and Pretorius, Arnu},
  journal={arXiv preprint arXiv:2302.00521},
  year={2023}
}

@article{swaminathan2015batch,
  title={Batch learning from logged bandit feedback through counterfactual risk minimization},
  author={Swaminathan, Adith and Joachims, Thorsten},
  journal={The Journal of Machine Learning Research},
  volume={16},
  number={1},
  pages={1731--1755},
  year={2015},
  publisher={JMLR. org}
}

@article{yin2022near,
  title={Near-optimal offline reinforcement learning with linear representation: Leveraging variance information with pessimism},
  author={Yin, Ming and Duan, Yaqi and Wang, Mengdi and Wang, Yu-Xiang},
  journal={arXiv preprint arXiv:2203.05804},
  year={2022}
}

@book{cesa2006prediction,
  title={Prediction, learning, and games},
  author={Cesa-Bianchi, Nicolo and Lugosi, G{\'a}bor},
  year={2006},
  publisher={Cambridge university press}
}

@article{arora2012multiplicative,
  title={The multiplicative weights update method: a meta-algorithm and applications},
  author={Arora, Sanjeev and Hazan, Elad and Kale, Satyen},
  journal={Theory of computing},
  volume={8},
  number={1},
  pages={121--164},
  year={2012},
  publisher={Theory of Computing Exchange}
}

@article{meng2023offline,
  title={Offline pre-trained multi-agent decision transformer},
  author={Meng, Linghui and Wen, Muning and Le, Chenyang and Li, Xiyun and Xing, Dengpeng and Zhang, Weinan and Wen, Ying and Zhang, Haifeng and Wang, Jun and Yang, Yaodong and others},
  journal={Machine Intelligence Research},
  volume={20},
  number={2},
  pages={233--248},
  year={2023},
  publisher={Springer}
}

@article{lyu2021contrasting,
  title={Contrasting centralized and decentralized critics in multi-agent reinforcement learning},
  author={Lyu, Xueguang and Xiao, Yuchen and Daley, Brett and Amato, Christopher},
  journal={arXiv preprint arXiv:2102.04402},
  year={2021}
}

@article{jennings2001automated,
  title={Automated negotiation: prospects, methods and challenges},
  author={Jennings, Nicholas R and Faratin, Peyman and Lomuscio, Alessio R and Parsons, Simon and Sierra, Carles and Wooldridge, Michael},
  journal={International Journal of Group Decision and Negotiation},
  volume={10},
  number={2},
  pages={199--215},
  year={2001}
}

@book{kraus2001strategic,
  title={Strategic negotiation in multiagent environments},
  author={Kraus, Sarit},
  year={2001},
  publisher={MIT press}
}

@article{roughgarden2010algorithmic,
  title={Algorithmic game theory},
  author={Roughgarden, Tim},
  journal={Communications of the ACM},
  volume={53},
  number={7},
  pages={78--86},
  year={2010},
  publisher={ACM New York, NY, USA}
}

@article{milgrom1982theory,
  title={A theory of auctions and competitive bidding},
  author={Milgrom, Paul R and Weber, Robert J},
  journal={Econometrica: Journal of the Econometric Society},
  pages={1089--1122},
  year={1982},
  publisher={JSTOR}
}

@article{kamien1990value,
  title={On the value of information in a strategic conflict},
  author={Kamien, Morton I and Tauman, Yair and Zamir, Shmuel},
  journal={Games and Economic Behavior},
  volume={2},
  number={2},
  pages={129--153},
  year={1990},
  publisher={Elsevier}
}

@book{aumann1995repeated,
  title={Repeated games with incomplete information},
  author={Aumann, Robert J and Maschler, Michael and Stearns, Richard E},
  year={1995},
  publisher={MIT press}
}

@article{harsanyi1967games,
  title={Games with incomplete information played by “Bayesian” players, I--III Part I. The basic model},
  author={Harsanyi, John C},
  journal={Management science},
  volume={14},
  number={3},
  pages={159--182},
  year={1967},
  publisher={INFORMS}
}

@article{myerson1981optimal,
  title={Optimal auction design},
  author={Myerson, Roger B},
  journal={Mathematics of operations research},
  volume={6},
  number={1},
  pages={58--73},
  year={1981},
  publisher={INFORMS}
}

@article{vickrey1961counterspeculation,
  title={Counterspeculation, auctions, and competitive sealed tenders},
  author={Vickrey, William},
  journal={The Journal of finance},
  volume={16},
  number={1},
  pages={8--37},
  year={1961},
  publisher={JSTOR}
}

@article{shapley1953stochastic,
  title={Stochastic games},
  author={Shapley, Lloyd S},
  journal={Proceedings of the national academy of sciences},
  volume={39},
  number={10},
  pages={1095--1100},
  year={1953},
  publisher={National Academy of Sciences}
}

@article{wu2019behavior,
  title={Behavior regularized offline reinforcement learning},
  author={Wu, Yifan and Tucker, George and Nachum, Ofir},
  journal={arXiv preprint arXiv:1911.11361},
  year={2019}
}

@article{sion1958general,
  title={On general minimax theorems.},
  author={Sion, Maurice},
  year={1958}
}

@article{kartik2021upper,
  title={Upper and lower values in zero-sum stochastic games with asymmetric information},
  author={Kartik, Dhruva and Nayyar, Ashutosh},
  journal={Dynamic Games and Applications},
  volume={11},
  number={2},
  pages={363--388},
  year={2021},
  publisher={Springer}
}

@article{freund1999adaptive,
  title={Adaptive game playing using multiplicative weights},
  author={Freund, Yoav and Schapire, Robert E},
  journal={Games and Economic Behavior},
  volume={29},
  number={1-2},
  pages={79--103},
  year={1999},
  publisher={Elsevier}
}

@incollection{littman1994markov,
  title={Markov games as a framework for multi-agent reinforcement learning},
  author={Littman, Michael L},
  booktitle={Machine learning proceedings 1994},
  pages={157--163},
  year={1994},
  publisher={Elsevier}
}

@inproceedings{zhong2022pessimistic,
  title={Pessimistic minimax value iteration: Provably efficient equilibrium learning from offline datasets},
  author={Zhong, Han and Xiong, Wei and Tan, Jiyuan and Wang, Liwei and Zhang, Tong and Wang, Zhaoran and Yang, Zhuoran},
  booktitle={International Conference on Machine Learning},
  pages={27117--27142},
  year={2022},
  organization={PMLR}
}

@article{cui2022offline,
  title={When are offline two-player zero-sum Markov games solvable?},
  author={Cui, Qiwen and Du, Simon S},
  journal={Advances in Neural Information Processing Systems},
  volume={35},
  pages={25779--25791},
  year={2022}
}

@inproceedings{jin2021pessimism,
  title={Is pessimism provably efficient for offline rl?},
  author={Jin, Ying and Yang, Zhuoran and Wang, Zhaoran},
  booktitle={International conference on machine learning},
  pages={5084--5096},
  year={2021},
  organization={PMLR}
}

@article{xie2021bellman,
  title={Bellman-consistent pessimism for offline reinforcement learning},
  author={Xie, Tengyang and Cheng, Ching-An and Jiang, Nan and Mineiro, Paul and Agarwal, Alekh},
  journal={Advances in neural information processing systems},
  volume={34},
  pages={6683--6694},
  year={2021}
}

@inproceedings{zhang2023offline,
  title={Offline learning in markov games with general function approximation},
  author={Zhang, Yuheng and Bai, Yu and Jiang, Nan},
  booktitle={International Conference on Machine Learning},
  pages={40804--40829},
  year={2023},
  organization={PMLR}
}

@inproceedings{xie2021batch,
    author = "Xie, Tengyang and Jiang, Nan",
    year = "2021",
    booktitle = "Proceedings of the International Conference on Machine Learning",
    title = "Batch value-function approximation with only realizability"
}

@inproceedings{kumar2020conservative,
    author = "Kumar, Aviral and Zhou, Aurick and Tucker, George and Levine, Sergey",
    year = "2020",
    booktitle = "Advances in Neural Information Processing Systems",
    title = "Conservative {Q}-Learning for Offline Reinforcement Learning"
}

@inproceedings{foerster2018counterfactual,
    author = "Foerster, Jakob N. and Farquhar, Gregory and Afouras, Triantafyllos and Nardelli, Nantas and Whiteson, Shimon",
    title = "Counterfactual Multi-Agent Policy Gradients",
    booktitle = "Proceedings of the {AAAI} Conference on Artificial Intelligence",
    year = "2018"
}

@article{liu2024doubly,
    author = "Liu, Shuze and Chen, Claire and Zhang, Shangtong",
    year = "2024",
    journal = "ArXiv Preprint",
    title = "Doubly Optimal Policy Evaluation for Reinforcement Learning"
}

@article{dudik2011doubly,
    author = "Dud{\'\i}k, Miroslav and Langford, John and Li, Lihong",
    year = "2011",
    journal = "ArXiv Preprint",
    title = "Doubly robust policy evaluation and learning"
}

@inproceedings{liu2024efficientmul,
    author = "Liu, Shuze and Chen, Yuxin and Zhang, Shangtong",
    title = "Efficient Multi-Policy Evaluation for Reinforcement Learning",
    booktitle = "Proceedings of the {AAAI} International Conference on Artificial Intelligence",
    year = "2025"
}

@inproceedings{liu2024efficient,
    author = "Liu, Shuze and Zhang, Shangtong",
    year = "2024",
    booktitle = "Proceedings of the International Conference on Machine Learning",
    title = "Efficient Policy Evaluation with Offline Data Informed Behavior Policy Design"
}

@article{chen2024efficient,
    author = "Chen, Claire and Liu, Shuze and Zhang, Shangtong",
    year = "2024",
    journal = "ArXiv Preprint",
    title = "Efficient Policy Evaluation with Safety Constraint for Reinforcement Learning"
}

@inproceedings{lowe2017multi,
    author = "Lowe, Ryan and Wu, Yi and Tamar, Aviv and Harb, Jean and Abbeel, Pieter and Mordatch, Igor",
    year = "2017",
    booktitle = "Advances in Neural Information Processing Systems",
    title = "Multi-Agent Actor-Critic for Mixed Cooperative-Competitive Environments"
}

@inproceedings{fujimoto2019off,
    author = "Fujimoto, Scott and Meger, David and Precup, Doina",
    year = "2019",
    booktitle = "Proceedings of the International Conference on Machine Learning",
    title = "Off-Policy Deep Reinforcement Learning without Exploration"
}

@article{levine2020offline,
    author = "Levine, Sergey and Kumar, Aviral and Tucker, George and Fu, Justin",
    year = "2020",
    journal = "ArXiv Preprint",
    title = "Offline Reinforcement Learning: Tutorial, Review, and Perspectives on Open Problems"
}

@article{liu2025ode,
    author = "Liu, Shuze and Chen, Shuhang and Zhang, Shangtong",
    title = "The {ODE} Method for Stochastic Approximation and Reinforcement Learning with Markovian Noise",
    year = "2025",
    journal = "Journal of Machine Learning Research"
}

@article{cui2022provably,
  title={Provably efficient offline multi-agent reinforcement learning via strategy-wise bonus},
  author={Cui, Qiwen and Du, Simon S},
  journal={Advances in Neural Information Processing Systems},
  volume={35},
  pages={11739--11751},
  year={2022}
}

@article{uehara2021pessimistic,
  title={Pessimistic model-based offline reinforcement learning under partial coverage},
  author={Uehara, Masatoshi and Sun, Wen},
  journal={arXiv preprint arXiv:2107.06226},
  year={2021}
}

@article{rashidinejad2021bridging,
  title={Bridging offline reinforcement learning and imitation learning: A tale of pessimism},
  author={Rashidinejad, Paria and Zhu, Banghua and Ma, Cong and Jiao, Jiantao and Russell, Stuart},
  journal={Advances in Neural Information Processing Systems},
  volume={34},
  pages={11702--11716},
  year={2021}
}

\clearpage
\appendix
\section{Optimization via No-Regret Dynamics}
\label{sec:practical_optimization}

\subsection{Iterative Minimax Optimization}

 \texttt{PPA-Exact} solves each contextual pessimistic game by linear programming. We also consider an iterative alternative based on no-regret dynamics \citep{cesa2006prediction}. The public--private structure decomposes the minimax objective into local decision problems, so each player can update its policy by multiplicative weights \citep{arora2012multiplicative} conditioned on its private observation. This yields an $O(T^{-1/2})$ optimization guarantee and provides the optimization framework used in Section~\ref{sec:function_approximation}.

\subsection{Local Q-Values and Minimax Decomposition}
For a fixed public context $c \in \mathcal{C}$, the expected pessimistic objective $L_c(\pi_1, \pi_2)$ can be rewritten in terms of local $Q$-values.

When Player 2 plays policy $\pi_2$, the conditional expected pessimistic reward for Player 1 (their local $Q$-value) is:
\begin{equation}
 Q_1(x^1, a^1; \pi_2) \defeq \sum_{x^2 \in \mathcal{X}_2, a^2 \in \mathcal{A}_2} \wh{b}(x^2 \mid c, x^1) \pi_2(a^2 \mid c, x^2) \underline{r}(c, x^1, x^2, a^1, a^2).
 \label{eq:local_q1}
\end{equation}
Similarly, Player 2's local $Q$-value against $\pi_1$ is:
\begin{equation}
 Q_2(x^2, a^2; \pi_1) \defeq \sum_{x^1 \in \mathcal{X}_1, a^1 \in \mathcal{A}_1} \wh{b}(x^1 \mid c, x^2) \pi_1(a^1 \mid c, x^1) \underline{r}(c, x^1, x^2, a^1, a^2).
 \label{eq:local_q2}
\end{equation}
By simulating simultaneous no-regret dynamics using Multiplicative Weights Updates independently at each private state, we can recover an $O(1/\sqrt{T})$-approximate saddle point of the empirical pessimistic objective $L_c$, for the time-averaged iterates, without ever explicitly constructing the full prescriptive LP.
 We demonstrate this method in Algorithm~\ref{alg:mwu_solver}.

\begin{algorithm}[h]
\caption{Public-Private Asymmetric No-Regret Learning (PPA-Learning)}
\label{alg:mwu_solver}
\begin{algorithmic}[1]
\Require Public context $c$, empirical belief $\wh{b}$, clipped pessimistic rewards $\underline{r}$, iterations $T$.
\State \textbf{Initialize:} Uniform policies $\pi_{1,1}(a^1 \mid c, x^1) = 1/|\mathcal{A}_1|$ and $\pi_{2,1}(a^2 \mid c, x^2) = 1/|\mathcal{A}_2|$.
\State \textbf{Set Learning Rate:} $\eta = \sqrt{ \log(\max\{|\mathcal{A}_1|, |\mathcal{A}_2|\}) / T}$.
\For {$t = 1, \dots, T$}
 \State \textbf{Compute Local $Q$-values:}
 \State \quad For all $x^1, a^1$: compute $Q_{1,t}(x^1, a^1) = Q_1(x^1, a^1; \pi_{2,t})$ via Eq. \eqref{eq:local_q1}
 \State \quad For all $x^2, a^2$: compute $Q_{2,t}(x^2, a^2) = Q_2(x^2, a^2; \pi_{1,t})$ via Eq. \eqref{eq:local_q2}
 \State \textbf{Mirror Descent Policy Update:}
 \State \quad $\pi_{1,t+1}(a^1 \mid c, x^1) \propto \pi_{1,t}(a^1 \mid c, x^1) \exp\left( \eta Q_{1,t}(x^1, a^1) \right)$
 \State \quad $\pi_{2,t+1}(a^2 \mid c, x^2) \propto \pi_{2,t}(a^2 \mid c, x^2) \exp\left( -\eta Q_{2,t}(x^2, a^2) \right)$
\EndFor
\Ensure The average strategy for Player 1: $\bar{\pi}_{1,c} = \frac{1}{T} \sum_{t=1}^T \pi_{1,t}$.
\end{algorithmic}
\end{algorithm}

\subsection{Optimization Guarantee and Unified Exploitability Bound}

 The iterative solver approximates the empirical minimax solution at rate $O(T^{-1/2})$; combining this optimization guarantee with the statistical analysis of \texttt{PPA-Exact} gives the following result.

\begin{theorem}
\label{thm:iterative_gap}
Suppose the dataset size satisfies $n \ge \frac{8}{d^\rho_{\min}} \log\left( \frac{2 |\mathcal{S}| |\mathcal{A}|}{\delta} \right)$. Let $\bar{\pi}_1$ be the policy obtained by substituting Algorithm \ref{alg:mwu_solver} into PPA-Exact, run for $T$ iterations. Under Assumption \ref{ass:unilateral_coverage}, with probability at least $1-\delta$:
\begin{equation}
 Gap(\bar{\pi}_1) \le 2(2 + 2\kappa_b) \sqrt{\frac{C^\star_{\text{Pres}} |\mathcal{S}| |\mathcal{A}| \iota}{n}} + 4 \sqrt{\frac{\log(\max\{|\mathcal{A}_1|, |\mathcal{A}_2|\})}{T}}.
\end{equation}
where $\kappa_b = \frac{\sqrt{|\mathcal{X}_1| |\mathcal{X}_2|} + \sqrt{2}}{\sqrt{2}}$ and $\iota \defeq \log(4|\mathcal{S}||\mathcal{A}|/\delta)$.
\end{theorem}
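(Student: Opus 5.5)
The argument is a triangle decomposition around the exact pessimistic maximin solution, carried out context by context and then integrated over $c \sim \rho_\Ccal$. Fix a public context $c$, and let $\bar{\pi}_{1,c} = \frac1T\sum_t \pi_{1,t}$ and $\bar{\pi}_{2,c} = \frac1T\sum_t \pi_{2,t}$ be the MWU averages. Write $\wh{L}^\star_c \defeq \max_{\pi_1}\min_{\pi_2} L_c(\pi_1,\pi_2)$ for the value of the empirical pessimistic game. This value exists by the bilinearity argument of the existence lemma, since $L_c$ is again a bilinear prescriptive game, only with belief $\wh{b}$ and reward $\underline{r}$.

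We split
\[
Gap_c(\bar{\pi}_1) = \Big[V^\star(c) - \wh{L}^\star_c\Big] + \Big[\wh{L}^\star_c - \min_{\pi_2} L_c(\bar{\pi}_1,\pi_2)\Big] + \Big[\min_{\pi_2} L_c(\bar{\pi}_1,\pi_2) - \min_{\pi_2} r_c(\bar{\pi}_1,\pi_2)\Big].
\]

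\textbf{Statistical terms (first and third brackets).} We reuse exactly the chain from the proof of Theorem~\ref{thm:finite_rate}. That proof never used optimality of $\wh{\pi}_1$ beyond these two comparisons.
\begin{itemize}[leftmargin=*]
\item First bracket: evaluate $\wh{L}^\star_c \ge \min_{\pi_2} L_c(\pi_{1,\star},\pi_2) = L_c(\pi_{1,\star},\wh{\pi}_2)$. Comparing this with $V^\star(c) \le r_c(\pi_{1,\star},\wh{\pi}_2)$ and applying the underestimation bound \eqref{eq: approximate pessimism decomposition} gives a term of order $\EE_{d^{\pi_{1,\star},\wh{\pi}_2}\mid c}[2\beta] + \|b-\wh{b}\|_1$.
\item Third bracket: the approximate-pessimism direction $L_c \le r_c + \|b-\wh{b}\|_1$, applied at the true best response to $\bar{\pi}_1$, bounds it by the belief error.
\end{itemize}
The belief errors are converted into bonus form exactly as in the finite proof, producing the factor $2\kappa_b$. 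So the first and third brackets together reproduce $\EE_{d^{\pi_{1,\star},\wh{\pi}_2}\mid c}[(2+2\kappa_b)\beta]$, i.e.\ \eqref{eq:contextual_gap_bound}. Integrating over $c$ and applying the count-concentration event with Cauchy--Schwarz and $C^\star_{\text{Pres}}$ then gives the first term of the bound verbatim, on the same probability-$(1-\delta)$ event.

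\textbf{Optimization term (middle bracket).} This is the only new ingredient. MWU runs independently at each private observation $x^i$ with payoffs $Q_{i,t}(x^i,\cdot)\in[-1,1]$, and uses step size $\eta=\sqrt{\log A/T}$ with $A=\max_i|\Acal_i|$. The key observation is that $L_c(\pi_1,\pi_{2,t}) = \sum_{x^1}\wh{b}(x^1\mid c)\langle \pi_1(\cdot\mid c,x^1), Q_{1,t}(x^1,\cdot)\rangle$. The external regret of Player~1 on $L_c$ is therefore a $\wh{b}(x^1\mid c)$-weighted average of the per-observation regrets, each at most $2\sqrt{T\log A}$, so $R_1(T)\le 2\sqrt{T\log A}$; the same holds for $R_2$.

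The standard Freund--Schapire argument then applies. By linearity of $L_c$ in $\pi_1$, and since the average of minima is at most the minimum of the average,
\[
\min_{\pi_2}L_c(\bar{\pi}_1,\pi_2) \ge \frac1T\sum_t L_c(\pi_{1,t},\pi_{2,t}) - \frac{R_2}{T}.
\]
On the other side,
\[
\frac1T\sum_t L_c(\pi_{1,t},\pi_{2,t}) \ge \max_{\pi_1}L_c(\pi_1,\bar{\pi}_2) - \frac{R_1}{T} \ge \wh{L}^\star_c - \frac{R_1}{T}.
\]
Hence the middle bracket is at most $(R_1+R_2)/T \le 4\sqrt{\log A/T}$, uniformly in $c$, and averaging over $c$ preserves this bound.

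The main obstacle is ensuring the statistical chain does not secretly use exact optimality of the output policy. The plan avoids this by anchoring at $\wh{L}^\star_c$ and at the pessimism of $\pi_{1,\star}$, which are data-dependent but independent of the optimizer. A secondary point is the regret scaling: MWU on $[-1,1]$ payoffs gives $2\sqrt{T\log A}$ with the stated $\eta$, after rescaling losses to $[0,1]$, which costs a factor of $2$ in range that the constant absorbs.
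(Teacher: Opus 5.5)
Your proposal is correct and follows essentially the same route as the paper. It uses the same three-way split around the empirical maximin value, reuses the two statistical brackets from the proof of Theorem~\ref{thm:finite_rate} to get $(2+2\kappa_b)\beta$, and bounds the optimization bracket by lifting per-observation MWU regret through the weights $\wh{b}(x^i \mid c)$ before running the Freund--Schapire saddle-point argument. There are two small differences. The paper gets the per-observation constant directly from Hoeffding's lemma on $[-1,1]$ payoffs, $\frac{\log A}{\eta T} + \frac{\eta}{2} = \frac{3}{2}\sqrt{\log A/T}$, rather than through your somewhat loose rescaling remark. It also invokes Sion's theorem, whereas your weak-duality step $\max_{\pi_1} L_c(\pi_1, \bar{\pi}_2) \ge \max_{\pi_1}\min_{\pi_2} L_c(\pi_1,\pi_2)$ already suffices.
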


Its proof is in Appendix~\ref{proof:iterative_gap}. 
\section{Proof}
\label{app: proofs}
\subsection{Coverage Separation: Proofs for Section~\ref{sec:coverage_analysis}}
\label{app: proof relaxation}

Throughout this subsection we adopt the convention that $q(s,a)/d^\rho(s,a) = +\infty$ whenever $q(s,a) > 0 = d^\rho(s,a)$, and $0/0 = 0$; all statements hold verbatim under this convention. Every behavior density appearing in Section~\ref{sec:coverage_analysis} --- in Proposition~\ref{prop:separation} and, for every $t > 0$, the sharpness family of Proposition~\ref{prop:integrated_equality} --- has full support, so the convention is never invoked there; it is stated because $C_\Sigma$ in Proposition~\ref{prop:integrated_equality} is defined over all of $\Scal \times \Acal$ and would otherwise be ambiguous.

\subsubsection{Proof of Lemma~\ref{lem:fixed_anchor}}
\label{proof:fixed_anchor}

We prove the following precise form of Lemma~\ref{lem:fixed_anchor}, which also records the common value of the two suprema.
\begin{lemma}[Precise Form of Lemma~\ref{lem:fixed_anchor}]
\label{lem:fixed_anchor_precise}
Fix any reference policy $\bar{\pi}_1: \Scal \to \Delta(\Acal_1)$, prescriptive or fully observed, and read the induced density with the conditioning each policy admits,
\begin{equation*}
d^{\bar{\pi}_1, \nu}(s, a) \defeq \rho(s)\, \bar{\pi}_1(a^1 \mid s)\, \nu(a^2 \mid \cdot),
\qquad
\nu(a^2 \mid \cdot) =
\begin{cases}
\nu(a^2 \mid c, x^2), & \nu \in \Pi_2, \\
\nu(a^2 \mid s), & \nu \in \Pi_2^{\text{full}},
\end{cases}
\end{equation*}
so that a prescriptive deviator is embedded in $\Pi_2^{\text{full}}$ by $\nu(a^2 \mid s) \defeq \nu(a^2 \mid c, x^2)$. Then, for an arbitrary logged density $d^\rho$, not necessarily of decentralised product form,
\begin{equation*}
\sup_{\nu \in \Pi_2} \max_{s, a} \frac{d^{\bar{\pi}_1, \nu}(s, a)}{d^\rho(s, a)}
= \sup_{\nu \in \Pi_2^{\text{full}}} \max_{s, a} \frac{d^{\bar{\pi}_1, \nu}(s, a)}{d^\rho(s, a)}
= \max_{s, a} \frac{\rho(s)\, \bar{\pi}_1(a^1 \mid s)}{d^\rho(s, a)},
\end{equation*}
and symmetrically for Player~1's deviations against a fixed $\bar{\pi}_2: \Scal \to \Delta(\Acal_2)$. When $\bar{\pi}_1 \in \Pi_1$ is prescriptive, $\bar{\pi}_1(a^1 \mid s)$ reduces to $\bar{\pi}_1(a^1 \mid c, x^1)$.
\end{lemma}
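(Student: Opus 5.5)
The plan is to prove the two equalities by sandwiching both suprema between the same quantity
\[
M \defeq \max_{s,a} \frac{\rho(s)\,\bar{\pi}_1(a^1\mid s)}{d^\rho(s,a)}.
\]
We show that each supremum is at most $M$, and that the prescriptive supremum already attains $M$. The ordering $\sup_{\Pi_2} \le \sup_{\Pi_2^{\text{full}}}$ is immediate from the embedding $\nu(a^2\mid s)\defeq\nu(a^2\mid c,x^2)$. That embedding turns every prescriptive deviator into a fully observed one with the identical induced density $d^{\bar\pi_1,\nu}$.

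\textbf{Upper bound.} For any $\nu\in\Pi_2^{\text{full}}$ and any $(s,a)$ we have $\nu(a^2\mid s)\le 1$. Hence $d^{\bar\pi_1,\nu}(s,a)\le \rho(s)\bar\pi_1(a^1\mid s)$, and the ratio is at most $M$. The same pointwise bound therefore holds after the max over $(s,a)$ and the sup over $\nu$. This must be checked under the stated convention. If $d^\rho(s,a)=0$ and the numerator $\rho(s)\bar\pi_1(a^1\mid s)$ is positive, then $M=+\infty$ and the bound is trivial. If instead the numerator is $0$, the deviator's density is also $0$, and $0/0=0$ is consistent on both sides. So the inequality survives the extended-real convention.

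\textbf{Attainment by a prescriptive deviator.} Let $(s^\dagger,a^\dagger)$ with $s^\dagger=(c^\dagger,x^{1\dagger},x^{2\dagger})$ and $a^\dagger=(a^{1\dagger},a^{2\dagger})$ be a maximiser of the ratio defining $M$; it exists since $\Scal\times\Acal$ is finite. Define the deterministic prescriptive policy $\nu^\dagger\in\Pi_2$ by $\nu^\dagger(a^{2\dagger}\mid c^\dagger,x^{2\dagger})=1$, with arbitrary choices at all other local observations. Then
\[
d^{\bar\pi_1,\nu^\dagger}(s^\dagger,a^\dagger)=\rho(s^\dagger)\,\bar\pi_1(a^{1\dagger}\mid s^\dagger),
\]
so the max over $(s,a)$ for this $\nu^\dagger$ is at least $M$, including the $+\infty$ case. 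The key observation is that the criterion is pointwise in a single tuple. The deviator only needs to commit to one action at one local observation $(c^\dagger,x^{2\dagger})$, which it observes, and it never needs $x^{1}$. Combining this with the upper bound gives $M\le\sup_{\Pi_2}\le\sup_{\Pi_2^{\text{full}}}\le M$.

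The symmetric statement for Player~1 against a fixed $\bar\pi_2$ follows by the same argument with roles exchanged: take a maximiser and let the deviator put unit mass on $a^{1\dagger}$ at $(c^\dagger,x^{1\dagger})$. For prescriptive $\bar\pi_1$, the reduction of $\bar\pi_1(a^1\mid s)$ to $\bar\pi_1(a^1\mid c,x^1)$ is just the definition of the embedding. No earlier result is needed.

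The only delicate point is the bookkeeping with $d^\rho$ possibly zero or not of product form. Neither affects the argument, since we never factor $d^\rho$ and the convention is monotone in the numerator.
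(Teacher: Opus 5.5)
Your proposal is correct and follows essentially the same route as the paper. You bound every ratio pointwise by $\rho(s)\bar{\pi}_1(a^1\mid s)/d^\rho(s,a)$ using $\nu(a^2\mid\cdot)\le 1$, with the same zero-denominator case analysis. You then attain the maximum with the deterministic prescriptive deviator $\nu^\dagger(\cdot\mid c^\dagger,x^{2\dagger})=\delta_{a^{2\dagger}}$ and close the sandwich via $\Pi_2\subseteq\Pi_2^{\text{full}}$.
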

\begin{proof}
Fix a reference policy $\bar{\pi}_1: \Scal \to \Delta(\Acal_1)$ and set
\begin{equation*}
g(s, a) \defeq \frac{\rho(s)\, \bar{\pi}_1(a^1 \mid s)}{d^\rho(s, a)}, \qquad G \defeq \max_{s \in \Scal, a \in \Acal} g(s, a),
\end{equation*}
where the maximum is attained because $\Scal \times \Acal$ is finite. For \emph{any} $\nu$, prescriptive or full-information, the induced density factorises as $d^{\bar{\pi}_1, \nu}(s, a) = \rho(s) \bar{\pi}_1(a^1 \mid s) \nu(a^2 \mid \cdot)$ --- note that no property of the reference policy beyond this factorisation is used --- so
\begin{equation}
\frac{d^{\bar{\pi}_1, \nu}(s, a)}{d^\rho(s, a)} \;\le\; g(s, a) \;\le\; G.
\end{equation}
Writing $N \defeq \rho(s)\bar{\pi}_1(a^1 \mid s)$, $D \defeq d^\rho(s,a)$ and $t \defeq \nu(a^2 \mid \cdot) \in [0,1]$, this holds in each of the three cases separately: if $D > 0$ the ratio is $g(s,a)\,t \le g(s,a)$; if $D = 0$ and $Nt > 0$ then $N > 0$, so $g(s,a) = G = +\infty$; and if $D = 0$ and $Nt = 0$ the ratio is $0$ by the convention $0/0 = 0$. Taking $\max_{s,a}$ and then the supremum over either policy class gives the upper bound $G$ for both.

For the matching lower bound, let $(s^\dagger, a^\dagger) = (c^\dagger, x^{1\dagger}, x^{2\dagger}, a^{1\dagger}, a^{2\dagger})$ attain $G$, and define $\nu^\dagger$ by
\begin{equation*}
\nu^\dagger(\cdot \mid c^\dagger, x^{2\dagger}) = \delta_{a^{2\dagger}}, \qquad \nu^\dagger(\cdot \mid c, x^2) \text{ arbitrary for } (c, x^2) \ne (c^\dagger, x^{2\dagger}).
\end{equation*}
Crucially $\nu^\dagger \in \Pi_2$: it conditions only on the local observation $(c, x^2)$, and its behaviour at the hidden types $x^1 \ne x^{1\dagger}$ that share the observation $(c^\dagger, x^{2\dagger})$ is irrelevant, because the criterion is a maximum over a \emph{single} state-action tuple. Then
\begin{equation*}
\sup_{\nu \in \Pi_2} \max_{s, a} \frac{d^{\bar{\pi}_1, \nu}(s, a)}{d^\rho(s, a)} \;\ge\; \frac{d^{\bar{\pi}_1, \nu^\dagger}(s^\dagger, a^\dagger)}{d^\rho(s^\dagger, a^\dagger)} = g(s^\dagger, a^\dagger) \cdot 1 = G.
\end{equation*}
Hence the prescriptive supremum equals $G$. Since $\Pi_2 \subseteq \Pi_2^{\text{full}}$, the full-information supremum is sandwiched between $G$ and $G$ as well. If $G = +\infty$ the same $\nu^\dagger$ makes both sides $+\infty$. The argument for Player~1's deviations against a fixed $\bar{\pi}_2: \Scal \to \Delta(\Acal_2)$ is identical after exchanging the roles of the players, with $\mu^\dagger(\cdot \mid c^\dagger, x^{1\dagger}) = \delta_{a^{1\dagger}}$. Note that no property of $d^\rho$ beyond nonnegativity was used, and nothing about which coordinates of $s$ the reference policy is measurable with respect to; this is what makes the statement cover fully observed reference policies as well.
\end{proof}

\subsubsection{An integrated coverage criterion}

The following proposition is the robustness check referred to in Section~\ref{sec:coverage_analysis}: it shows that the equality of Lemma~\ref{lem:fixed_anchor} is not an artifact of the pointwise $L_\infty$ criterion.

\begin{proposition}[The equality at a common reference policy is not an artifact of the pointwise criterion]
\label{prop:integrated_equality}
For a deviation class $P$ define the integrated coefficient
$C_\Sigma(\bar{\pi}_1; P) \defeq \sup_{\nu \in P} \sum_{s, a} d^{\bar{\pi}_1, \nu}(s, a) / d^\rho(s, a)$,
the sum running over all of $\Scal \times \Acal$ under Convention~1 of Appendix~\ref{proof:finite_rate}, so that target mass placed off $\mathrm{supp}(d^\rho)$ makes the coefficient $+\infty$. For an arbitrary behavior density $d^\rho$, not necessarily of decentralised product form, the two coefficients are infinite simultaneously, and whenever they are finite
\begin{equation*}
C_\Sigma(\bar{\pi}_1; \Pi_2) \;\le\; C_\Sigma(\bar{\pi}_1; \Pi_2^{\text{full}}) \;\le\; \min\{|\Xcal_1|, |\Acal_2|\} \cdot C_\Sigma(\bar{\pi}_1; \Pi_2),
\end{equation*}
where $\min\{|\Xcal_1|, |\Acal_2|\}$ cannot be improved: for every $k \ge 2$ there are instances with $|\Xcal_1| = |\Acal_2| = k$ and a family of \emph{full-support} behavior densities along which the ratio tends to $k$. However, whenever the logged behavior is itself decentralised, $d^\rho(s, a) = \rho(s)\mu_1(a^1 \mid c, x^1)\mu_2(a^2 \mid c, x^2)$, the two coincide:
\begin{equation*}
C_\Sigma(\bar{\pi}_1; \Pi_2) = C_\Sigma(\bar{\pi}_1; \Pi_2^{\text{full}}) \qquad \text{for every reference policy } \bar{\pi}_1 \text{ and every } \rho, \mu_1, \mu_2.
\end{equation*}
\end{proposition}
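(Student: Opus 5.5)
The plan is to make both suprema explicit by exploiting linearity in $\nu$, compare the two resulting closed forms with elementary inequalities, and then treat the decentralised case and sharpness separately. For a state $s=(c,x^1,x^2)$ and $a^2\in\Acal_2$ set
\begin{equation*}
w(s,a^2) \defeq \sum_{a^1\in\Acal_1} \frac{\rho(s)\,\bar{\pi}_1(a^1\mid s)}{d^\rho(s,a^1,a^2)} \in [0,+\infty],
\end{equation*}
read under the stated convention. Then $\sum_{s,a} d^{\bar{\pi}_1,\nu}(s,a)/d^\rho(s,a) = \sum_s \sum_{a^2} \nu(a^2\mid\cdot)\, w(s,a^2)$.

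\textbf{Closed forms for the two suprema.} The objective is linear in $\nu$, and a full-information $\nu$ may choose separately at each $s$. Putting mass on a maximiser therefore gives
\begin{equation*}
C_\Sigma(\bar{\pi}_1;\Pi_2^{\text{full}})=\sum_s \max_{a^2} w(s,a^2).
\end{equation*}
A prescriptive $\nu$ must choose one action per observation $(c,x^2)$, so
\begin{equation*}
C_\Sigma(\bar{\pi}_1;\Pi_2)=\sum_{c,x^2}\max_{a^2}\sum_{x^1} w(c,x^1,x^2,a^2).
\end{equation*}

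\textbf{The inequality chain and simultaneous infiniteness.} The left inequality is immediate from $\Pi_2\subseteq\Pi_2^{\text{full}}$. For the right inequality, fix $(c,x^2)$. Bounding each term by the largest one gives
\begin{equation*}
\sum_{x^1}\max_{a^2} w \le |\Xcal_1|\,\max_{a^2}\sum_{x^1} w .
\end{equation*}
Bounding the maximum by the sum gives
\begin{equation*}
\sum_{x^1}\max_{a^2} w \le \sum_{a^2}\sum_{x^1} w \le |\Acal_2|\,\max_{a^2}\sum_{x^1} w .
\end{equation*}
Summing over $(c,x^2)$ yields the factor $\min\{|\Xcal_1|,|\Acal_2|\}$. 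For simultaneous infiniteness: the full coefficient is infinite iff some $w(s^\dagger,a^{2\dagger})=+\infty$. In that case the prescriptive $\nu^\dagger$ placing all mass on $a^{2\dagger}$ at $(c^\dagger,x^{2\dagger})$, as in the proof of Lemma~\ref{lem:fixed_anchor_precise}, already makes the prescriptive sum infinite. The converse follows from the left inequality.

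\textbf{Decentralised logging.} When $d^\rho=\rho\,\mu_1\mu_2$, the factor $\rho(s)$ cancels on $\mathrm{supp}(\rho)$, and
\begin{equation*}
w(s,a^2)=\frac{h(s)}{\mu_2(a^2\mid c,x^2)}, \qquad h(s)\defeq \sum_{a^1}\frac{\bar{\pi}_1(a^1\mid s)}{\mu_1(a^1\mid c,x^1)}.
\end{equation*}
States with $\rho(s)=0$ contribute $0$ by the $0/0$ convention. Consequently any $a^2$ minimising $\mu_2(\cdot\mid c,x^2)$ maximises $w(c,x^1,x^2,\cdot)$ simultaneously for every $x^1$. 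Hence $\max_{a^2}$ commutes with $\sum_{x^1}$ at each $(c,x^2)$, and the two closed forms coincide. The infinite cases, where some $\mu_i$ vanishes where $\bar{\pi}_1$ or $\nu$ has mass, are already covered by the simultaneous-infiniteness step.

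\textbf{Sharpness.} Take one context, one value of $x^2$ and a single action for Player~1. Let $\Xcal_1=\Acal_2=\{1,\dots,k\}$, let $\rho$ be uniform on $x^1$, and set $d^\rho(x^1,a^2)=\rho(x^1)\,q_\varepsilon(a^2\mid x^1)$, where
\begin{equation*}
q_\varepsilon(x^1\mid x^1)=\varepsilon, \qquad q_\varepsilon(a^2\mid x^1)=\frac{1-\varepsilon}{k-1}\ \text{for } a^2\neq x^1 .
\end{equation*}
This density has full support and is not of product form. Here $w=1/q_\varepsilon$, so the full coefficient is $k/\varepsilon$ and the prescriptive one is $1/\varepsilon+(k-1)^2/(1-\varepsilon)$. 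Their ratio tends to $k$ as $\varepsilon\to0$.

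I expect the main obstacle to be bookkeeping rather than ideas. The work lies in checking that the closed forms and the commutation argument remain valid when $w$ takes the value $+\infty$, or when $\rho(s)=0$, under the $0/0$ and $+\infty$ conventions. I would handle this by splitting off the infinite case first, exactly as in the proof of Lemma~\ref{lem:fixed_anchor_precise}.
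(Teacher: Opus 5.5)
Your proposal is correct and follows the paper's proof essentially step for step: the same closed forms obtained from linearity in $\nu$, the same unit-mass prescriptive witness for simultaneous infiniteness, the same factorisation $w(s,a^2)=h(s)/\mu_2(a^2\mid c,x^2)$ with a common least-logged maximiser in the decentralised case, and the same diagonal witness family for sharpness. The differences are minor. Your $|\Acal_2|$ factor comes from bounding the maximum by the sum, whereas the paper groups hidden types by their selected action, which gives the slightly finer $|\mathrm{image}(j)|$. Also, your formula $k/\varepsilon$ for the full coefficient needs $\varepsilon \le 1/k$ so that the diagonal entry dominates; this is harmless since you let $\varepsilon \to 0$.
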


\begin{proof}
Fix any reference policy $\bar{\pi}_1: \Scal \to \Delta(\Acal_1)$, prescriptive or fully observed, and write $g(s,a) = \rho(s)\bar{\pi}_1(a^1 \mid s)/d^\rho(s,a)$, read under Convention~1 and with the mixed-pair conditioning of Lemma~\ref{lem:fixed_anchor}. Both coefficients are $+\infty$ exactly when some $(s,a)$ has $\rho(s)\bar{\pi}_1(a^1 \mid s) > 0 = d^\rho(s,a)$: a prescriptive $\nu$ can place unit mass on that $a^2$ at $(c, x^2)$, and a full-information $\nu$ can do no more, so the two are infinite together and the asserted identities hold trivially. Assume henceforth that both are finite, so that $g$ is real-valued wherever the reference policy puts mass. Since the objective $\sum_{s,a} g(s,a) \nu(a^2 \mid \cdot)$ is linear in $\nu$ over a product of simplices, the supremum is attained at a deterministic $\nu$, and grouping the sum by the coordinates each class may condition on gives the closed forms
\begin{align}
C_\Sigma(\bar{\pi}_1; \Pi_2) &= \sum_{c, x^2} \max_{a^2} \sum_{x^1} \sum_{a^1} g(c, x^1, x^2, a^1, a^2), \\
C_\Sigma(\bar{\pi}_1; \Pi_2^{\text{full}}) &= \sum_{c, x^1, x^2} \max_{a^2} \sum_{a^1} g(c, x^1, x^2, a^1, a^2).
\end{align}
Since a maximum of a sum never exceeds the sum of the maxima, $C_\Sigma(\bar{\pi}_1; \Pi_2) \le C_\Sigma(\bar{\pi}_1; \Pi_2^{\text{full}})$, and writing $M(x^1) = \max_{a^2} \sum_{a^1} g$ and $S = \max_{a^2} \sum_{x^1, a^1} g$ we have $M(x^1) \le S$ for each $x^1$, so the ratio is at most $|\Xcal_1|$. It is also at most $|\Acal_2|$: choosing $j(x^1) \in \argmax_{a^2} \sum_{a^1} g$ and grouping the hidden types by their selected action,
\begin{equation*}
\sum_{x^1} M(x^1) = \sum_{a^2 \in \mathrm{image}(j)} \; \sum_{x^1: j(x^1) = a^2} M(x^1) \le |\mathrm{image}(j)| \cdot S \le \min\{|\Xcal_1|, |\Acal_2|\} \cdot S,
\end{equation*}
since for each fixed $a^2$ the inner sum is at most $S$ by definition of $S$.

The factor $\min\{|\Xcal_1|, |\Acal_2|\}$ cannot be improved. Take $|\Ccal| = 1$, $|\Xcal_2| = 1$, $|\Xcal_1| = k \ge 2$, $\Acal_1 = \{\alpha\}$ a singleton (so $\bar{\pi}_1$ is forced), $\Acal_2 = [k]$, and $\rho$ uniform on $\Xcal_1$, so that $\min\{|\Xcal_1|, |\Acal_2|\} = k$. For $t \in (0, 1/k]$ let the logged density concentrate on the graph $a^2 = x^1$ with weight $t$ and spread the rest uniformly,
\begin{equation}
d^\rho_t(x^1, \alpha, a^2) = \frac{1}{k}\left[ t\, \mathbf{1}\{a^2 = x^1\} + \frac{1 - t}{k-1}\, \mathbf{1}\{a^2 \ne x^1\} \right],
\label{eq:witness_logging}
\end{equation}
which is a probability distribution with \emph{full support}, since $t + (k-1)\frac{1-t}{k-1} = 1$ for each $x^1$. Here $g = \rho(x^1)\bar{\pi}_1(\alpha)/d^\rho_t$, so $\sum_{a^1} g$ equals $1/t$ at $a^2 = x^1$ and $(k-1)/(1-t)$ otherwise. The restriction $t \le 1/k$ is exactly the condition $1/t \ge (k-1)/(1-t)$ that makes the diagonal entry the largest, whence
\begin{equation*}
C_\Sigma(\bar{\pi}_1; \Pi_2^{\text{full}}) = \sum_{x^1} \max_{a^2} \sum_{a^1} g = \frac{k}{t}, \qquad
C_\Sigma(\bar{\pi}_1; \Pi_2) = \max_{a^2} \sum_{x^1} \sum_{a^1} g = \frac{1}{t} + \frac{(k-1)^2}{1-t},
\end{equation*}
the latter because the inner sum takes the same value for every $a^2$. Therefore
\begin{equation}
\frac{C_\Sigma(\bar{\pi}_1; \Pi_2^{\text{full}})}{C_\Sigma(\bar{\pi}_1; \Pi_2)} = \frac{k}{1 + \dfrac{t (k-1)^2}{1-t}} \; \longrightarrow \; k = \min\{|\Xcal_1|, |\Acal_2|\} \quad \text{as } t \to 0^+,
\label{eq:witness_ratio}
\end{equation}
strictly below $k$ for every $t \in (0, 1/k]$, and tending to $k$.

Now impose $d^\rho(s,a) = \rho(s)\mu_1(a^1 \mid c, x^1)\mu_2(a^2 \mid c, x^2)$. On $\mathrm{supp}(d^\rho)$ the prior cancels and
\begin{equation*}
g(s, a) = \frac{\bar{\pi}_1(a^1 \mid s)}{\mu_1(a^1 \mid c, x^1)\, \mu_2(a^2 \mid c, x^2)},
\end{equation*}
so, setting $G(s) \defeq \sum_{a^1: \mu_1(a^1 \mid c, x^1) > 0} \bar{\pi}_1(a^1 \mid s)/\mu_1(a^1 \mid c, x^1) \ge 0$ and $h(c, x^2, a^2) \defeq \mathbf{1}\{\mu_2(a^2 \mid c, x^2) > 0\}/\mu_2(a^2 \mid c, x^2)$, the inner sum factorises as
\begin{equation}
\sum_{a^1} g(c, x^1, x^2, a^1, a^2) = \mathbf{1}\{\rho(c, x^1, x^2) > 0\} \, G(s) \, h(c, x^2, a^2).
\label{eq:rank_one}
\end{equation}
The dependence on $a^2$ enters only through $h$, which does not depend on $x^1$. Hence for every $x^1$ the maximisers over $a^2$ are the same --- namely the least-logged actions $\argmin_{a^2: \mu_2 > 0} \mu_2(a^2 \mid c, x^2)$ --- and, because $\mathbf{1}\{\rho > 0\} G(s) \ge 0$ does not depend on $a^2$, it can be taken outside the maximization over $a^2$, 
\begin{equation*}
\sum_{x^1} \max_{a^2} \sum_{a^1} g = \Bigl[ \sum_{x^1} \mathbf{1}\{\rho > 0\} G(s) \Bigr] \max_{a^2} h(c, x^2, a^2) = \max_{a^2} \sum_{x^1} \sum_{a^1} g
\end{equation*}
for every $(c, x^2)$. Summing over $(c, x^2)$ gives the claimed equality.
\end{proof}

\subsubsection{Proof of Proposition~\ref{prop:separation}}
\label{proof:separation}

Proposition~\ref{prop:separation} is the case $\mu(0) = 1/2$, $\mu(j) = 1/(2K)$ of the following statement, which we prove.
\begin{proposition}[Coverage Separation for General Symmetric Loggers]
\label{prop:separation_general}
For every integer $K \ge 3$ there is a public--private zero-sum game, constructed in the proof below, whose prescriptive and fully observed games each admit a unique Nash equilibrium, and whose equilibria do not depend on the logged behavior. For \emph{any} symmetric, type-independent behavior policy $\mu_1 = \mu_2 = \mu$ with full support on $\{0, 1, \dots, K\}$,
\begin{equation}
C^\star_{\text{Pres}} = \frac{1}{\mu(0)\, \min_{a} \mu(a)}, \qquad
C^\star_{\text{Full}} = \frac{1}{\min_{j \in [K]} \mu(j) \cdot \min_{a} \mu(a)}, \qquad
\frac{C^\star_{\text{Full}}}{C^\star_{\text{Pres}}} = \frac{\mu(0)}{\min_{j \in [K]} \mu(j)}.
\label{eq:alignment}
\end{equation}
In particular, $\mu(0) = 1/2$ and $\mu(j) = 1/(2K)$ give $C^\star_{\text{Pres}} = 4K$ and $C^\star_{\text{Full}} = 4K^2$, an improvement by the factor $K$; while $\mu(0) = 1/(K^2+1)$ and $\mu(j) = K/(K^2+1)$ give $C^\star_{\text{Pres}} = (K^2+1)^2$ and $C^\star_{\text{Full}} = (K^2+1)^2/K$, a deterioration by the same factor. Consequently neither $C^\star_{\text{Pres}} \le C^\star_{\text{Full}}$ nor its reverse holds for every behavior distribution.
\end{proposition}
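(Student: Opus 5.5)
The plan is to build a ``guess the opponent's type'' game in which each player has a \emph{strictly dominant} strategy in both information structures. Such a game has unique equilibria that do not depend on the logger, and both coefficients then reduce, via Lemma~\ref{lem:fixed_anchor_precise}, to closed-form ratios.

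\textbf{Construction.} Take $|\Ccal| = 1$, $\Xcal_1 = \Xcal_2 = [K]$ with $\rho$ uniform (hence full support) on $[K]^2$, and $\Acal_1 = \Acal_2 = \{0, 1, \dots, K\}$. Action $j \in [K]$ is a guess of the opponent's type, and action $0$ is a safe option. Set
\begin{equation*}
r(s, a) = u(x^2, a^1) - u(x^1, a^2), \qquad u(x, j) = \mathbf{1}\{j = x\} \text{ for } j \in [K], \qquad u(x, 0) = v_0,
\end{equation*}
with a constant $v_0 \in (1/K, 1)$, for instance $v_0 = 1/2$ since $K \ge 3$. We then rescale, e.g.\ by $1/2$, if needed so that $r \in [-1, 1]$. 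Player~1's payoff depends on $a^1$ only through $u(x^2, a^1)$, and player~2's only through $-u(x^1, a^2)$, so each player's problem decouples from the opponent's strategy.

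\textbf{Equilibria.} In the fully observed game, $a^1 = x^2$ gives $1 > v_0 > 0$, so $\pi_{1,\star}(\cdot \mid s) = \delta_{x^2}$ is strictly dominant; symmetrically $\pi_{2,\star}(\cdot \mid s) = \delta_{x^1}$. In the prescriptive game, player~1 does not see $x^2$. Since $\rho$ is uniform, each guess $j$ has conditional expected value $1/K < v_0$, so $\delta_0$ is strictly dominant at every $(c, x^1)$, and likewise $\delta_0$ for player~2. Strict dominance gives uniqueness of the Nash equilibrium in each game. Hence the minimum over $\Ecal_{\mathrm{Pres}}$ and over $\Ecal_{\mathrm{Full}}$ is taken over a single point, and neither equilibrium involves $\mu$.

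\textbf{Coefficients.} Now let $d^\rho(s,a) = \rho(s)\mu(a^1)\mu(a^2)$ with $\mu$ of full support. By Lemma~\ref{lem:fixed_anchor_precise}, the player-2-deviation term equals $\max_{s,a} \pi_{1,\star}(a^1 \mid s) / (\mu(a^1)\mu(a^2))$, with the maximum over $a^2$ contributing the factor $1/\min_a \mu(a)$.
\begin{itemize}
\item In the prescriptive case, $\pi_{1,\star} = \delta_0$, so the term is $1/(\mu(0)\min_a \mu(a))$.
\item In the full case, $\pi_{1,\star} = \delta_{x^2}$, and $x^2$ ranges over all of $[K]$ with positive probability, so the term is $\max_j 1/(\mu(j)\min_a \mu(a)) = 1/(\min_{j\in[K]}\mu(j)\min_a\mu(a))$.
\end{itemize}
By symmetry the player-1-deviation term is identical, so the outer $\max$ in \eqref{eq:prescriptive_c} and \eqref{eq:standard_c} changes nothing. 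This yields \eqref{eq:alignment} and the ratio $\mu(0)/\min_j \mu(j)$.

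\textbf{Specialisations.} Plugging in $\mu(0) = 1/2$, $\mu(j) = 1/(2K)$ gives $\min_a \mu = 1/(2K)$, so $C^\star_{\text{Pres}} = 4K$ and $C^\star_{\text{Full}} = 4K^2$. Plugging in $\mu(0) = 1/(K^2+1)$, $\mu(j) = K/(K^2+1)$ gives $\min_a \mu = \mu(0)$, so $C^\star_{\text{Pres}} = (K^2+1)^2$ and $C^\star_{\text{Full}} = (K^2+1)^2/K$.

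\textbf{Main obstacle.} The delicate step is guaranteeing uniqueness of the equilibria, so that the $\min$ over $\Ecal$ cannot select a more favourable equilibrium. Without uniqueness, a mixed equilibrium might put mass on better-covered actions and change the coefficient. The separable structure above addresses this by making the equilibria strictly dominant rather than merely minimax. I would double-check two points: the strict inequality $1/K < v_0$, which is what uses $K \ge 2$ and holds comfortably for $K \ge 3$; and that rescaling $r$ into $[-1, 1]$ preserves strict dominance, which it does because rescaling by a positive constant preserves all strict inequalities.
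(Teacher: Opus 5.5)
Your proposal is correct and follows the paper's proof essentially step for step. With $v_0 = 1/2$ your game is exactly the paper's construction: the paper's $\tfrac12 u$ equals $1/2$ at the safe action and $1$ at a correct guess. You also use the same strict-dominance argument for uniqueness and $\mu$-independence, the same cancellation of the prior via Lemma~\ref{lem:fixed_anchor_precise}, and the same symmetry reduction of the outer maximum. The differences are cosmetic: you allow any $v_0 \in (1/K, 1)$, which also covers $K = 2$, and no rescaling is needed since $u \in [0,1]$ already gives $r \in [-1,1]$.
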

\begin{proof}
Fix $K \ge 3$. Let $|\Ccal| = 1$, let $X^1, X^2$ be independent and uniform on $[K]$ so that $\rho(x^1, x^2) = 1/K^2$, and let $\Acal_1 = \Acal_2 = \{0, 1, \dots, K\}$. With $u(a, x) = \mathbf{1}\{a = 0\} + 2 \cdot \mathbf{1}\{a = x\}$ we have $u \in \{0, 1, 2\}$, hence
\begin{equation*}
r(x^1, x^2, a^1, a^2) = \tfrac{1}{2}\left[ u(a^1, x^2) - u(a^2, x^1) \right] \in [-1, 1],
\end{equation*}
as required by the model. The logged behavior is $\mu_i(0 \mid x^i) = \mu(0)$ and $\mu_i(j \mid x^i) = \mu(j)$ for $j \in [K]$, with $\mu(0) + \sum_{j \in [K]} \mu(j) = 1$ and $\mu > 0$ everywhere, so $d^\rho$ has full support. We keep $\mu$ general throughout and specialise only at the end.

\textbf{Step 1: both games have a unique equilibrium, independent of $\mu$.}
The reward is additively separable, $r = \tfrac{1}{2} u(a^1, x^2) - \tfrac{1}{2} u(a^2, x^1)$, so each player's optimal action is independent of the opponent's strategy and it suffices to maximise the player's own term. In particular the equilibria depend only on $u$, $r$ and $\rho$, never on the behavior policy.

Under prescriptive execution Player~1 observes only $(c, x^1)$, and since $X^2 \perp X^1$ is uniform on $[K]$,
\begin{equation*}
\EE\left[ u(0, X^2) \right] = 1, \qquad \EE\left[ u(j, X^2) \right] = 2 \, \PP(X^2 = j) = \frac{2}{K} \quad \text{for } j \in [K].
\end{equation*}
Because $K \ge 3$ gives $2/K < 1$, the action $a^1 = 0$ is the strict maximiser at every type, hence strictly dominant in $\Pi_1$. Player~2 minimises $r$, equivalently maximises $u(a^2, X^1)$, so by the same computation $a^2 = 0$ is strictly dominant in $\Pi_2$. The prescriptive game therefore has the unique Nash equilibrium $\pi_{\star}^{\text{Pres}} = (\delta_0, \delta_0)$, and strict dominance rules out mixed equilibria as well.

Under full information Player~1 observes $x^2$ and compares $u(0, x^2) = 1$, $u(x^2, x^2) = 2$ (note $x^2 \in [K]$, so $x^2 \ne 0$), and $u(j, x^2) = 0$ for $j \notin \{0, x^2\}$. Thus $a^1 = x^2$ is strictly dominant, and symmetrically $a^2 = x^1$; the fully observed game has the unique equilibrium $\pi_{\star}^{\text{Full}} = (\delta_{x^2}, \delta_{x^1})$.

\textbf{Step 2: a closed form for both coefficients.}
Because $d^\rho(s, a) = \rho(s) \mu_1(a^1 \mid x^1) \mu_2(a^2 \mid x^2)$, the prior cancels in every density ratio:
\begin{equation}
\frac{d^{\pi_{1, \star}, \nu}(s, a)}{d^\rho(s, a)} = \frac{\pi_{1, \star}(a^1 \mid \cdot)}{\mu_1(a^1 \mid x^1)} \cdot \frac{\nu(a^2 \mid \cdot)}{\mu_2(a^2 \mid x^2)}.
\end{equation}
Taking the supremum over $\nu$ and the maximum over $(s, a)$, and using that for each fixed $(x^1, x^2)$ the deviator may place unit mass on the least-covered action, gives
\begin{equation}
\sup_{\nu} \max_{s, a} \frac{d^{\pi_{1, \star}, \nu}(s, a)}{d^\rho(s, a)} = \max_{x^1, x^2} \left\{ \left[ \max_{a^1} \frac{\pi_{1, \star}(a^1 \mid \cdot)}{\mu_1(a^1 \mid x^1)} \right] \cdot \frac{1}{\min_{a^2} \mu_2(a^2 \mid x^2)} \right\},
\label{eq:sep_factorisation}
\end{equation}
where by Lemma~\ref{lem:fixed_anchor} it is immaterial whether the supremum is taken over $\Pi_2$ or over $\Pi_2^{\text{full}}$, at either equilibrium; for the fully observed equilibrium policy $\pi_{1,\star}^{\text{Full}} \notin \Pi_1$ this is the fully observed case of that lemma. Since $\mu$ is type-independent, the second factor equals $1/\min_a \mu(a)$ for every $x^2$.

The two equilibrium policies differ only in which action the first bracket evaluates $\mu_1$ at. At the prescriptive equilibrium $\pi_{1, \star} = \delta_0$ it is $1/\mu(0)$. At the fully observed equilibrium $\pi_{1, \star}(a^1 \mid s) = \mathbf{1}\{a^1 = x^2\}$ it is $1/\mu(x^2)$, maximised over $x^2 \in [K]$ at $1/\min_{j \in [K]} \mu(j)$. Neither expression depends on $x^1$.

The game is symmetric under exchanging the players, so the second argument of the $\max\{\cdot, \cdot\}$ in \eqref{eq:prescriptive_c} and \eqref{eq:standard_c} takes the same value as the first, and both equilibria are unique, so each outer minimisation is over a singleton. Therefore
\begin{equation*}
C^\star_{\text{Pres}} = \frac{1}{\mu(0) \min_a \mu(a)}, \qquad
C^\star_{\text{Full}} = \frac{1}{\min_{j \in [K]} \mu(j) \cdot \min_a \mu(a)},
\end{equation*}
and the common factor $1/\min_a \mu(a)$ cancels in the ratio, leaving $C^\star_{\text{Full}} / C^\star_{\text{Pres}} = \mu(0) / \min_{j \in [K]} \mu(j)$. This is \eqref{eq:alignment}.

\textbf{Step 3: the two instantiations.}
Take $\mu(0) = 1/2$ and $\mu(j) = 1/(2K)$, a valid policy since $1/2 + K/(2K) = 1$. Then $\min_a \mu(a) = 1/(2K)$, so $C^\star_{\text{Pres}} = 1/[(1/2)(1/(2K))] = 4K$ and $C^\star_{\text{Full}} = 1/[(1/(2K))(1/(2K))] = 4K^2$, a ratio of $K$.

Now take $\mu(0) = 1/(K^2+1)$ and $\mu(j) = K/(K^2+1)$, valid since $1/(K^2+1) + K^2/(K^2+1) = 1$. Here $\mu(0) < \mu(j)$ for $K \ge 2$, so $\min_a \mu(a) = \mu(0) = 1/(K^2+1)$, giving $C^\star_{\text{Pres}} = (K^2+1)^2$ and $C^\star_{\text{Full}} = (K^2+1)^2/K$, a ratio of $1/K$.

 Hence $C^\star_{\text{Full}}/C^\star_{\text{Pres}}$ equals $K$ in the first case and $1/K$ in the second, so neither ordering holds universally.
\end{proof}

\subsection{Proof of Theorem~\ref{thm:finite_rate}}
\label{proof:finite_rate}
\begin{proof}
 \textbf{Convention 1 (density ratios).} For a target distribution $q$ we set $q(s,a)/d^\rho(s,a) \defeq +\infty$ if $q(s,a) > 0 = d^\rho(s,a)$ and $\defeq 0$ if $q(s,a) = d^\rho(s,a) = 0$. Finiteness of $C^\star_{\text{Pres}}$ then implies $d^{\pi_{1,\star}, \nu} \ll d^\rho$ for every $\nu \in \Pi_2$, and symmetrically, so sums over $\Scal \times \Acal$ may be restricted to $\mathrm{supp}(d^\rho)$.

\textbf{Convention 2 (unobserved contexts).} If $N(c) = 0$, then $\wh{b}(\cdot \mid c)$ is the fallback $b_0(\cdot \mid c)$ of Section~\ref{sec:finite_learning}.

Throughout, $\iota = \log(4|\Scal||\Acal|/\delta) \ge \log 4 > 1$.

Note that here we treat the logged behavior distribution \(d^\rho\) as fixed and ask whether it covers the equilibrium and unilateral deviations needed for offline learning. This is complementary to work that actively designs data-collection policies for lower-variance, safer, or more robust policy evaluation \citep{liu2024efficient,liu2024doubly,chen2024efficient,liu2024efficientmul,chen2026robust}; see also \citet{liu2025efficientrobust} for a broader treatment.

Before proving Theorem~\ref{thm:finite_rate}, we present and prove the following lemma.
\begin{lemma}[$L_1$ Concentration of Empirical Beliefs]
\label{lem: L1 concentration of empirical beliefs}
For a fixed public context $c \in \Ccal$, let $b(\cdot \mid c)$ be the true distribution over $\Xcal_1 \times \Xcal_2$ and $\wh{b}(\cdot \mid c)$ be the empirical distribution derived from $n = N(c)$ i.i.d. samples, where the statement is to be read \emph{conditionally on the realised context-count vector} $(N(c))_{c \in \Ccal}$; see the remark following the proof. For any $\delta' \in (0, 1)$, it holds with probability at least $1-\delta'$ that:

\begin{equation}
 \|b(\cdot \mid c) - \wh{b}(\cdot \mid c)\|_1 \le \Ecal_b(c) \defeq
 \begin{cases}
 \sqrt{\dfrac{|\Xcal_1| |\Xcal_2|}{n}} + \sqrt{\dfrac{2 \log(1/\delta')}{n}}, & n > 0, \\[10pt]
 2, & n = 0.
 \end{cases}
\end{equation}

\end{lemma}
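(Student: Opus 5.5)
The plan is to use the standard two-step argument for $L_1$ concentration of an empirical distribution on a finite alphabet: bound the expected $L_1$ error, then show concentration around that expectation with McDiarmid's bounded-differences inequality. The case $n = N(c) = 0$ needs no probability at all: any two distributions on $\Xcal_1 \times \Xcal_2$ satisfy $\|b - b_0\|_1 \le 2$. So assume $n > 0$.

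Conditionally on the realised count vector $(N(c))_{c}$, I first check that the $n$ types $(x^1, x^2)$ recorded in context $c$ are i.i.d.\ draws from $b(\cdot \mid c)$. The reason is that the samples are i.i.d.\ from $\rho$ (with actions from the behavior policy). Given which indices fall in context $c$, the private types at those indices are therefore independent, each with law $\rho(\cdot \mid c) = b(\cdot \mid c)$; the actions are marginalised out when forming $\wh b$. Write $m = |\Xcal_1||\Xcal_2|$ and $p_j = b(j \mid c)$. The proof then has two steps.

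\textbf{Step 1 (expectation).} For each cell $j$, $n\wh b(j) \sim \mathrm{Bin}(n, p_j)$. By Jensen,
\[
\EE|\wh b(j) - p_j| \le \sqrt{p_j(1-p_j)/n} \le \sqrt{p_j/n}.
\]
Summing over cells and applying Cauchy--Schwarz,
\[
\sum_j \sqrt{p_j} \le \sqrt{m \sum_j p_j} = \sqrt{m},
\]
which gives $\EE\|\wh b - b\|_1 \le \sqrt{m/n}$.

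\textbf{Step 2 (deviation).} View $f(Y_1, \dots, Y_n) = \|\wh b - b\|_1$ as a function of the $n$ i.i.d.\ samples. Replacing one sample moves mass $1/n$ out of one cell and into another, so by the triangle inequality $f$ changes by at most $2/n$. McDiarmid's inequality then gives
\[
\PP\bigl(f - \EE f \ge \epsilon\bigr) \le \exp\!\left(-\frac{2\epsilon^2}{n (2/n)^2}\right) = \exp\!\left(-\frac{n\epsilon^2}{2}\right).
\]
Setting the right-hand side equal to $\delta'$ yields $\epsilon = \sqrt{2\log(1/\delta')/n}$. Combining this with Step 1 gives exactly $\Ecal_b(c)$.

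The main obstacle is the conditioning in the setup step. We must justify that, given the counts, the within-context samples are i.i.d.\ from $b(\cdot\mid c)$ with a \emph{fixed} sample size $n$, so that McDiarmid applies with deterministic $n$. I would argue this via exchangeability: the joint law of the samples, conditioned on the vector of context labels, factorises across indices. Once this is in place, the remaining steps are routine.
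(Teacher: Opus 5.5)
Your proposal is correct and follows essentially the same route as the paper: the trivial bound $2$ when $N(c)=0$, McDiarmid with bounded differences $2/n$ around the mean, the expectation bound $\sqrt{|\Xcal_1||\Xcal_2|/n}$, and the same justification of the conditioning on context counts that the paper gives in its remark. The only difference is that you prove the expectation bound yourself (Jensen on each cell, then Cauchy--Schwarz), whereas the paper cites it as standard in the slightly sharper form $\sqrt{(|\Xcal_1||\Xcal_2|-1)/n}$.
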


\begin{proof}[Proof of Lemma \ref{lem: L1 concentration of empirical beliefs}]
Let $K = |\Xcal_1| |\Xcal_2|$ be the cardinality of the joint private state space. Consider the function $f(X_1, \dots, X_n) = \|b - \wh{b}\|_1$, where $X_i \in \{1, \dots, K\}$.

\textbf{Step 1: Sensitivity Analysis.} Changing a single sample $X_i$ to $X_i'$ alters the empirical frequency of at most two categories by $1/n$. Thus, the $L_1$ norm satisfies the bounded differences property with $c_i = 2/n$:
\begin{equation}
 |f(X_1, \dots, X_i, \dots, X_n) - f(X_1, \dots, X_i', \dots, X_n)| \le \frac{1}{n} + \frac{1}{n} = \frac{2}{n}.
\end{equation}

\textbf{Step 2: Concentration around the Expectation.} By McDiarmid's Inequality, for any $\epsilon > 0$:
\begin{equation}
 \P\left( f(X) - \E[f(X)] \ge \epsilon \right) \le \exp\left( -\frac{2\epsilon^2}{\sum_{i=1}^n c_i^2} \right) = \exp\left( -\frac{2\epsilon^2}{n(4/n^2)} \right) = \exp\left( -\frac{n\epsilon^2}{2} \right).
\end{equation}
 Taking $\epsilon = \sqrt{2 \log(1/\delta')/n}$ makes the tail bound equal to $\delta'$.

\textbf{Step 3: Bounding the Expected $L_1$ Deviation.} It is a standard result for empirical distributions on a discrete set of size $K$ that $\E[\|b - \wh{b}\|_1] \le \sqrt{\frac{K-1}{n}} \le \sqrt{\frac{K}{n}}$.

\textbf{Step 4: Exact High-Probability Bound.} Combining the expectation bound and the concentration term via the triangle inequality, we obtain the exact bound:
\begin{equation}
 \|b - \wh{b}\|_1 \le \sqrt{\frac{K}{n}} + \sqrt{\frac{2 \log(1/\delta')}{n}}.
\end{equation}
For $n = N(c) = 0$ no sampling has occurred and $\wh{b}(\cdot \mid c) = b_0(\cdot \mid c)$ by Convention~2; there the bound is the trivial one, $\|b - \wh{b}\|_1 \le 2$, which is why $\Ecal_b(c)$ is defined separately in that case. This serves as our exact definition for $\Ecal_b(c)$.
\end{proof}

 \begin{remark}[Removing the conditioning]
Conditional on the context counts $(N(c'))_{c' \in \Ccal}$, the type pairs observed at each context $c$ are i.i.d.\ draws from $b(\cdot \mid c)$, so the lemma applies conditionally. A union bound over $c \in \Ccal$ at level $\delta' = \delta/(4|\Ccal|)$ then holds for every realisation of the counts, and hence unconditionally.
\end{remark}

To ensure a global failure probability of at most $\delta$, we allocate $\delta/2$ to belief estimation and $\delta/2$ to the count-concentration event. Let $\iota = \log(4|\Scal||\Acal|/\delta)$.

By Lemma~\ref{lem: L1 concentration of empirical beliefs}, we have $\|b(\cdot \mid c) - \wh{b}(\cdot \mid c)\|_1 \le \Ecal_b(c)$. Recall that $\beta(s, a) = \sqrt{2} \sqrt{\frac{\iota}{N(s, a) \vee 1}}$. By setting the belief concentration failure probability to $\delta' = \frac{\delta}{4|\Ccal|}$, we ensure the union bound holds and that $\log(1/\delta') \le \iota$, the latter because $|\Scal| |\Acal| = |\Ccal| |\Xcal_1| |\Xcal_2| |\Acal| \ge |\Ccal|$. We bound the context-level belief error by the point-wise uncertainty bonus in two cases. If $N(c) > 0$, then $N(c) \ge N(s, a)$ for every latent state-action pair whose public component is $c$, and $\iota > 1$ gives $\sqrt{|\Xcal_1||\Xcal_2|} \le \sqrt{|\Xcal_1||\Xcal_2|\,\iota}$, so we can reliably bound the context-level belief error using the point-wise uncertainty bonus:
\begin{equation}
 \Ecal_b(c) \le \left( \sqrt{|\Xcal_1| |\Xcal_2|} + \sqrt{2} \right) \sqrt{\frac{\iota}{N(s, a) \vee 1}} = \frac{\sqrt{|\Xcal_1| |\Xcal_2|} + \sqrt{2}}{\sqrt{2}} \beta(s, a) \defeq \kappa_b \beta(s, a), \label{eq: point wise c to AS bound}
\end{equation}
for all $s \in \Scal, a \in \Acal$. If instead $N(c) = 0$, then $N(s, a) = 0$ at every such pair, so $\beta(s,a) = \sqrt{2\iota}$ and $\kappa_b \beta(s,a) = (\sqrt{|\Xcal_1||\Xcal_2|} + \sqrt{2})\sqrt{\iota} \ge (1 + \sqrt{2})\sqrt{\iota} > 2 = \Ecal_b(c)$, using only $|\Xcal_1||\Xcal_2| \ge 1$ and $\iota > 1$; so \eqref{eq: point wise c to AS bound} holds in both cases. Here, we defined
\begin{align}
 \kappa_b\doteq \frac{\sqrt{|\Xcal_1| |\Xcal_2|} + \sqrt{2}}{\sqrt{2}}.
\end{align}

To establish the \emph{approximate pessimism} property of the objective $L_c$ relative to the true expected reward $r_c$, we utilize the clipped pessimistic reward $\underline{r}(s, a) = \max\{\wh{r}(s, a) - \beta(s, a), -1\}$. Because the true reward satisfies $r \in [-1, 1]$, this projection ensures that $\|\underline{r}\|_\infty \le 1$.
The required pessimism property holds \emph{deterministically}, with no reward-concentration event, because the model of Section~\ref{sec:preliminaries} specifies rewards through a function $r: \Scal \times \Acal \to [-1, 1]$.

\begin{lemma}[Deterministic Pessimism of the Clipped Reward]
\label{lem:det_pessimism}
Assume rewards are deterministic, so that $r^{(k)} = r(s^{(k)}, a^{(k)})$ for every logged transition, and $\iota = \log(4|\Scal||\Acal|/\delta) \ge 1$. Then for every $(s, a) \in \Scal \times \Acal$, surely,
\begin{equation*}
\underline{r}(s, a) \le r(s, a) \qquad \text{and} \qquad 0 \le r(s, a) - \underline{r}(s, a) \le 2\beta(s, a).
\end{equation*}
\end{lemma}

\begin{proof}
If $N(s, a) > 0$, determinism gives $\wh{r}(s, a) = r(s, a)$ exactly, so $\wh{r}(s, a) - \beta(s, a) \le r(s, a)$; clipping at $-1$ preserves this because $r(s, a) \ge -1$, and $r(s, a) - \underline{r}(s, a) \le \beta(s, a) \le 2\beta(s, a)$. If $N(s, a) = 0$, then $\wh{r}(s, a) = 0$ and $\beta(s, a) = \sqrt{2\iota} > 1$, so $\wh{r}(s,a) - \beta(s,a) < -1$ and hence $\underline{r}(s, a) = -1 \le r(s, a)$, while $r(s, a) - \underline{r}(s, a) = r(s, a) + 1 \le 2 \le 2\beta(s, a)$. Note $\iota \ge \log 4 > 1$ always holds, since $\delta < 1$ and $|\Scal||\Acal| \ge 1$.
\end{proof}

\noindent Consequently no failure probability need be budgeted for the reward estimate; the entire $\delta/2$ statistical budget is available to the belief estimate.

For any policy pair $(\pi_1, \pi_2)$:
\begin{align}
 r_c(\pi_1, \pi_2) - L_c(\pi_1, \pi_2) &= \sum_{x, a} b r \pi_1 \pi_2 - \sum_{x, a} \wh{b} \underline{r} \pi_1 \pi_2 \label{eq: approximate pessimism decomposition}\\
 &= \sum_{x, a} b (r - \underline{r}) \pi_1 \pi_2 + \sum_{x, a} (b - \wh{b}) \underline{r} \pi_1 \pi_2 \\
 &\le \EE_{s, a \sim d^{\pi_1, \pi_2} \mid c} [2\beta(s, a)] + \|b - \wh{b}\|_1 \|\underline{r}\|_\infty \explain{By $r - \underline{r} \le 2\beta$ and Hölder's} \\
 &\le \EE_{s, a \sim d^{\pi_1, \pi_2} \mid c} [2\beta(s, a)] + \Ecal_b(c). \explain{By $\|b - \wh{b}\|_1 \le \Ecal_b(c)$ and $\|\underline{r}\|_\infty \le 1$}
\end{align}

Recall that the true contextual value is $V^\star(c) = \min_{\pi_2 \in \Pi_2} r_c(\pi_{1, \star}, \pi_2)$. Note that for all $\pi_2 \in \Pi_2$:
\begin{align}
 L_c(\wh{\pi}_1, \pi_2) &= \sum_{x, a} \wh{b} \underline{r} \wh{\pi}_1 \pi_2 \\
 &\le \sum_{x, a} \wh{b} r \wh{\pi}_1 \pi_2 \explain{Since $\underline{r} \le r$} \\
 &= \sum_{x, a} b r \wh{\pi}_1 \pi_2 + \sum_{x, a} (\wh{b} - b) r \wh{\pi}_1 \pi_2 \\
 &\le r_c(\wh{\pi}_1, \pi_2) + \| \wh{b} - b \|_1 \left\| \sum_a r \wh{\pi}_1 \pi_2 \right\|_\infty \explain{Hölder's Inequality} \\
 &\le r_c(\wh{\pi}_1, \pi_2) + \Ecal_b(c) \explain{Since $\| \wh{b} - b \|_1 \le \Ecal_b(c), \|r\|_\infty \le 1$} \\
 \implies \min_{\pi_2 \in \Pi_2} L_c(\wh{\pi}_1, \pi_2) - \Ecal_b(c) &\le \min_{\pi_2 \in \Pi_2} r_c(\wh{\pi}_1, \pi_2) = r_c(\wh{\pi}_1, \nu^\star_c), \explain{Monotonicity of $\min$}
\end{align}
where $\nu^\star_c = \argmin_{\pi_2 \in \Pi_2} r_c(\wh{\pi}_1, \pi_2)$. Therefore we have $-r_c(\wh{\pi}_1, \nu^\star_c) \le -\min_{\pi_2 \in \Pi_2} L_c(\wh{\pi}_1, \pi_2) + \Ecal_b(c)$.

Using \eqref{eq: approximate pessimism decomposition}, the contextual exploitability gap satisfies:
\begin{align}
 V^\star(c) - r_c(\wh{\pi}_1, \nu^\star_c) &\le \min_{\pi_2 \in \Pi_2} r_c(\pi_{1, \star}, \pi_2) - \min_{\pi_2 \in \Pi_2} L_c(\wh{\pi}_1, \pi_2) + \Ecal_b(c) \\
 &\le \min_{\pi_2 \in \Pi_2} r_c(\pi_{1, \star}, \pi_2) - \min_{\pi_2 \in \Pi_2} L_c(\pi_{1, \star}, \pi_2) + \Ecal_b(c) \explain{Since $\wh{\pi}_1 = \argmax_{\pi_1} \min_{\pi_2} L_c$} \\
 &\le r_c(\pi_{1, \star}, \wh{\pi}_2) - L_c(\pi_{1, \star}, \wh{\pi}_2) + \Ecal_b(c) \explain{Where $\wh{\pi}_2 = \argmin_{\pi_2} L_c(\pi_{1, \star}, \pi_2)$} \\
 &\le \EE_{s, a \sim d^{\pi_{1, \star}, \wh{\pi}_2} \mid c} \left[ 2\beta(s, a) \right] + 2\Ecal_b(c) \explain{By \eqref{eq: approximate pessimism decomposition}} \\
 &\explaind{\le \EE_{s, a \sim d^{\pi_{1, \star}, \wh{\pi}_2} \mid c} \left[ (2 + 2\kappa_b)\beta(s, a) \right].}{Since $\Ecal_b(c) \le \kappa_b \beta(s, a)$ point-wise} \label{eq:contextual_gap_bound}
\end{align}

To bound the global gap, we integrate over the public context marginal $c \sim \rho_\Ccal(c)$:
\begin{align}
 Gap(\wh{\pi}_1) &\le \EE_{c \sim \rho_\Ccal} \left[ \EE_{s, a \sim d^{\pi_{1, \star}, \wh{\pi}_2} \mid c} \left[ (2 + 2\kappa_b)\beta(s, a) \right] \right] \\
 &= (2 + 2\kappa_b) \sum_{s \in \Scal, a \in \Acal} d^{\pi_{1, \star}, \wh{\pi}_2}(s, a) \beta(s, a). \label{eq: global expectation}
\end{align}

Define the count concentration event $\Ecal_{\text{count}} \defeq \{ \forall (s, a) \in \mathrm{supp}(d^\rho): N(s, a) \ge \frac{1}{2} n d^\rho(s, a) \}$, which is the event actually established below and, by Convention~1, the only one the bound requires. By the multiplicative Chernoff bound, for a specific $(s, a)$ such that $d^\rho(s, a) > 0$:
\begin{equation}
 \P\left( N(s, a) < \frac{1}{2} n d^\rho(s, a) \right) \le \exp\left( -\frac{n d^\rho(s, a)}{8} \right).
\end{equation}
Applying a union bound over all $(s, a) \in \Scal \times \Acal$ in the support of $d^\rho$:
\begin{equation}
 \P(\Ecal_{\text{count}}^c) \le |\Scal| |\Acal| \exp\left( -\frac{n d^\rho_{\min}}{8} \right).
\end{equation}
To ensure $\P(\Ecal_{\text{count}}) \ge 1 - \delta/2$, we require $n \ge \frac{8}{d^\rho_{\min}} \log\left( \frac{2 |\Scal| |\Acal|}{\delta} \right)$. Under $\Ecal_{\text{count}},$ the uncertainty bonus satisfies $\beta(s, a)^2 \le \frac{4 \iota}{n d^\rho(s, a)}$. Substituting this into the global gap \eqref{eq: global expectation} and applying the Cauchy-Schwarz inequality:
\begin{align}
 &\quad Gap(\wh{\pi}_1)\\
 &\le (2 + 2\kappa_b) \sqrt{ \sum_{s, a} d^{\pi_{1, \star}, \wh{\pi}_2}(s, a) } \sqrt{ \sum_{s, a} d^{\pi_{1, \star}, \wh{\pi}_2}(s, a) \beta(s, a)^2 } \explain{Cauchy-Schwarz} \\
 &\le (2 + 2\kappa_b) \sqrt{1} \cdot \sqrt{ \sum_{s, a} d^{\pi_{1, \star}, \wh{\pi}_2}(s, a) \frac{4 \iota}{n d^\rho(s, a)} } \explain{Conditioned on $\Ecal_{\text{count}}$} \\
 &= 2(2 + 2\kappa_b) \sqrt{ \frac{\iota}{n} \sum_{s, a} \frac{d^{\pi_{1, \star}, \wh{\pi}_2}(s, a)}{d^\rho(s, a)} } \\
 &\explaind{\le 2(2 + 2\kappa_b) \sqrt{ \frac{C^\star_{\text{Pres}} |\Scal| |\Acal| \iota}{n} }}{By Assumption~\ref{ass:unilateral_coverage}} \label{eq: global gap through CS}
\end{align}
The final bound establishes the $\tilde{O} \left( (1 + \kappa_b) \sqrt{\frac{C^\star_{\text{Pres}} |\Scal| |\Acal|}{n}} \right)$ rate. 
\end{proof}
\subsection{Proof of Theorem~\ref{thm:iterative_gap}}
\label{proof:iterative_gap}
\begin{proof}
We first establish the local regret guarantee for the Mirror Descent update using the Kullback-Leibler (KL) divergence as a potential function.

\begin{lemma}[Local MWU Regret]
\label{lem:mwu_regret}
Let $Q_t \in [-1, 1]$ for all $t \in [T]$ and assume $|\Acal| \ge 2$, the case $|\Acal| = 1$ being immediate since the regret is then identically zero. For any fixed private state $x$ and any fixed comparator distribution $\pi_\star(\cdot \mid x)$, the update rule $\pi_{t+1}(a \mid x) \propto \pi_t(a \mid x) \exp(\eta Q_t(x, a))$ with uniform initialization $\pi_1(a \mid x) = 1/|\Acal|$ and learning rate $\eta > 0$ guarantees:

\begin{equation}
 \frac{1}{T} \sum_{t=1}^T \sum_a \pi_\star(a \mid x) Q_t(x, a) - \frac{1}{T} \sum_{t=1}^T \sum_a \pi_t(a \mid x) Q_t(x, a) \le \frac{\log |\Acal|}{\eta T} + \frac{\eta}{2}.
\end{equation}

\end{lemma}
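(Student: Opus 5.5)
The plan is the standard potential-function argument for Hedge, carried out at the fixed private state $x$; the lemma is a single-state statement, so nothing from the game structure is needed. Define the normalizer $Z_t \defeq \sum_a \pi_t(a \mid x)\exp(\eta Q_t(x,a))$, so that $\pi_{t+1}(a\mid x) = \pi_t(a\mid x)\exp(\eta Q_t(x,a))/Z_t$. Unrolling from the uniform start gives
\[
\pi_{T+1}(a\mid x) = \frac{\exp\bigl(\eta \sum_{t} Q_t(x,a)\bigr)}{|\Acal|\prod_{t} Z_t}.
\]

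\textbf{Step 1 (lower bound on the potential).} Use the KL potential with respect to the comparator. From
\[
\KL(\pi_\star \| \pi_{t+1}) - \KL(\pi_\star \| \pi_t) = -\eta \sum_a \pi_\star(a\mid x) Q_t(x,a) + \log Z_t ,
\]
summing over $t$ and using $\KL(\pi_\star\|\pi_1) \le \log|\Acal|$ and $\KL(\pi_\star\|\pi_{T+1}) \ge 0$ gives
\[
\eta \sum_t \langle \pi_\star, Q_t\rangle \le \log|\Acal| + \sum_t \log Z_t .
\]

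\textbf{Step 2 (upper bound on each $\log Z_t$).} This is the only real step. Write $\log Z_t = \log \EE_{a\sim\pi_t}[e^{\eta Q_t(x,a)}]$. Since $\eta Q_t(x,a)$ lies in an interval of length $2\eta$, Hoeffding's lemma gives
\[
\log Z_t \le \eta\langle \pi_t, Q_t\rangle + \frac{(2\eta)^2}{8} = \eta\langle\pi_t,Q_t\rangle + \frac{\eta^2}{2}.
\]
This is exactly the $\eta/2$ term in the statement, so Hoeffding's lemma is the tool to use. An alternative is $e^y \le 1 + y + y^2$ for $|y|\le 1$, but that requires $\eta\le 1$ and gives constant $1$ rather than $1/2$. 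The range assumption $Q_t\in[-1,1]$ matters precisely here, and I expect this step to be the main point needing care.

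\textbf{Step 3 (combine).} Substituting Step 2 into Step 1 gives
\[
\eta\sum_t\bigl(\langle\pi_\star,Q_t\rangle - \langle\pi_t,Q_t\rangle\bigr) \le \log|\Acal| + \frac{T\eta^2}{2}.
\]
Dividing by $\eta T$ yields the claimed bound $\frac{\log|\Acal|}{\eta T} + \frac{\eta}{2}$. The assumption $|\Acal|\ge 2$ is not needed in the argument itself; it only avoids the trivial case, which the statement already dismisses. The comparator $\pi_\star$ may be arbitrary, including outside the support of the iterates, since $\pi_{T+1}$ has full support and the KL terms stay finite.
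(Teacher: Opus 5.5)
Your proposal is correct and follows the paper's proof essentially step for step: the potential $\KL(\pi_\star \,\|\, \pi_t)$, the one-step identity involving $\log Z_t$, and Hoeffding's lemma applied to $Q_t(x, A_t)$ with $A_t \sim \pi_t$ to get $\log Z_t \le \eta \langle \pi_t, Q_t \rangle + \eta^2/2$. It then telescopes using $\KL(\pi_\star \,\|\, \pi_1) \le \log|\Acal|$ and $\KL(\pi_\star \,\|\, \pi_{T+1}) \ge 0$, exactly as the paper does. The unrolled closed form for $\pi_{T+1}$ is not needed, but it does no harm.
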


\begin{proof}[Proof of Lemma \ref{lem:mwu_regret}]
 Let $\Phi_t \defeq \KL(\pi_\star(\cdot \mid x) \,\|\, \pi_t(\cdot \mid x))$. Since $\pi_{t+1}(a \mid x) = \pi_t(a \mid x)\, e^{\eta Q_t(x, a)} / Z_t$ with $Z_t \defeq \sum_{a'} \pi_t(a' \mid x)\, e^{\eta Q_t(x, a')}$,
\begin{equation*}
\Phi_t - \Phi_{t+1} = \sum_a \pi_\star(a \mid x) \log \frac{\pi_{t+1}(a \mid x)}{\pi_t(a \mid x)} = \eta \sum_a \pi_\star(a \mid x)\, Q_t(x, a) - \log Z_t .
\end{equation*}
Because $Q_t(x, \cdot) \in [-1, 1]$, Hoeffding's lemma applied to $Q_t(x, A_t)$ with $A_t \sim \pi_t(\cdot \mid x)$ gives $\log Z_t \le \eta \sum_a \pi_t(a \mid x)\, Q_t(x, a) + \eta^2/2$ for every $\eta > 0$, so
\begin{equation*}
\Phi_t - \Phi_{t+1} \ge \eta \sum_a \big( \pi_\star(a \mid x) - \pi_t(a \mid x) \big) Q_t(x, a) - \frac{\eta^2}{2}.
\end{equation*}
Summing over $t \in [T]$ and using $\Phi_{T+1} \ge 0$ and, for the uniform initialization, $\Phi_1 = \log|\Acal| + \sum_a \pi_\star(a \mid x) \log \pi_\star(a \mid x) \le \log |\Acal|$, we obtain $\eta \sum_{t=1}^T \sum_a (\pi_\star(a \mid x) - \pi_t(a \mid x))\, Q_t(x, a) \le \log |\Acal| + \eta^2 T/2$. Dividing by $\eta T$ proves the claim.

\end{proof}

Let $\wh{V}(c) \defeq \max_{\pi_1} \min_{\pi_2} L_c(\pi_1, \pi_2)$ be the exact idealized minimax value. Since the clipped rewards satisfy $\underline{r} \in [-1, 1],$ the conditional expectations $Q_{1,t}$ and $Q_{2,t}$ are bounded in $[-1, 1]$.

By applying Lemma \ref{lem:mwu_regret} to Player 1 at each private state $x^1 \in \Xcal_1$ using the shared algorithm learning rate $\eta = \sqrt{\log(\max\{|\Acal_1|, |\Acal_2|\}) / T},$ the right-hand side of the lemma evaluates to:

\begin{equation}
 \frac{\log |\Acal_1|}{\eta T} + \frac{\eta}{2} \le \frac{\log(\max\{|\Acal_1|, |\Acal_2|\})}{\eta T} + \frac{\eta}{2} = \frac{3}{2}\sqrt{\frac{\log(\max\{|\Acal_1|, |\Acal_2|\})}{T}} \le 2\sqrt{\frac{\log(\max\{|\Acal_1|, |\Acal_2|\})}{T}}.
\end{equation}

We lift these local bounds to the global objective by taking the expectation over the marginal belief $\wh{b}(x^1 \mid c):$
\begin{equation}
 \sum_{x^1} \wh{b}(x^1 \mid c) \left[ \frac{1}{T} \sum_{t=1}^T \sum_{a^1} (\pi_1(a^1 \mid c, x^1) - \pi_{1,t}(a^1 \mid c, x^1)) Q_{1,t}(x^1, a^1) \right] \le 2\sqrt{\frac{\log(\max\{|\Acal_1|, |\Acal_2|\})}{T}}. \label{eq: lifted_regret_p1}
\end{equation}
 By the definition of $Q_{1,t}$ and $\wh{b}(x^1 \mid c)\, \wh{b}(x^2 \mid c, x^1) = \wh{b}(x^1, x^2 \mid c)$, we have $\sum_{x^1, a^1} \wh{b}(x^1 \mid c)\, \pi_1(a^1 \mid c, x^1)\, Q_{1,t}(x^1, a^1) = L_c(\pi_1, \pi_{2,t})$ for every $\pi_1 \in \Pi_1$; applying this to $\pi_1$ and to $\pi_{1,t}$ in \eqref{eq: lifted_regret_p1} gives the global regret bound for Player~1:
\begin{equation}
 \frac{1}{T} \sum_{t=1}^T L_c(\pi_1, \pi_{2,t}) - \frac{1}{T} \sum_{t=1}^T L_c(\pi_{1,t}, \pi_{2,t}) \le 2\sqrt{\frac{\log(\max\{|\Acal_1|, |\Acal_2|\})}{T}} \defeq \epsilon_1. \label{eq:regret_p1}
\end{equation}
For Player 2, who seeks to minimize the reward (maximize cost), Lemma \ref{lem:mwu_regret} is applied to the sequence of negative local $Q$-values $-Q_{2,t}(x^2, a^2)$. For any fixed comparator $\pi_2 \in \Pi_2,$ the local cost-regret bound at each state $x^2 \in \Xcal_2$ using the shared learning rate $\eta$ is:
\begin{equation}
 \frac{1}{T} \sum_{t=1}^T \sum_{a^2} (\pi_2(a^2 \mid c, x^2) - \pi_{2,t}(a^2 \mid c, x^2)) (-Q_{2,t}(x^2, a^2)) \le 2\sqrt{\frac{\log(\max\{|\Acal_1|, |\Acal_2|\})}{T}}.
\end{equation}
Distributing the negative sign into the policy difference and taking the expectation over the marginal belief $\wh{b}(x^2 \mid c)$ yields:
\begin{equation}
 \sum_{x^2} \wh{b}(x^2 \mid c) \left[ \frac{1}{T} \sum_{t=1}^T \sum_{a^2} (\pi_{2,t}(a^2 \mid c, x^2) - \pi_2(a^2 \mid c, x^2)) Q_{2,t}(x^2, a^2) \right] \le 2\sqrt{\frac{\log(\max\{|\Acal_1|, |\Acal_2|\})}{T}}.
\end{equation}
 In the same way, $\sum_{x^2, a^2} \wh{b}(x^2 \mid c)\, \pi_2(a^2 \mid c, x^2)\, Q_{2,t}(x^2, a^2) = L_c(\pi_{1,t}, \pi_2)$ for every $\pi_2 \in \Pi_2$, including $\pi_2 = \pi_{2,t}$, which gives the second global regret guarantee:
\begin{equation}
 \frac{1}{T} \sum_{t=1}^T L_c(\pi_{1,t}, \pi_{2,t}) - \frac{1}{T} \sum_{t=1}^T L_c(\pi_{1,t}, \pi_2) \le 2\sqrt{\frac{\log(\max\{|\Acal_1|, |\Acal_2|\})}{T}} \defeq \epsilon_2. \label{eq:regret_p2}
\end{equation}

 Since $L_c$ is bilinear, $\frac{1}{T}\sum_{t} L_c(\pi_1, \pi_{2,t}) = L_c(\pi_1, \bar{\pi}_2)$ and $\frac{1}{T}\sum_{t} L_c(\pi_{1,t}, \pi_2) = L_c(\bar{\pi}_1, \pi_2)$ for all $\pi_1 \in \Pi_1$ and $\pi_2 \in \Pi_2$, where $\bar{\pi}_i \defeq \frac{1}{T}\sum_{t} \pi_{i,t}$. Since \eqref{eq:regret_p1} and \eqref{eq:regret_p2} hold for any fixed policies, it follows that:
\begin{equation}
 L_c(\pi_1, \bar{\pi}_2) - \epsilon_1 \le \frac{1}{T} \sum_{t=1}^T L_c(\pi_{1,t}, \pi_{2,t}) \le L_c(\bar{\pi}_1, \pi_2) + \epsilon_2, \quad \forall \pi_1 \in \Pi_1, \pi_2 \in \Pi_2.
\end{equation}
Since this inequality holds for the entire policy space, we can take the supremum over $\pi_1$ on the left and the infimum over $\pi_2$ on the right to obtain the saddle-point optimality bound:
\begin{equation}
 \max_{\pi_1 \in \Pi_1} L_c(\pi_1, \bar{\pi}_2) \le \min_{\pi_2 \in \Pi_2} L_c(\bar{\pi}_1, \pi_2) + \epsilon_1 + \epsilon_2. \label{eq:mwu_saddle}
\end{equation}
By the bilinearity of $L_c$ and the fact that the policy spaces $\Pi_1(c)$ and $\Pi_2(c)$ are compact and convex, the conditions for Sion's Minimax Theorem \citep{sion1958general} are satisfied. Thus, there exists a unique value for the empirical game:
\begin{equation}
 \wh{V}(c) \defeq \min_{\pi_2 \in \Pi_2} \max_{\pi_1 \in \Pi_1} L_c(\pi_1, \pi_2). \label{def: hat V}
\end{equation}
From the definition of the minimum, it holds for the specific average policy $\bar{\pi}_2 \in \Pi_2$ that:
\begin{equation}
 \wh{V}(c) \le \max_{\pi_1 \in \Pi_1} L_c(\pi_1, \bar{\pi}_2). \label{eq:value_upper_bound}
\end{equation}
Substituting the bound \eqref{eq:value_upper_bound} into the left-hand side of \eqref{eq:mwu_saddle} yields the required relationship between the empirical value and the performance of our iterative policy:
\begin{equation}
 \wh{V}(c) \le \min_{\pi_2 \in \Pi_2} L_c(\bar{\pi}_1, \pi_2) + \epsilon_1 + \epsilon_2.
\end{equation}
Rearranging this inequality isolates the optimization error relative to the minimax value:
\begin{equation}
 -\min_{\pi_2 \in \Pi_2} L_c(\bar{\pi}_1, \pi_2) \le -\wh{V}(c) + \epsilon_1 + \epsilon_2. \label{eq:pessimistic_opt_error}
\end{equation}
Finally, we substitute \eqref{eq:pessimistic_opt_error} into the contextual exploitability gap expansion from \eqref{eq:contextual_gap_bound}. Here and for the remainder of this proof, $\nu^\star_c \in \argmin_{\pi_2 \in \Pi_2} r_c(\bar{\pi}_1, \pi_2)$ denotes the best response against the iterative policy $\bar{\pi}_1$, so that $r_c(\bar{\pi}_1, \nu^\star_c) = \min_{\pi_2 \in \Pi_2} r_c(\bar{\pi}_1, \pi_2)$.

\begin{align}
 V^\star(c) - r_c(\bar{\pi}_1, \nu^\star_c) &\le \min_{\pi_2 \in \Pi_2} r_c(\pi_{1, \star}, \pi_2) - \min_{\pi_2 \in \Pi_2} L_c(\bar{\pi}_1, \pi_2) + \Ecal_b(c) \\
 &\le \min_{\pi_2 \in \Pi_2} r_c(\pi_{1, \star}, \pi_2) - \wh{V}(c) + \Ecal_b(c) + \epsilon_1 + \epsilon_2.
\end{align}
Since $\wh{V}(c) = \max_{\pi_1} \min_{\pi_2} L_c(\pi_1, \pi_2) \ge \min_{\pi_2} L_c(\pi_{1, \star}, \pi_2),$ we have:
\begin{align}
 V^\star(c) - r_c(\bar{\pi}_1, \nu^\star_c) &\le \min_{\pi_2 \in \Pi_2} r_c(\pi_{1, \star}, \pi_2) - \min_{\pi_2 \in \Pi_2} L_c(\pi_{1, \star}, \pi_2) + \Ecal_b(c) + \epsilon_1 + \epsilon_2 \\
 &\le \EE_{s, a \sim d^{\pi_{1, \star}, \wh{\pi}_2} \mid c} \left[ (2 + 2\kappa_b)\beta(s, a) \right] + \epsilon_1 + \epsilon_2,
\end{align}
where the final inequality follows from the statistical pessimism derivation in \eqref{eq:contextual_gap_bound}.

To bound the global gap, we integrate over the public context distribution $c \sim \rho_\Ccal.$ Because $\epsilon_1$ and $\epsilon_2$ are deterministic optimization constants independent of the context, the expectation perfectly splits:
\begin{align}
 Gap(\bar{\pi}_1) &= \EE_{c \sim \rho_\Ccal} \left[ V^\star(c) - r_c(\bar{\pi}_1, \nu^\star_c) \right] \\
 &\le \EE_{c \sim \rho_\Ccal} \left[ \EE_{s, a \sim d^{\pi_{1, \star}, \wh{\pi}_2} \mid c} \left[ (2 + 2\kappa_b)\beta(s, a) \right] \right] + \epsilon_1 + \epsilon_2.
\end{align}
Applying the global Cauchy-Schwarz integration derived in Equation \eqref{eq: global gap through CS} to bound the statistical expectation term, we obtain:
\begin{equation}
 Gap(\bar{\pi}_1) \le 2(2 + 2\kappa_b) \sqrt{\frac{C^\star_{\text{Pres}} |\Scal| |\Acal| \iota}{n}} + \epsilon_1 + \epsilon_2. \label{eq: gap with epsilon}
\end{equation}
To complete the proof, we substitute the explicitly symmetric optimization errors defined in \eqref{eq:regret_p1} and \eqref{eq:regret_p2}. Because we utilized the shared learning rate bounded by the maximum action space, $\epsilon_1$ and $\epsilon_2$ are identical, yielding an exact sum:
\begin{equation}
 \epsilon_1 + \epsilon_2 = 4\sqrt{\frac{\log(\max\{|\Acal_1|, |\Acal_2|\})}{T}}.\label{eq: epsilon sum}
\end{equation}
Substituting this sum into \eqref{eq: gap with epsilon} yields the final exploitability guarantee:
\begin{equation}
 Gap(\bar{\pi}_1) \le 2(2 + 2\kappa_b) \sqrt{\frac{C^\star_{\text{Pres}} |\Scal| |\Acal| \iota}{n}} + 4\sqrt{\frac{\log(\max\{|\Acal_1|, |\Acal_2|\})}{T}}.
\end{equation}

This establishes the $\tilde{O}(1/\sqrt{n} + 1/\sqrt{T})$ convergence rate for the iterative asymmetric bandit solver.

\end{proof}

\subsection{Parameterized PPA-PMD Implementation}
\label{app:fa_discussion}

 The procedure below instantiates the framework of Section~\ref{sec:function_approximation} with parameterised actors and minibatched contexts.

\begin{algorithm}[t]
\caption{Public-Private Asymmetric Policy Mirror Descent (PPA-PMD)}
\label{alg:paac}
\begin{algorithmic}[1]
\Require Offline dataset $\mathcal{D}$, function classes $\mathcal{F}$ and $\mathcal{B}$, uncertainty tolerance $\beta$, iterations $T$, step size $\eta > 0$, and initial parameters $\theta_{1,1}, \theta_{2,1}$ whose policies have full support.
\State \textbf{Phase 1: Offline Supervised Estimation (The Critic)}
\State Estimate belief $\wh{b} \in \mathcal{B}$ via MLE \Comment{Eq. \eqref{eq:belief_mle}}
\State Fit empirical reward $\wh{r} \in \mathcal{F}$ and construct $\underline{r}(s, a) = \inf_{f \in \mathcal{F}(\beta)} f(s, a)$ \Comment{Eq. \eqref{eq:version_space_pessimism}}
\State \textbf{Phase 2: Decentralized Actor Optimization}
\For {$t = 1, \dots, T$}
 \State Sample mini-batch of public contexts $c \sim \mathcal{D}$.
 \State \textbf{Critic Evaluation:} Compute local $Q$-values $Q_{1,t}$ and $Q_{2,t}$ \Comment{Eqs. \eqref{eq:fa_q1}, \eqref{eq:fa_q2}}
 \State \textbf{Policy Mirror Descent Update:} expectations over the mini-batch contexts $c$ and $x^i \sim \wh{b}(x^i \mid c)$
 \State \quad $\theta_{1, t+1} \leftarrow \argmin_{\theta_1} \mathbb{E}_{c, x^1} [ -\eta \langle \pi_{\theta_1}, Q_{1,t} \rangle + \text{KL}(\pi_{\theta_1} \| \pi_{\theta_{1,t}}) ]$
 \State \quad $\theta_{2, t+1} \leftarrow \argmin_{\theta_2} \mathbb{E}_{c, x^2} [ \eta \langle \pi_{\theta_2}, Q_{2,t} \rangle + \text{KL}(\pi_{\theta_2} \| \pi_{\theta_{2,t}}) ]$
\EndFor
\Ensure Average strategy for Player 1: $\bar{\pi}_1 = \frac{1}{T} \sum_{t=1}^T \pi_{\theta_{1,t}}.$
\end{algorithmic}
\end{algorithm} 

 Algorithm~\ref{alg:paac} is a parameterized minibatch implementation of the framework; Theorem~\ref{thm:paac_gap} applies when its actor updates satisfy the external-regret condition of Assumption~\ref{ass:actor_regret}.

Our analysis abstracts the actor update; complementary convergence theory for one- and two-timescale stochastic approximation under Markovian noise is developed in \citep{liu2025ode,mahadevan2026convergence}. 

More broadly, Transformers have been studied both as decision policies for high-dimensional operational problems and as models capable of implementing RL-style updates in context \citep{liu2026or,xie2026beyond}.

\subsection{Proof of Theorem~\ref{thm:paac_gap}}
\label{proof:paac_gap}
\begin{proof}

We first establish the standard statistical guarantees for our offline estimation phase.

\begin{lemma}[Offline Estimation Guarantees]
\label{lem:offline_estimation}
Under Assumption \ref{ass:realizability}, setting the version space tolerance $\beta = \frac{136 \log(2|\Fcal|/\delta)}{n},$ with probability at least $1-\delta,$ the true reward resides in the version space $r \in \Fcal(\beta),$ and for all $f \in \Fcal(\beta):$

\begin{equation}
 \EE_{s, a \sim d^\rho} \left[ (f(s, a) - r(s, a))^2 \right] \le 2\beta.
\end{equation}
Furthermore, the Maximum Likelihood Estimate $\wh{b} \in \Bcal$ guarantees an expected $L_1$ error bounded by $\Ecal_\Bcal \defeq 2\sqrt{2 \log(2|\Bcal|/\delta) / n}$ such that:
\begin{equation}
 \EE_{c \sim \rho_\Ccal} \left[ \| b(\cdot \mid c) - \wh{b}(\cdot \mid c) \|_1 \right] \le \Ecal_\Bcal.
\end{equation}
\end{lemma}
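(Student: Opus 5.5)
The plan is to prove the two claims separately, each on an event of probability at least $1-\delta/2$, and then take a union bound.

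\textbf{Reward part.} For each $f \in \Fcal$ define the excess loss $Z_k(f) \defeq (f(s^{(k)},a^{(k)}) - r^{(k)})^2 - (r(s^{(k)},a^{(k)}) - r^{(k)})^2$. Rewards are deterministic ($r^{(k)} = r(s^{(k)},a^{(k)})$, as in Lemma~\ref{lem:det_pessimism}), so this reduces to $Z_k(f) = (f-r)^2(s^{(k)},a^{(k)})$. In the noisy case one would use the identity $Z_k = (f-r)^2 + 2(f-r)(r - r^{(k)})$ instead. In either case $|Z_k| \le 8$ by Assumption~\ref{ass:realizability}, $\EE[Z_k(f)] = \EE_{d^\rho}[(f-r)^2] \defeq \mathcal{E}(f)$, and the key variance relation is $\mathrm{Var}(Z_k) \le \EE[Z_k^2] \le 16\,\mathcal{E}(f)$. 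Applying Bernstein's inequality to each $f$ and taking a union bound over the finite class $\Fcal$ at level $\delta/2$ gives, for all $f$ simultaneously,
\[
\Bigl|\mathcal{L}_\Dcal(f) - \mathcal{L}_\Dcal(r) - \mathcal{E}(f)\Bigr| \le \sqrt{\frac{32\,\mathcal{E}(f)\,\log(2|\Fcal|/\delta)}{n}} + \frac{16\log(2|\Fcal|/\delta)}{3n}.
\]

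\begin{itemize}
\item \emph{Version-space membership.} Apply the lower tail at $f = \wh{r}$. Since $\mathcal{L}_\Dcal(\wh{r}) \le \mathcal{L}_\Dcal(r)$, this forces $\mathcal{E}(\wh r) - \sqrt{32\,\mathcal{E}(\wh r)\,L/n} - 16L/(3n) \le 0$, where $L = \log(2|\Fcal|/\delta)$. Combining with the upper bound $\mathcal{L}_\Dcal(r) - \mathcal{L}_\Dcal(\wh r) \le -\mathcal{E}(\wh r) + \sqrt{32\,\mathcal{E}(\wh r)L/n} + 16L/(3n)$ and maximizing the right side over $\mathcal{E}(\wh r)$ via $\sqrt{ab} \le a/2 + b/2$ gives $\mathcal{L}_\Dcal(r) - \mathcal{L}_\Dcal(\wh r) \le O(L/n) \le \beta$. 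The constant $136$ is chosen generously so that this holds.
\item \emph{Excess risk on the version space.} For $f \in \Fcal(\beta)$ we have $\mathcal{L}_\Dcal(f) - \mathcal{L}_\Dcal(r) \le \mathcal{L}_\Dcal(f) - \mathcal{L}_\Dcal(\wh r) \le \beta$. The Bernstein inequality then gives $\mathcal{E}(f) \le \beta + \sqrt{32\,\mathcal{E}(f)L/n} + 16L/(3n)$.
\item \emph{Solving the quadratic.} Using $\sqrt{32\,\mathcal{E}L/n} \le \mathcal{E}/2 + 16L/n$, this yields $\mathcal{E}(f) \le 2\beta + 2(16 + 16/3)L/n$. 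The bound $\le 2\beta$ then needs either a slightly sharper split or absorbing the $L/n$ term into $\beta$, whose value $136L/n$ leaves ample slack. This constant bookkeeping is the step I expect to be fiddly: the aim is to choose the AM-GM split so that the leftover $O(L/n)$ fits within $2\beta - \beta$ rather than adding to $2\beta$. If it does not fit exactly, I would restate the requirement as $\mathcal{E}(f) \le \beta + O(L/n) \le 2\beta$, using $\beta = 136L/n \ge$ the leftover constant times $L/n$.
\end{itemize}

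\textbf{Belief part.} I would use the standard finite-class MLE guarantee (Zhang's / Geer's argument). With probability at least $1-\delta/2$,
\[
\EE_{c}\bigl[H^2(b(\cdot\mid c), \wh b(\cdot\mid c))\bigr] \le \frac{2\log(2|\Bcal|/\delta)}{n},
\]
with $H^2$ normalized as $1 - \sum\sqrt{pq}$. This follows by applying Markov's inequality and a union bound to $\prod_k \sqrt{b'(x_k\mid c_k)/b(x_k\mid c_k)}$, using realizability $b \in \Bcal$ and the fact that $\wh b$ maximizes the likelihood. Next, $\|p-q\|_1 \le 2\sqrt{2}\,H(p,q)$ in this normalization. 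Jensen's inequality then gives
\[
\EE_c\|b-\wh b\|_1 \le 2\sqrt{2}\sqrt{\EE_c H^2} \le 2\sqrt{2}\sqrt{2L_\Bcal/n},
\]
and constants are adjusted to match $\Ecal_\Bcal = 2\sqrt{2L_\Bcal/n}$ by normalizing $H^2 = \tfrac12\sum(\sqrt p - \sqrt q)^2$ with $\|p-q\|_1 \le 2H$. A union bound over the two events completes the proof.
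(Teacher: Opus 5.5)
\textbf{Reward part.} Your reward argument is the paper's: a one-sided Bernstein bound with variance controlled by $\mathcal{E}(f)$, a union bound over $\Fcal$, and a fast-rate inversion. You write it for the excess loss, so it also covers noisy rewards. In the deterministic setting here, $Z_k(f)\in[0,4]$, so your constants are conservative. Also $\mathcal{L}_\Dcal(r)=0$, which makes $r\in\Fcal(\beta)$ immediate, as the paper observes.

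Only the tail $\mathcal{E}(f)-(\mathcal{L}_\Dcal(f)-\mathcal{L}_\Dcal(r))\le\text{dev}$ is ever used, so the union bound costs just $\log(2|\Fcal|/\delta)$. The step you were unsure about does close. Your fallback ``$\mathcal{E}\le\beta+O(L/n)$'' is not what the inequality gives, since $\mathcal{E}$ sits on both sides; you need a split putting coefficient below $\tfrac12$ on $\mathcal{E}$. For example, $\sqrt{32\,\mathcal{E}L/n}\le \mathcal{E}/4+32L/n$ yields $\mathcal{E}(f)\le \tfrac43\beta+\tfrac{448}{9}\tfrac{L}{n}\le 2\beta$, because $\tfrac{448}{9}<\tfrac23\cdot 136$.

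\textbf{Belief part: a factor $\sqrt2$ is lost.} The real problem is the belief constant. With $H^2=1-\sum\sqrt{pq}$, your claimed $\EE_c H^2\le 2L_\Bcal/n$ combined with $\|p-q\|_1\le 2\sqrt2 H$ gives $\EE_c\|b-\wh b\|_1\le 4\sqrt{L_\Bcal/n}=\sqrt2\,\Ecal_\Bcal$, not $\Ecal_\Bcal$.

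Your proposed repair is invalid. The quantity $\tfrac12\sum(\sqrt p-\sqrt q)^2$ is identically $1-\sum\sqrt{pq}$, so nothing is renormalized. Under this normalization, $\|p-q\|_1\le 2H$ is false: for $p=(\tfrac34,\tfrac14)$ and $q=(\tfrac14,\tfrac34)$ one has $\|p-q\|_1=1$, while $2H=2\sqrt{1-\sqrt3/2}\approx 0.73$.

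\textbf{The fix.} The missing factor is recovered in the MLE step itself, and the likelihood-ratio argument you sketch already gives no factor $2$. Write $P_q(c,x)\defeq\rho_\Ccal(c)\,q(x\mid c)$ and $\Lambda_n(q)\defeq\prod_k q(x^{(k)}\mid c^{(k)})$. For each $q\in\Bcal$,
\[
\PP\bigl(\Lambda_n(q)\ge\Lambda_n(b)\bigr)\le\EE\Bigl[\sqrt{\Lambda_n(q)/\Lambda_n(b)}\Bigr]=\bigl(1-H^2(P_b,P_q)\bigr)^n\le e^{-nH^2(P_b,P_q)}.
\]
Take a union bound over the $q$ with $H^2(P_b,P_q)>\log(2|\Bcal|/\delta)/n$. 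Combined with $\Lambda_n(\wh b)\ge\Lambda_n(b)$, which follows from realizability, this gives $H^2(P_b,P_{\wh b})\le\log(2|\Bcal|/\delta)/n$ with probability at least $1-\delta/2$.

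Since $\EE_c\|b-\wh b\|_1=\|P_b-P_{\wh b}\|_1\le 2\sqrt2\,H(P_b,P_{\wh b})$, this yields exactly $\Ecal_\Bcal$. This is the paper's proof. Equivalently, the ``$2L/n$'' bound you recalled is correct for the unnormalized $\sum(\sqrt p-\sqrt q)^2=2(1-\sum\sqrt{pq})$, which pairs with $\|p-q\|_1\le 2\bigl(\sum(\sqrt p-\sqrt q)^2\bigr)^{1/2}$. The error was mixing the two normalizations.
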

\begin{proof}[Proof of Lemma \ref{lem:offline_estimation}]
The proof proceeds in two parts, allocating a failure probability of $\delta/2$ to the reward regression and $\delta/2$ to the belief estimation, yielding a global guarantee with probability at least $1-\delta$ via a union bound.

\paragraph{Part 1: Reward estimation.}
This part establishes $\sup_{f \in \Fcal(\beta)} \EE_{s, a \sim d^\rho}[(f-r)^2] \le 2\beta$ on the population behavior measure, which together with Assumption~\ref{ass:fa_concentrability} will yield the target-distribution bound in the main proof body. A uniform fast-rate ERM bound follows from a Bernstein concentration combined with a union bound over $\Fcal$; realizability and the version-space tolerance $\beta$ then convert the empirical bound into a true-risk guarantee. No concentrability assumption enters this part.

Let $\Lcal(f) = \EE_{s, a \sim d^\rho}[(f(s, a) - r(s, a))^2]$ denote the true expected squared error under the population behavior measure, and $\Lcal_\Dcal(f) = \frac{1}{n} \sum_{k=1}^n (f(s^{(k)}, a^{(k)}) - r^{(k)})^2$ denote the empirical risk.
Consider the random variable $X = (f(s,a) - r(s,a))^2\text{ with } (s,a) \sim d^\rho.$ Since functions in $\Fcal$ and the true reward $r$ are bounded in $[-1, 1],$ we have $X \in [0, 4].$ The expectation is $\EE[X] = \Lcal(f).$ The variance is bounded by the uncentered second moment:
\begin{equation}
 \V(X) \le \EE[X^2] = \EE[(f(s,a) - r(s,a))^4] \le 4 \EE[(f(s,a) - r(s,a))^2] = 4\Lcal(f).
\end{equation}
Applying Bernstein's inequality for a fixed $f \in \Fcal:$
\begin{equation}
 \PP\left( \Lcal(f) - \Lcal_\Dcal(f) \ge t \right) \le \exp\left( \frac{-n t^2 / 2}{\V(X) + 4t/3} \right) \le \exp\left( \frac{-n t^2 / 2}{4\Lcal(f) + 4t/3} \right).
\end{equation}

Equating the right-hand side to $\frac{\delta}{2|\Fcal|}$ and letting $\iota_\Fcal \defeq \log(2|\Fcal|/\delta),$ we solve the quadratic $\frac{n t^2}{2} - \frac{4}{3} t \iota_\Fcal - 4 \Lcal(f) \iota_\Fcal = 0$ for its positive root and apply $\sqrt{u + v} \le \sqrt{u} + \sqrt{v}$:
\begin{equation}
 t = \frac{1}{n}\left[ \frac{4}{3}\iota_\Fcal + \sqrt{ \frac{16}{9}\iota_\Fcal^2 + 8 n \Lcal(f) \iota_\Fcal } \right] \le \frac{8\iota_\Fcal}{3n} + \sqrt{\frac{8\Lcal(f)\iota_\Fcal}{n}}.
\end{equation}
Using the AM-GM inequality variant $\sqrt{xy} \le \frac{x}{4} + y$ with $x = \Lcal(f)$ and $y = \frac{8\iota_\Fcal}{n},$ we bound the square root term:
\begin{equation}
 \sqrt{ \Lcal(f) \cdot \frac{8\iota_\Fcal}{n} } \le \frac{1}{4}\Lcal(f) + \frac{8\iota_\Fcal}{n}.
\end{equation}
Substituting this back yields the deviation bound:
\begin{equation}
 t \le \frac{1}{4}\Lcal(f) + \left(\frac{8}{3} + 8\right)\frac{\iota_\Fcal}{n} = \frac{1}{4}\Lcal(f) + \frac{32\iota_\Fcal}{3n}.
\end{equation}
By applying a union bound over all $f \in \Fcal,$ it holds uniformly with probability at least $1-\delta/2$ that:
\begin{align}
 \Lcal(f) - \Lcal_\Dcal(f) &\le \frac{1}{4}\Lcal(f) + \frac{32\iota_\Fcal}{3n} \\
 \frac{3}{4}\Lcal(f) &\le \Lcal_\Dcal(f) + \frac{32\iota_\Fcal}{3n} \\
 \Lcal(f) &\le \frac{4}{3}\Lcal_\Dcal(f) + \frac{128\iota_\Fcal}{9n}. \label{eq:fast_rate_erm}
\end{align}

Under Assumption \ref{ass:realizability}, the true reward satisfies $r \in \Fcal.$ Because the environment rewards are deterministic, $r(s^{(k)}, a^{(k)}) = r^{(k)}$ identically, meaning the empirical risk of the true reward is exactly zero ($\Lcal_\Dcal(r) = 0$).
Since $\wh{r} = \argmin_{f \in \Fcal} \Lcal_\Dcal(f),$ it must be that $\Lcal_\Dcal(\wh{r}) \le \Lcal_\Dcal(r) = 0,$ implying $\Lcal_\Dcal(\wh{r}) = 0.$

The version space is defined as $\Fcal(\beta) = \{f \in \Fcal: \Lcal_\Dcal(f) - \Lcal_\Dcal(\wh{r}) \le \beta\}.$ Since $\Lcal_\Dcal(r) - \Lcal_\Dcal(\wh{r}) = 0 \le \beta,$ it trivially holds that $r \in \Fcal(\beta)$ for any $\beta \ge 0.$
For any arbitrary $f \in \Fcal(\beta),$ we have $\Lcal_\Dcal(f) \le \beta.$ Substituting this into \eqref{eq:fast_rate_erm}:
\begin{equation}
 \Lcal(f) \le \frac{4}{3}\beta + \frac{128\iota_\Fcal}{9n}.
\end{equation}

Given the lemma's specification of $\beta = \frac{136 \iota_\Fcal}{n},$ we observe that $\frac{128\iota_\Fcal}{9n}$ $\le \frac{2}{3} \left( \frac{136\iota_\Fcal}{n} \right) = \frac{2}{3}\beta .$ Therefore:
\begin{equation}
 \Lcal(f) \le \frac{4}{3}\beta + \frac{2}{3}\beta = 2\beta,
\end{equation}
which satisfies the required risk guarantee for the reward estimator.

\paragraph{Part 2: Belief estimation.}
The empirical belief is obtained via MLE \eqref{eq:belief_mle}: $\wh{b} = \argmax_{b \in \Bcal} \frac{1}{n} \sum_{k=1}^n \log b(x^{1, (k)}, x^{2, (k)} \mid c^{(k)}).$

 We bound the MLE error through the squared Hellinger distance. Introduce the joint distributions on $\Ccal \times \Xcal_1 \times \Xcal_2$
\begin{equation*}
P_b(c, x) \defeq \rho_\Ccal(c)\, b(x \mid c), \qquad P_q(c, x) \defeq \rho_\Ccal(c)\, q(x \mid c) \quad \text{for } q \in \Bcal,
\end{equation*}
and write $H^2(P, Q) \defeq 1 - \sum_z \sqrt{P(z) Q(z)}$ for the squared Hellinger distance. The $n$ observations $(c^{(k)}, x^{1,(k)}, x^{2,(k)})$ are i.i.d.\ draws from $P_b$, and the MLE \eqref{eq:belief_mle} maximises the joint likelihood over $\Bcal$ because the factor $\rho_\Ccal(c)$ is common to all candidates.

\textbf{Step 1: pointwise likelihood-ratio bound.} Fix $q \in \Bcal$ and let $L_n(q) = \prod_{k=1}^n q(x^{(k)} \mid c^{(k)})$. By Markov's inequality applied to the square-root likelihood ratio, and then independence,
\begin{equation}
\PP_b\left( L_n(q) \ge L_n(b) \right) = \PP_b\left( \sqrt{L_n(q)/L_n(b)} \ge 1 \right) \le \EE_b\left[ \sqrt{L_n(q)/L_n(b)} \right] = \left( \sum_z \sqrt{P_b(z) P_q(z)} \right)^{n},
\end{equation}
and since $\sum_z \sqrt{P_b P_q} = 1 - H^2(P_b, P_q) \le \exp(-H^2(P_b, P_q))$, the right-hand side is at most $\exp\left( -n H^2(P_b, P_q) \right)$.

\textbf{Step 2: union bound.} Under Assumption~\ref{ass:realizability} we have $b \in \Bcal$, so the maximiser $\wh{b}$ satisfies $L_n(\wh{b}) \ge L_n(b)$ by definition. Applying Step~1 to every $q \in \Bcal$ with $H^2(P_b, P_q) > \log(2|\Bcal|/\delta)/n$ and taking a union bound, with probability at least $1 - \delta/2$ no such candidate can beat $b$, whence
\begin{equation}
H^2(P_b, P_{\wh{b}}) \le \frac{\log(2|\Bcal|/\delta)}{n}.
\end{equation}

\textbf{Step 3: conversion to $L_1$.} Because $P_b$ and $P_{\wh{b}}$ share the public-context marginal $\rho_\Ccal$,
\begin{equation}
\EE_{c \sim \rho_\Ccal} \left[ \| b(\cdot \mid c) - \wh{b}(\cdot \mid c) \|_1 \right] = \sum_{c, x} \left| P_b(c, x) - P_{\wh{b}}(c, x) \right| = \| P_b - P_{\wh{b}} \|_1.
\end{equation}
The standard comparison $\|P - Q\|_1 = 2 \, d_{\mathrm{TV}}(P, Q) \le 2\sqrt{2} \, H(P, Q)$ then gives
\begin{equation}
\EE_{c \sim \rho_\Ccal} \left[ \| b(\cdot \mid c) - \wh{b}(\cdot \mid c) \|_1 \right] \le 2\sqrt{2 H^2(P_b, P_{\wh{b}})} \le 2 \sqrt{\frac{2 \log(2|\Bcal|/\delta)}{n}} \defeq \Ecal_\Bcal.
\end{equation}

Combining Part 1 and Part 2 via a union bound, both guarantees hold simultaneously with probability at least $1-\delta,$ completing the proof.
\end{proof}

Now we prove Theorem~\ref{thm:paac_gap}.
We begin by establishing the approximate pessimism of the FA objective $L_c$ relative to the true reward $r_c$. By Lemma \ref{lem:offline_estimation}, $r \in \Fcal(\beta).$ Therefore, the constructed critic $\underline{r}(s, a) = \inf_{f \in \Fcal(\beta)} f(s, a)$ satisfies $\underline{r}(s, a) \le r(s, a).$ The point-wise underestimation error is bounded by the version space diameter:
\begin{equation}
 r(s, a) - \underline{r}(s, a) \le \sup_{f \in \Fcal(\beta)} f(s, a) - \inf_{f \in \Fcal(\beta)} f(s, a) \le 2 \sup_{f \in \Fcal(\beta)} |f(s, a) - r(s, a)|. \label{eq:fa_pessimism_bound}
\end{equation}
Let $\nu^\star_c$ denote the true best-response policy of Player 2 against the learner's iterative strategy $\bar{\pi}_1,$ defined as:
\begin{equation}
 \nu^\star_c \in \argmin_{\pi_2 \in \Pi_2} r_c(\bar{\pi}_1, \pi_2).
\end{equation}
Under this definition, the contextual exploitability gap at context $c$ is $V^\star(c) - r_c(\bar{\pi}_1, \nu^\star_c).$

To bound the exploitability gap of the iterative policy $\bar{\pi}_1,$ we first decompose it by adding and subtracting the empirical minimax value $\wh{V}(c)$ and the empirical performance of the iterative policy:
\begin{align}
 V^\star(c) - r_c(\bar{\pi}_1, \nu^\star_c) &= \underbrace{V^\star(c) - \wh{V}(c)}_{\text{Value Error}} + \underbrace{\wh{V}(c) - \min_{\pi_2 \in \Pi_2} L_c(\bar{\pi}_1, \pi_2)}_{\text{Optimization Error}} + \underbrace{\min_{\pi_2 \in \Pi_2} L_c(\bar{\pi}_1, \pi_2) - r_c(\bar{\pi}_1, \nu^\star_c)}_{\text{Pessimism Error}}.
 \label{eq: dual gap decomp three}
\end{align}
We bound each term individually. First, for the \textbf{Value Error}, recall that $\wh{V}(c) = \max_{\pi_1} \min_{\pi_2} L_c(\pi_1, \pi_2)$ \eqref{def: hat V}. By the property of the maximum, $\wh{V}(c) \ge \min_{\pi_2} L_c(\pi_{1, \star}, \pi_2).$ Thus:
\begin{align}
 V^\star(c) - \wh{V}(c) \le \min_{\pi_2 \in \Pi_2} r_c(\pi_{1, \star}, \pi_2) - \min_{\pi_2 \in \Pi_2} L_c(\pi_{1, \star}, \pi_2). \label{eq:telescope_1}
\end{align}
Second, the \textbf{Optimization Error} is bounded by the no-regret property of the iterative solver derived in \eqref{eq:pessimistic_opt_error}:
\begin{align}
 \wh{V}(c) - \min_{\pi_2 \in \Pi_2} L_c(\bar{\pi}_1, \pi_2) \le \epsilon_1 + \epsilon_2. \label{eq:telescope_2}
\end{align}

Third, for \textbf{Pessimism Error}, we evaluate the consistency error of the iterative policy $\bar{\pi}_1.$ By the definition of the best-response $\nu^\star_c = \argmin_{\pi_2} r_c(\bar{\pi}_1, \pi_2),$ it follows that:
\begin{align}
 \min_{\pi_2 \in \Pi_2} L_c(\bar{\pi}_1, \pi_2) - r_c(\bar{\pi}_1, \nu^\star_c) &\le \max_{\pi_2 \in \Pi_2} \left( L_c(\bar{\pi}_1, \pi_2) - r_c(\bar{\pi}_1, \pi_2) \right).
\end{align}
For any fixed $\pi_2,$ we decompose the difference into belief and reward estimation errors by adding and subtracting $\sum_{x, a} b(x \mid c) \underline{r}(s, a) \bar{\pi}_1 \pi_2:$
\begin{align}
 L_c(\bar{\pi}_1, \pi_2) - r_c(\bar{\pi}_1, \pi_2) &= \sum_{x, a} (\wh{b} - b) \underline{r} \bar{\pi}_1 \pi_2 + \sum_{x, a} b (\underline{r} - r) \bar{\pi}_1 \pi_2 \\
 &\le \sum_{x, a} |b - \wh{b}| \cdot |\underline{r}| \bar{\pi}_1 \pi_2 + 0 \explain{Since $\underline{r} \le r$ implies $(\underline{r} - r) \le 0$} \\
 &\le \sum_x |b(x \mid c) - \wh{b}(x \mid c)| \underbrace{\sum_a \bar{\pi}_1 \pi_2}_{=1} = \|b - \wh{b}\|_1.
\end{align}
Thus, the consistency error is bounded purely by the belief estimation accuracy:
\begin{align}
 \min_{\pi_2 \in \Pi_2} L_c(\bar{\pi}_1, \pi_2) - r_c(\bar{\pi}_1, \nu^\star_c) \le \|b(\cdot \mid c) - \wh{b}(\cdot \mid c)\|_1. \label{eq:telescope_3}
\end{align}
 Using the decomposition from \eqref{eq: approximate pessimism decomposition}, for any policy pair $(\pi_1, \pi_2)$ and context $c:$
\begin{align}
 & r_c(\pi_1, \pi_2) - L_c(\pi_1, \pi_2)\\
 =& \sum_{x, a} b (r - \underline{r}) \pi_1 \pi_2 + \sum_{x, a} (b - \wh{b}) \underline{r} \pi_1 \pi_2 \\
 \le& \EE_{s, a \sim d^{\pi_1, \pi_2} \mid c} \left[ 2 \sup_{f \in \Fcal(\beta)} |f(s, a) - r(s, a)| \right] + \| b(\cdot \mid c) - \wh{b}(\cdot \mid c) \|_1.\label{eq:fa_pessimism_bridge}
\end{align}

 The second term is at most $\|b(\cdot \mid c) - \wh{b}(\cdot \mid c)\|_1$ because $\|\underline{r}\|_\infty \le 1$, exactly as in \eqref{eq:telescope_3}. In the first term, $r(s, a) - \underline{r}(s, a) = \sup_{f \in \Fcal(\beta)} (r(s, a) - f(s, a)) \le \sup_{f \in \Fcal(\beta)} |f(s, a) - r(s, a)|$, which in particular implies the bound with $2 \sup_{f \in \Fcal(\beta)} |f - r|$ in \eqref{eq:fa_pessimism_bridge}.

Substituting the individual bounds for the three terms of \eqref{eq: dual gap decomp three} back into the contextual decomposition yields:
\begin{align}
 V^\star(c) - r_c(\bar{\pi}_1, \nu^\star_c) \le \EE_{s, a \sim d^{\pi_{1, \star}, \wh{\pi}_2} \mid c} \left[ 2 \sup_{f \in \Fcal(\beta)} |f(s, a) - r(s, a)| \right] + 2\| b(\cdot \mid c) - \wh{b}(\cdot \mid c) \|_1 + \epsilon_{\text{opt}},
\end{align}
where $\wh{\pi}_2 = \argmin_{\pi_2} L_c(\pi_{1, \star}, \pi_2).$
 Here $\epsilon_{\text{opt}} \defeq \epsilon_1 + \epsilon_2$, and Assumption~\ref{ass:actor_regret} gives the saddle-point optimization error $\epsilon_{\text{opt}} = (R_1(T) + R_2(T))/T$ in \eqref{eq:mwu_saddle}. For multiplicative weights, Lemma~\ref{lem:mwu_regret} gives $R_i(T) \le 2\sqrt{T \log(\max\{|\Acal_1|, |\Acal_2|\})}$, so $\epsilon_{\text{opt}} \le 4\sqrt{\log(\max\{|\Acal_1|, |\Acal_2|\})/T}$ as in \eqref{eq: epsilon sum}.

To reach the global exploitability guarantee, we integrate this contextual gap over the public context distribution $c \sim \rho_\Ccal:$
\begin{align}
 \text{Gap}(\bar{\pi}_1) &= \EE_{c \sim \rho_\Ccal} \left[ V^\star(c) - r_c(\bar{\pi}_1, \nu^\star_c) \right] \\
 &\le \EE_{c, s, a \sim d^{\pi_{1, \star}, \wh{\pi}_2}} \left[ 2 \sup_{f \in \Fcal(\beta)} |f(s, a) - r(s, a)| \right] + 2 \EE_{c \sim \rho_\Ccal} \| b - \wh{b} \|_1 + \epsilon_{\text{opt}}.
\end{align}
We now apply our concentration results.
For the belief error, Part~2 of Lemma~\ref{lem:offline_estimation} directly gives
\begin{equation}
 2 \EE_{c \sim \rho_\Ccal} \| b - \wh{b} \|_1 \le 2 \Ecal_\Bcal = 4 \sqrt{\frac{2 \log(2|\Bcal|/\delta)}{n}},
\end{equation}
since Part~2 bounds the $L_1$ error directly via the Hellinger distance.

 For the reward error, we apply Cauchy--Schwarz directly to the nonnegative random variable $\sup_{f \in \Fcal(\beta)}(f-r)^2$, keeping the supremum \emph{inside} the expectation; the uniform-ratio assumption (Assumption~\ref{ass:fa_concentrability}) then transfers the resulting expected squared error from $d^{\pi_{1,\star},\wh{\pi}_2}$ to the behavior distribution $d^\rho$ in a single step, after which Lemma~\ref{lem:offline_estimation} (Part~1) closes the chain:
\begin{align}
 \EE_{c, s, a \sim d^{\pi_{1, \star}, \wh{\pi}_2}} \left[ 2 \sup_{f \in \Fcal(\beta)} |f(s, a) - r(s, a)| \right]
 &\le 2 \sqrt{ \EE_{d^{\pi_{1,\star},\wh{\pi}_2}}\!\left[ \sup_{f \in \Fcal(\beta)} (f(s,a) - r(s,a))^2 \right] }
 \explain{Cauchy--Schwarz / Jensen} \\
 &\le 2 \sqrt{ \Cscr_{\Fcal} \cdot \sup_{f \in \Fcal(\beta)} \EE_{s, a \sim d^\rho}\!\left[ (f(s,a) - r(s,a))^2 \right] }
 \explain{Assumption~\ref{ass:fa_concentrability}} \\
 &\le 2 \sqrt{ 2\, \Cscr_{\Fcal}\, \beta }
 \explain{Lemma~\ref{lem:offline_estimation}, Part~1}.
\end{align}
Summing these integrated components and substituting $\beta = 136 \log(2|\Fcal|/\delta)/n$ from Lemma~\ref{lem:offline_estimation} (Part~1) and $\Ecal_\Bcal = 2\sqrt{2\log(2|\Bcal|/\delta)/n}$ completes the proof of Theorem \ref{thm:paac_gap}:

\begin{equation}
 \text{Gap}(\bar{\pi}_1) \le 2\sqrt{\frac{272\, \Cscr_{\Fcal} \log(2|\Fcal|/\delta)}{n}} + 4\sqrt{\frac{2\log(2|\Bcal|/\delta)}{n}} + \frac{R_1(T) + R_2(T)}{T}.
\end{equation}

With multiplicative weights the optimization term is at most $4\sqrt{\log(\max\{|\Acal_1|, |\Acal_2|\})/T}$.

\end{proof}

\newpage
\end{document}